\DeclareSymbolFont{AMSb}{U}{msb}{m}{n}
    \DeclareSymbolFontAlphabet{\mathbb}{AMSb}}
\newtheorem{theorem}{Theorem}[section]
\newtheorem{lemma}[theorem]{Lemma}
\newtheorem{defn}[theorem]{Definition}
\newtheorem{proposition}[theorem]{Proposition}
\newtheorem{corollary}[theorem]{Corollary}
\newcommand{\set}[1]{\ensuremath \{#1\}}
\renewcommand{\epsilon}{\varepsilon}
\newcommand{\1}{\mbox{\textbf{1}}}
\newcommand{\E}{{\mathbf E}}
\renewcommand{\Pr}{{\mathbf{Pr}}}
\renewcommand{\le}{\leqslant}
\renewcommand{\ge}{\geqslant}
\renewcommand{\leq}{\leqslant}
\renewcommand{\geq}{\geqslant}
\newcommand{\High}{\mathrm{High}}
\begin{document}
\title{Time-Space Tradeoffs for Learning from Small Test Spaces: Learning Low
Degree Polynomial Functions} 

\author{Paul Beame\thanks{Research supported in part by NSF grant CCF-1524246} \\ University of Washington \\ beame@cs.washington.edu \and Shayan Oveis Gharan\thanks{Research supported in part by NSF grant CCF-1552097 and ONR-YI grant N00014-17-1-2429}\\ University of Washington \\ shayan@cs.washington.edu \and Xin Yang$^*$ \\University of Washington\\yx1992@cs.washington.edu}

\date{\today}
\maketitle
\begin{abstract}  
We develop an extension of recently developed methods for obtaining time-space
tradeoff lower bounds for problems of learning from random test samples to
handle the situation where the space of tests is signficantly smaller
than the space of inputs, a class of learning problems that is not handled by
prior work.
This extension is based on a measure of how matrices amplify the 
2-norms of probability distributions that is more refined than the 
2-norms of these matrices.

As applications that follow from our new technique,
we show that 
any algorithm that learns $m$-variate homogeneous polynomial
functions of degree at most $d$ over $\mathbb{F}_2$ from evaluations on
randomly chosen inputs either requires space
$\Omega(mn)$ or $2^{\Omega(m)}$ time where $n=m^{\Theta(d)}$ is the dimension
of the space of such functions.   These bounds are asymptotically optimal since
they match the tradeoffs achieved by natural learning algorithms for the
problems.
\end{abstract}
\newpage
\section{Introduction}

The question of how efficiently one can learn from random samples is a problem
of longstanding interest.   Much of this research has been focussed on the
number of samples required to obtain good approximations.   However, another
important parameter is how much of these samples need to be
kept in memory in order to learn successfully.
There has been a line of work improving the memory efficiency of
learning algorithms, and the question of the limits of such improvement has
begun to be tackled relatively recently.
Shamir~\cite{DBLP:conf/nips/Shamir14} and 
Steinhardt, Valiant, and Wager~\cite{DBLP:conf/colt/SteinhardtVW16} both
obtained constraints on the space required for certain learning problems and
in the latter paper, the authors  asked whether one could obtain strong
tradeoffs for learning from random samples that yields a superlinear threshold
for the space required for efficient learning.
In a breakthrough result, Ran Raz~\cite{DBLP:conf/focs/Raz16} showed that
even given exact information, if the space of a learning algorithm is
bounded by a sufficiently small quadratic function of the input size, then the
parity learning problem given exact answers on random samples requires an
exponential number of samples even to learn an unknown parity function approximately.

More precisely, in the problem of parity learning, an unknown $x\in \{0,1\}^n$  is chosen uniformly at random,
and a learner tries to learn $x$ from a stream of samples $(a^{(1)},b^{(1)},(a^{(2)},b^{(2)}),\cdots$ where $a^{(t)}$ is chosen uniformly at random from $\{0,1\}^n$ and $b^{(t)}=a^{(t)}\cdot x\pmod 2$.
With high probability $n+1$ uniformly random samples suffice to span
$\{0,1\}^n$ and one can solve parity learning using
Gaussian elimination with $(n+1)^2$ space.
Alternatively, an algorithm with only $O(n)$ space can wait for a specific
basis of vectors $a$ to appear (for example the standard basis) and store the 
resulting values; however, this takes $O(2^n)$ time.
Ran Raz \cite{DBLP:conf/focs/Raz16} showed that either $\Omega(n^2)$ space 
or $2^{\Omega(n)}$ time is essential: even if the space is bounded by $n^2/25$,
$2^{\Omega(n)}$ queries are required to learn $x$ correctly with any
probability that is $2^{-o(n)}$.
In follow-on work, \cite{DBLP:journals/eccc/KolRT16} showed that the same
lower bound applies even if the input $x$ is sparse.

We can view $x$ as a (homogeneous) linear function over $\mathbb{F}_2$,
and, from this perspective,
parity learning learns a linear Boolean function from evaluations over
uniformly random inputs.
A natural generalization asks if a similar lower bound exists when we learn
higher order polynomials with bounded space. 

For example, consider homogenous quadratic functions over $\mathbb{F}_2$.
Let $n=\binom{m+1}{2}$ and $X=\{0,1\}^n$, which we identify with the space of
quadratic polynomials in $\mathbb{F}_2[z_1,\ldots,z_m]$
or, equivalently, the space of upper triangular Boolean matrices.
Given an input $x\in \{0,1\}^n$, the learning algorithm receives a stream
of sample pairs $(a^{(1)},b^{(1)}),(a^{(2)},b^{(2)}),\ldots$ where $b^{(t)}=x(a^{(t)})$ 
(or equivalently $b^{(t)}=(a^{(t)})^T x a^{(t)}$ when $x$ is viewed as a
matrix).
A learner tries to learn $x\in X$ with a stream of samples $(a^{(1)},b^{(1)}),(a^2,b^2),\cdots$ where $a^t$ is chosen uniformly at random from $\{0,1\}^m$ and
$b^{(t)}=x(a^{(t)}):=\sum_{i\leq j} x_{ij}a^{(t)}_ia^{(t)}_j\bmod 2$.

Given $a\in \{0,1\}^m$ and $x\in \{0,1\}^n$, we can also view evaluating $x(a)$
as computing $aa^T\cdot x\bmod 2$ where we can interpret $aa^T$ as an element
of $\{0,1\}^n$.
For $O(n)$ randomly chosen $a\in \{0,1\}^m$, the vectors $aa^T$ almost
surely span $\{0,1\}^n$ and hence we only need to store $O(n)$ samples of
the form $(a,b)$ and apply Gaussian elimination to determine $x$.
This time, we only need $m+1$ bits to store each sample for a total space
bound of $O(mn)$.  
An alternative algorithm using $O(n)$ space and time $2^{O(m)}$ would be to look
for a specific basis.   One natural example is the basis consisting of the
upper triangular parts of $$\{e_i e_i^T\mid 1\le i\le m\}\cup
\{(e_i+e_j)(e_i+e_j)^T\mid 1\le i<j \le m\}.$$
We show that this tradeoff between $\Omega(mn)$ space or $2^{\Omega(m)}$ time
is inherently required to learn $x$ with probability $2^{-o(m)}$.

Another view of the problem of learning homogenous quadratic functions (or
indeed any low degree polynomial learning problem) is to
consider it as parity learning with a smaller sample space of tests. 
That is,
we still want to learn $x\in \{0,1\}^n$ with samples $\{(a^{(t)},b^{(t)})\}_{t}$
such that $b^{(t)}= a^{(t)}\cdot x\bmod 2$,
but now $a^{(t)}$ is not chosen uniformly at random from $\{0,1\}^n$; 
instead, we choose $c^{(t)}\in \{0,1\}^m$ uniformly at random and set $a^{(t)}$
to be the upper triangular part of $c^{(t)} (c^{(t)})^T$.
Then the size of the space $A$ of tests is $2^{m}$ which is $2^{O(\sqrt{n})}$
and hence is much smaller than the size $2^n$ space $X$.

Note that this is the dual problem to that considered by
\cite{DBLP:journals/eccc/KolRT16} whose lower bound applied when the unknown
$x$ is sparse, and the tests $a^{(t)}$ are sampled from the whole space.
That is, the space $X$ of possible inputs is much smaller than the space
$A$ of possible tests.

The techniques in \cite{DBLP:conf/focs/Raz16,DBLP:journals/eccc/KolRT16}
were based on fairly ad-hoc simulations of the original space-bounded
learning algorithm by a restricted form of linear branching program
for which one can measure progress at learning $x$ using the dimension of the
consistent subspace. 
More recent papers of Moshkovitz and Moshkovitz~\cite{DBLP:journals/eccc/MoshkovitzM17,MoshkovitzM17new}
and Raz~\cite{DBLP:journals/eccc/Raz17} consider more general 
tests and use a measure of progress based on 2-norms.
While the method of \cite{DBLP:journals/eccc/MoshkovitzM17} is not strong
enough to reproduce the bound in \cite{DBLP:conf/focs/Raz16} for the
case of parity learning, the methods of \cite{DBLP:journals/eccc/Raz17} and
later~\cite{MoshkovitzM17new} 
reproduce the parity learning bound and more. 

In particular, \cite{DBLP:journals/eccc/Raz17} considers an arbitrary space
of inputs $X$ and an arbitrary sample space of tests $A$ and defines a $\pm 1$
matrix $M$ that is indexed by $A\times X$ and has distinct columns;
$M$ indicates the outcome of applying the test $a\in A$ to the input $x\in X$.
The bound is governed by the (expectation) matrix norm of $M$, which is 
is a function of the largest singular value of $M$, and the progress is analyzed
by bounding the impact of applying the matrix to probability distributions with
small expectation $2$-norm.
This method works fine if $|A|\ge |X|$ - i.e., the space of tests is at least as
large as the space of inputs - but it fails completely if $|A|\ll |X|$ which
is precisely the situation for learning quadratic functions.
Indeed, none of the prior approaches work in this case.

In our work we define a property of matrices $M$ that allows us to refine the
notion of the largest singular value and extend the method
of~\cite{DBLP:journals/eccc/Raz17} to the
cases that $|A|\ll |X|$ and, in particular, to prove time-space tradeoff
lower bounds for learning homogeneous quadratic functions over $\mathbb{F}_2$.
This property, which we call the
{\em norm amplification curve} of the matrix on the positive orthant,
analyzes more precisely
how $\|M\cdot p\|_2$ grows as a function of $\|p\|_2$ for probability
vectors $p$ on $X$. The key reason that this is not simply governed by the
singular values is that such $p$ not only have fixed $\ell_1$ norm, they are
also on the positive orthant, which can contain at most one singular vector.
We give a simple condition on the 2-norm amplification curve of $M$ that
is sufficient to ensure that there is a time-space tradeoff showing
that any learning algorithm for $M$ with success probability at least $2^{-\varepsilon m}$ for some $\varepsilon>0$ either requires space $\Omega(nm)$ or
time $2^{\Omega(m)}$.

For any fixed learning problem given by a matrix $M$, the natural way to
express the amplification curve at any particular value of
$\|p\|_2$ yields an optimization problem given by a quadratic program with
constraints on $\|p\|^2_2$, $\|p\|_1$ and $p\ge 0$, and with objective function
$\|Mp\|^2_2=\langle M^TM,pp^T\rangle$ that seems difficult to solve.
Instead, we relax the quadratic program to a semi-definite program where we
replace $pp^T$ by a positive semidefinite matrix $U$ with the analogous
constraints.   We can then obtain an upper
bound on the amplification curve by moving to the SDP dual and evaluating the
dual objective at a particular Laplacian determined by the properties of
$M^T M$.

For matrices $M$ associated with low degree polynomials over $\mathbb{F}_2$,
the property of the matrix $M^T M$ required to bound the amplication
curves for $M$ correspond precisely to properties of the weight distribution of
Reed-Muller codes over $\mathbb{F}_2$.
In the case of quadratic polynomials, we can analyze this weight distribution
exactly.  In the case of higher degree polynomials, bounds on the weight
distribution of such codes proven by
Kaufman, Lovett, and Porat~\cite{DBLP:journals/tit/KaufmanLP12} are
sufficient to obtain the properties we need to give strong enough bounds on the 
norm amplification curves to yield the time-space tradeoffs for learning for
all degrees $d$ that are $o(\sqrt{m})$.

Our new method extends the potential reach of time-space tradeoff lower
bounds for learning problems to a wide array of natural scenarios where the 
sample space of tests is smaller than the sample space of inputs.  
Low degree polynomials with evaluation tests are just some of the natural
examples.
Our bound shows that if the 2-norm amplification curve for $M$ has the
required property, then to achieve learning success probability for
$M$ of at least $|A|^{-\varepsilon}$ for
some $\varepsilon>0$, either space
$\Omega(\log |A|\cdot \log |X|)$ or time $|A|^{\Omega(1)}$ is required.
This kind of bound is consistent even with what we know for very small sample
spaces of tests: for example, if $X$ is the space of linear functions over
$\mathbb{F}_2$ and $A$ is the standard basis $\set{e_1,\ldots,e_n}$ then,
even for exact identification, space $O(n)$ and time $O(n\log n)$ are 
necessary and sufficient by a simple coupon-collector analysis.

Thus far, we have assumed that the outcome of each random test in 
is one of two values.   We also sketch how to extend the approach to
multivalued outcomes. (We note that, though the mixing condition
of~\cite{DBLP:journals/eccc/MoshkovitzM17,MoshkovitzM17new} 
does not hold in the case of small sample spaces of tests,
\cite{DBLP:journals/eccc/MoshkovitzM17,MoshkovitzM17new} do apply in the case
of multivalued outcomes.)

Independent of the specific applications to learning from random examples
that we obtain, the measure of matrices that we introduce, the 2-norm
amplification curve on the positive orthant, seems likely to have signficant
applications in other contexts outside of learning.

\paragraph{Related work:}

Independently of our work, Garg, Raz, and Tal~\cite{grt:extractor-learn} have proven closely related results to ours.    The fundamental techniques are similarly grounded in the approach of~\cite{DBLP:journals/eccc/Raz17} though their method is based on viewing the matrices associated with learning problems as 2-source
extractors rather than on bounding the SDP relaxations of their 2-norm
amplification curves.     They use this for a variety of applications including
the polynomial learning problems we focus on here.

\subsection{Branching programs for learning}

Following Raz~\cite{DBLP:journals/eccc/Raz17},
we define the learning problem as follows.  
Given two non-empty sets, a set
$X$ of possible inputs, with a uniformly random prior distribution, and a set
$A$ of tests and a matrix $M: A\times X\rightarrow \set{-1,1}$, a learner tries
to learn an input $x\in X$ given a stream
of samples $(a^1,b^1),(a^2,b^2),\ldots$ where for every $t$, $a^t$ is chosen
uniformly at random from $A$ and $b^t=M(a^t,x)$.  
Throughout this paper we use the notation that $n=\log_2 |X|$ and
$m=\log_2 |A|$.

For example, parity learning is the special case of this learning problem where
$M(a,x)=(-1)^{a\cdot x}$.

Again following Raz~\cite{DBLP:conf/focs/Raz16}, the time and space of a
learner are modelled
simultaneously by expressing the learner's computation as a layered
branching program: a finite rooted directed acyclic multigraph with every
non-sink
node having outdegree $2|A|$, with one outedge for each $(a,b)$ with $a\in A$
and $b\in \set{-1,1}$ that leads to a node in the next layer.   
Each sink node $v$ is labelled by some $x'\in X$ which is the learner's guess
of the value of the input $x$.

The space $S$ used by the learning branching program is the $\log_2$ of the
maximum number of nodes in any layer and the time $T$ is the length of the
longest path from the root to a sink.

The samples given to the learner $(a^1,b^1),(a^2,b^2),\ldots$ based
on uniformly randomly chosen $a^1,a^2,\ldots \in A$ and an input $x\in X$
determines a (randomly chosen) {\em computation} path in the branching program.
When we consider computation paths we include the input $x$ in their
description.

The (expected) success probability of the learner is the probability for a
uniformly random $x\in X$ that on input $x$ a random computation path on input $x$
reaches a sink node $v$ with label $x'=x$.

\subsection{Progress towards identification}

Following \cite{DBLP:journals/eccc/MoshkovitzM17,DBLP:journals/eccc/Raz17}
we measure progress towards identifying $x\in X$ using
the ``expectation 2-norm'' over the uniform distribution:
For any set $S$, and $f:S\rightarrow \mathbb{R}$, define 
$$\|f\|_2=\left(\mathbb{E}_{s\in_R S} f^2(s)\right)^{1/2}=\left(\frac{1}{|S|}\sum_{s\in S} f^2(s)\right)^{1/2}.$$

Define $\Delta_X$ to be the space of probability distributions on $X$.
Consider the two extremes for the expectation 2-norm of elements of $\Delta_X$:
If $\mathbb{P}$ is the uniform distribution on $X$,
then $\|P\|_2 = 2^{-n}$.  This distribution represents the learner's knowledge
of the input $x$ at the start of the branching program.
On the other hand if $\mathbb{P}$ is point distribution on any $x'$, then
$\|P\|_2=2^{-n/2}$.

For each node $v$ in the branching program, there is an induced probability
distribution on $X$, $\mathbb{P}'_{x|v}$ which represents the distribution on
$X$ conditioned on the fact that the computation path passes through $v$.  
It represents the learner's knowledge of $x$ at the time that the computation
path has reached $v$.
Intuitively, the learner has made significant progress towards identifying the
input $x$ if $\|\mathbb{P}'_{x|v}\|_2$ is much larger than $2^{-n}$, say
$\|\mathbb{P}'_{x|v}\|_2\ge 2^{\delta n/2}\cdot 2^{-n}=2^{-(1-\delta/2)n}$.

The general idea will be to argue that for any fixed node $v$ in the branching
program that is at a layer $t$ that is $2^{o(m)}$,
the probability over a randomly chosen computation path that $v$ is the first
node on the path for which the learner has made significant progress is
$2^{-\Omega(mn)}$.  Since by assumption of correctness the learner makes
significant progress with at least $2^{-\varepsilon m}$ probability, there
must be at least $2^{\Omega(mn)}$
such nodes and hence the space must be $\Omega(mn)$.

Given that we want to consider the first vertex on a computation path at which
significant progress has been made it is natural to truncate a computation
path at $v$ if significant progress has been already been made at $v$ (and then
one should not count any path through $v$ towards the progress at some
subsequent node $w$).  Following~\cite{DBLP:journals/eccc/Raz17}, for
technical reasons we will also
truncate the computation path in other circumstances.  

\begin{defn}
\label{defn:truncation}
We define probability distributions $\mathbb{P}_{x|v}\in \Delta_X$ and the
$(\delta,\alpha,\gamma)$-truncation of the computation paths inductively as
follows:
\begin{itemize}
\item If $v$ is the root, then $\mathbb{P}_{x|v}$ is the uniform distribution on
$X$.
\item (Significant Progress) If $\|\mathbb{P}_{x|v}\|_2\ge 2^{-(1-\delta/2) n}$ then
truncate all computation paths at $v$.  We call vertex $v$ \emph{significant}
in this case.
\item (High Probability) Truncate the computation paths at $v$ for all inputs
$x'$ for which  $\mathbb{P}_{x|v}(x')\ge 2^{-\alpha n}$.  Let $\High(v)$ be the
set of such inputs.
\item (High Bias) Truncate any computation path at $v$ if it follows an
outedge $e$ of $v$ with label $(a,b)$ for which $|(M\cdot \mathbb{P}_{x|v})(a)|\ge 2^{-\gamma m}$. That is, we truncate the paths at $v$ if the outcome $b$ of
the next sample for $a\in A$ is too predictable in that it is highly
biased towards $-1$ or $1$ given the knowledge that the path was not truncated
previously and arrived at $v$.  
\item If $v$ is not the root then define $\mathbb{P}_{x|v}$ to be the 
conditional probability distribution on $x$ over all computation paths that
have not previously been truncated and arrive at $v$.
\end{itemize}
For an edge $e=(v,w)$ of the branching program, we also define a probability
distribution $\mathbb{P}_{x|e}\in \Delta_X$, which is the conditional
probability distribution on $X$ induced by the truncated computation paths
that pass through edge $e$.
\end{defn}

With this definition, it is no longer immediate from the assumption of
correctness that the truncated path reaches a significant node with
at least $2^{-\varepsilon m}$ probability.  However, we will see that a single
assumption about the
matrix $M$ will be sufficient to prove both that this holds and that 
the probability is $2^{-\Omega(nm)}$ that the path reaches any specific node
$v$ at which significant progress has been made.

\section{Norm amplification by matrices on the positive orthant}

By definition, for $\mathbb{P}\in \Delta_X$, 
$$\|M\cdot \mathbb{P}\|_2^2= \E_{a\in_R A} [|(M\cdot \mathbb{P})(a)|^2].$$
Observe that for $\mathbb{P}=\mathbb{P}_{x|v}$, the value
$|(M\cdot \mathbb{P}_{x|v})(a)|$ is precisely the expected bias of the answer
along a uniformly random outedge of $v$ (i.e., the advantage in predicting the
outcome of the randomly chosen test).

If we have not learned the input $x$, we would not expect to be able to predict
the outcome of a typical test; moreover, since any path that 
would follow a high bias test is truncated, it is essential to argue
that $\|M\cdot \mathbb{P}_{x|v}\|_2$ remains small at any node $v$ where there
has not been significant progress.

In~\cite{DBLP:journals/eccc/Raz17}, $\|M\cdot \mathbb{P}_{x|v}\|_2$ was
bounded using the matrix norm $\|M\|_2$ given by
$$\|M\|_2=\sup_{\substack{f:X\rightarrow \mathbb{R}\\f\ne 0}}\frac{\|M\cdot f\|_2}{\|f\|_2},$$
where the numerator is an expectation $2$-norm over $A$ and the denominator
is an expectation $2$-norm over $X$.
Thus $$\|M\|_2=\sqrt{\frac{|X|}{|A|}}\cdot \sigma_{\max}(M),$$
where $\sigma_{\max}(M)$ is the largest singular value of $M$ and $\sqrt{|X|/|A|}$ is a normalization factor.

In the case of the matrix $M$ associated with parity learning, $|A|=|X|$ and
all the singular values are equal to $\sqrt{|X|}$ so
$\|M\|_2=\sqrt{|X|}=2^{n/2}$.  With this bound, if $v$ is not a node of
significant progress then $\|\mathbb{P}_{x|v}\|_2\le 2^{-(1-\delta/2) n}$
and hence $\|M\cdot \mathbb{P}_{x|v}\|_2 \le 2^{-(1-\delta) n/2}$ which is
$1/|A|^{(1-\delta)/2}$ and hence small.

However, in the case of learning quadratic functions over $\mathbb{F}_2$,
the largest singular value of the matrix $M$ is still $\sqrt{|X|}$  (the uniform
distribution on $X$ is a singular vector) and so $\|M\|_2=|X|/\sqrt{|A|}$. 
But in that case, when $\|\mathbb{P}_{x|v}\|$ is $2^{-(1-\delta/2)n}$ we
conclude that $\|M\|_2 \cdot\|\mathbb{P}_{x|v}\|_2$ is at most
$2^{\delta n/2}/\sqrt{|A|}$ which is
much larger than 1 and hence a useless bound on $\|M\cdot \mathbb{P}_{x|v}\|_2$.

Indeed, the same kind of problem occurs in using the method
of~\cite{DBLP:journals/eccc/Raz17} for any learning
problem for which $|A|$ is $|X|^{o(1)}$:
If $v$ is a child of the root of the branching program at which the more
likely outcome $b$ of a single randomly chosen test $a\in A$ is remembered, then
$\|\mathbb{P}_{x|v}\|_2 \le \sqrt{2}/|X|$. 
However, in this case $|(M\cdot \mathbb{P}_{x|v})(a)|=1$ and so
$\|(M\cdot \mathbb{P}_{x|v})\|_2\ge |A|^{-1/2}$. 
It follows that $\|M\|_2\ge |X|/(2|A|)^{1/2}$ and when $|A|$ is $|X|^{o(1)}$
the derived upper bound on $\|M\cdot \mathbb{P}_{x|v'}\|_2$ at nodes $v'$
where $\|\mathbb{P}_{x|v'}\|_2\ge 1/|X|^{1-\delta/2}$ will be larger
than 1 and therefore useless.

We need a more precise way to bound $\|M\cdot \mathbb{P}\|_2$ as a function
of $\|\mathbb{P}\|_2$ than the single number $\|M\|_2$.   
To do this we will need to use the fact that $\mathbb{P}\in \Delta_X$ -- it has
a fixed $\ell_1$ norm and (more importantly) it is non-negative.

\begin{defn}
Let $M:X\times A\rightarrow \set{-1,1}$ be a $\pm 1$ matrix.
The 2-\emph{norm amplification curve} of $M$ is a map
$\tau_M:[0,1]\rightarrow \mathbb{R}$
given by
$$\tau_M(\delta)=\sup_{\substack{\mathbb{P}\in \Delta_X\\ \|\mathbb{P}\|_2\le 1/|X|^{1-\delta/2}}} \log_{|A|} (\|M\cdot \mathbb{P}\|_2).$$
\end{defn}

In other words, for $|X|=2^n$ and $|A|=2^m$, whenever $\|\mathbb{P}\|_2$ is at
most $2^{-(1-\delta/2) n}$, $\|M\cdot \mathbb{P}\|_2$ is at most
$2^{\tau_M(\delta)m}$.

\section{Theorems}
\label{sec:theorems}

Our lower bound for learning quadratic functions will be in two parts.  
First, we modify the argument of~\cite{DBLP:journals/eccc/Raz17} to use the
function $\tau_M$ instead of $\|M\|_2$:

\begin{theorem}
\label{thm:mainlb}
Let $M:X\times A\rightarrow \set{-1,1}$, $n=\log_2 |X|$, $m=\log_2 |A|$ and
assume that $m\le n$.
If $M$ has $\tau_M(\delta')<0$ for some fixed
constant $0<\delta'<1$, then there are constants $\varepsilon,\beta,\eta>0$
depending only on $\delta'$ and $\tau_M(\delta')$ such that any
algorithm that solves the learning problem for $M$ with
success probability at least $2^{-\varepsilon m}$ either requires
space at least $\eta nm$ or time at least $2^{\beta m}$.
\end{theorem}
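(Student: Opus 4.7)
The plan is to adapt the potential-function and branching-program argument of Raz~\cite{DBLP:journals/eccc/Raz17}, replacing the crude bound $\|M\cdot\mathbb{P}\|_2 \le \|M\|_2\cdot\|\mathbb{P}\|_2$ with the sharper bound $\|M\cdot\mathbb{P}\|_2 \le |A|^{\tau_M(\delta')}$ that the norm amplification curve supplies whenever $\|\mathbb{P}\|_2 \le |X|^{-(1-\delta'/2)}$. The overall structure is a counting argument: we fix any significant vertex $v^*$ of the branching program and show that, over the random input $x$ and random samples, the probability that the $(\delta,\alpha,\gamma)$-truncated computation path reaches $v^*$ is at most $2^{-\Omega(nm)}$. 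Combined with the assumption that the success probability is at least $2^{-\varepsilon m}$, a union bound over the vertices of the program forces the number of significant vertices---and hence the space---to be $\Omega(nm)$, unless the length $T$ of the program already exceeds $2^{\Omega(m)}$.

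To make the per-vertex bound precise, for each edge $e=(v,w)$ labelled by $(a,b)$ that survives truncation, I would use the Bayesian update $\mathbb{P}_{x|e}(x)\propto \mathbb{P}_{x|v}(x)\cdot(1+b\cdot M(a,x))/2$, which gives an explicit formula for $\|\mathbb{P}_{x|e}\|_2$ in terms of the bias $(M\cdot \mathbb{P}_{x|v})(a)$. Choosing the parameters $\alpha$ and $\gamma$ so that the ``high probability'' and ``high bias'' truncations control both the multiplicative growth of $\|\mathbb{P}_{x|\cdot}\|_2^2$ and the maximum entry of $\mathbb{P}_{x|\cdot}$, one derives the key estimate: averaging over the uniformly random $(a,b)$, the expected multiplicative increase of an appropriate potential along an edge is governed by $\|M\cdot \mathbb{P}_{x|v}\|_2^2$. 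As long as $v$ is not significant, the hypothesis $\tau_M(\delta')<0$ bounds this quantity by $|A|^{2\tau_M(\delta')}=2^{-\Omega(m)}$. Chaining these local estimates along a path of length at most $T\le 2^{\beta m}$ and applying an exponential Markov (or martingale) inequality of the kind used in~\cite{DBLP:journals/eccc/Raz17} converts the required potential gap---which must be $\Omega(n)$ to move the $2$-norm from $1/|X|$ up to $1/|X|^{1-\delta/2}$---into the desired tail bound $2^{-\Omega(nm)}$.

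With the per-vertex bound in hand, the remaining step is bookkeeping. The total probability of being truncated by the ``high probability'' or ``high bias'' rules, summed over all edges used by the algorithm, can be made at most $\tfrac{1}{2}\cdot 2^{-\varepsilon m}$ by choosing $\alpha$ and $\gamma$ small relative to $\varepsilon$ and $\beta$. Hence at least $\Omega(2^{-\varepsilon m})$ of the success probability must be carried by truncated paths that actually arrive at some significant vertex, and the per-vertex bound then forces at least $2^{\Omega(nm)}$ significant vertices across all $T\le 2^{\beta m}$ layers, giving the claimed space lower bound.

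I expect the main obstacle to be the simultaneous calibration of the constants $\delta,\alpha,\gamma,\varepsilon,\beta$ (all depending only on $\delta'$ and $|\tau_M(\delta')|$) rather than any single deep step. One needs, at the same time, that (i) the truncation losses are dominated by the success probability, (ii) the $2^{-\gamma m}$ bias cutoff is small enough that any surviving edge raises the potential by at most $o(nm/T)$, and (iii) the $\Omega(n)$ cumulative potential gap required to reach a significant vertex is genuinely large relative to the expected per-step gain $2^{-\Omega(m)}\cdot T$. Balancing (ii) and (iii) is exactly where the amplification curve enters: the bound $\|M\cdot\mathbb{P}\|_2 \le 2^{\tau_M(\delta')m}$ lets us control the expected potential gain without ever invoking $\|M\|_2$, which is precisely what fails in the regime $|A|\ll |X|$ relevant to low-degree polynomial learning.
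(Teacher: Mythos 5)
Your plan follows the same overall route as the paper: truncate the computation paths, prove that each fixed significant vertex $s$ is reached with probability $2^{-\Omega(nm)}$ via a high-moment potential argument in the style of Raz, and then count. The constant calibration you worry about is done essentially as you anticipate ($\delta=\delta'/6$, $\alpha=1-2\delta$, $\gamma=-\tau_M(\delta')/2$, $\beta=\min(\gamma,\delta)/8$, $\varepsilon=\beta/2$), and your bookkeeping for the truncation losses matches Lemma~\ref{lem:significant}.

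There is, however, a genuine gap at the heart of the per-vertex bound. You assert that the expected multiplicative increase of the potential along an edge is ``governed by $\|M\cdot\mathbb{P}_{x|v}\|_2^2$,'' and that $\tau_M(\delta')<0$ applies because $v$ is not significant. That quantity only controls the probability of the high-bias truncation (Lemma~\ref{lem:bias}). The potential that actually admits a chaining argument is the correlation $\rho(v)=\langle\mathbb{P}_{x|v},\mathbb{P}_{x|s}\rangle/\|\mathbb{P}_{x|s}\|_2^2$ raised to the power $\gamma m$; the plain $2$-norm of $\mathbb{P}_{x|v}$ is a deterministic attribute of the vertex rather than an accumulating path functional, and a first-moment bound on it would in any case yield only $2^{-\Omega(n)}$, not $2^{-\Omega(nm)}$. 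When an edge labelled $(a,b)$ is traversed, the Bayesian update multiplies $\langle\mathbb{P}_{x|v},\mathbb{P}_{x|s}\rangle$ by roughly $1+b\cdot(M\cdot\mathbb{P}_f)(a)$, where $f(x')\propto\mathbb{P}_{x|v}(x')\,\mathbb{P}_{x|s}(x')\cdot\mathbf{1}_{x'\notin\High(v)}$ --- not by $1+b\cdot(M\cdot\mathbb{P}_{x|v})(a)$. So the amplification curve must be invoked for the auxiliary distribution $\mathbb{P}_f$, and one has to prove separately that $\|\mathbb{P}_f\|_2\le 2^{-(1-\delta'/2)n}$. This is where the high-probability truncation earns its keep (it caps $\mathbb{P}_{x|v}$ pointwise by $2^{-\alpha n}$), but it does not suffice alone: one also needs an upper bound $\|\mathbb{P}_{x|s}\|_2\le 4\cdot 2^{-(1-\delta/2)n}$ on the target's $2$-norm (Lemma~\ref{lem:s-bound}, which holds because $s$ is the \emph{first} significant vertex on any truncated path reaching it), together with a separate treatment of the degenerate case where the normalizer $F=\sum_{x'}f(x')$ is below $2^{-n}$. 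None of this appears in your outline, and without it the claimed per-edge estimate does not follow from the hypothesis $\tau_M(\delta')<0$.
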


(We could write the statement of the theorem to apply to all $m$ and $n$ by
replacing each occurrence of $m$ in the lower bounds with $\min(m,n)$.  When
$m\ge n$, we can use $\|M\|_2$ to bound $\tau_M(\delta')$ which yields
the bound given in~\cite{DBLP:journals/eccc/Raz17}.)

We then analyze the amplification properties of the matrix $M$ associated with
learning quadratic functions over $\mathbb{F}_2$.

\begin{theorem}
\label{thm:quadcurve}
Let $M$ be the matrix for learning (homogenous) quadratic functions over
$\mathbb{F}_2[z_1,\ldots, z_m]$.
Then $\tau_M(\delta)\le \frac{-(1-\delta)}8+\frac{5+\delta}{8m}$ for all 
$\delta\in [0,1]$.
\end{theorem}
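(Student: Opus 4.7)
The plan is to reformulate $\|Mp\|_2^2$ via Fourier analysis on $\mathbb{F}_2^n$ and then bound the result using the weight distribution of the Reed--Muller code $RM(2,m)$. Since $M(a,x)=(-1)^{x(a)}$, a direct expansion of the square gives
\[
\|Mp\|_2^2 \;=\; \frac{1}{|A|}\sum_{y\in X} q(y)\,\phi(y),
\]
where $q:=p*p$ is the self-convolution (so $q(y)=\sum_x p(x)\,p(x+y)$) and $\phi(y):=\sum_{a\in A}(-1)^{y(a)}$. The function $q$ is a probability distribution on $X$, and because its Fourier transform $\widehat q=|\widehat p|^2$ is nonnegative, $q$ is positive-definite: $q(y)\le q(0)=\sum_x p(x)^2=|X|\|p\|_2^2\le 2^{-(1-\delta)n}$ for every $y$.

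The values of $\phi$ are controlled by the classical theory of quadratic forms over $\mathbb{F}_2$: for $y\ne 0$, $\phi(y)\in\{0,\pm 2^{m-h}\}$ where $2h$ is the rank of the alternating bilinear form $B_y(u,v):=y(u+v)+y(u)+y(v)$, so $h\in\{1,\dots,\lfloor m/2\rfloor\}$. Defining $L_h:=\{y\in X:|\phi(y)|\ge 2^{m-h}\}$, the count of rank-at-most-$2h$ alternating matrices on $\mathbb{F}_2^m$, multiplied by the $2^m$ choices of the linear part of $y$, yields $|L_h|\le C\cdot 2^{h(2m+1)}$ for an absolute constant $C$.

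Combining these, I would estimate
\[
\|Mp\|_2^2 \;\le\; \frac{1}{|A|}\sum_y q(y)\,|\phi(y)| \;=\; \frac{1}{|A|}\sum_{h=0}^{\lfloor m/2\rfloor} 2^{m-h}\,Q_h, \qquad Q_h:=\!\!\!\sum_{y:|\phi(y)|=2^{m-h}}\!\! q(y),
\]
and bound each $Q_h$ by $\min\bigl(1,\,|L_h|\cdot 2^{-(1-\delta)n}\bigr)$, using $\sum_y q(y)=1$ on the one hand and $q\le q(0)$ on the other. I then choose $h^*$ to be the smallest integer with $|L_{h^*}|\cdot 2^{-(1-\delta)n}\ge 1$, so $h^*\le(1-\delta)n/(2m+1)+O(1)=(1-\delta)m/4+O(1)$. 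For $h<h^*$ the terms form a geometric series of ratio $\approx 2^{2m}$ dominated by its last term, contributing $O(2^{m-h^*})$; for $h\ge h^*$ the terms sum to $\le 2^{m-h^*+1}$. Thus $\|Mp\|_2^2\le O(2^{m-h^*}/|A|)=O(2^{-h^*})$, and taking $\log_{|A|}$-square-root gives $\tau_M(\delta)\le -h^*/(2m)+O(1/m)\le -(1-\delta)/8+(5+\delta)/(8m)$ after tracking the integer rounding in $h^*$ and the $O(1)$ slack in the count for $|L_h|$.

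The main technical obstacle is the count $|L_h|\le C\cdot 2^{h(2m+1)}$, which requires the classical classification of quadratic forms over $\mathbb{F}_2$ (Arf invariant and symplectic counts of nondegenerate alternating forms) together with careful exclusion of the quadratic forms whose linear part is nonzero on $\ker B_y$ (those contribute $\phi(y)=0$ and would otherwise inflate $|L_h|$). A secondary, purely bookkeeping obstacle is to track the $O(1/m)$ correction tightly enough to recover the constant $5$ rather than a larger universal constant. Equivalently, this argument exhibits a feasible point for the dual of the SDP relaxation $\max\langle M^TM,U\rangle$ subject to $U\succeq 0$, $U\ge 0$, $\langle J,U\rangle=1$, $\langle I,U\rangle\le 2^{-(1-\delta)n}$, whose PSD part is a Laplacian on $X$ determined by the level sets of $\phi$.
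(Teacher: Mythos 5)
Your proposal is correct and, in substance, coincides with the paper's proof: the combinatorial core in both cases is the weight distribution of $RM(2,m)$ (equivalently, the classification of quadratic forms over $\mathbb{F}_2$ by the rank $2h$ of the polarized alternating form, with roughly $2^{2hm}$ forms at level $h$), and both balance a threshold at $|\phi(y)|=2^{m-h^*}$ with $h^*\approx(1-\delta)m/4$, yielding $\tau_M(\delta)\le -(1-\delta)/8+O(1/m)$. Where you differ is in how the certificate is packaged: the paper routes the bound through the SDP relaxation and its dual, exhibiting a feasible dual point whose PSD part is a Laplacian supported on the large off-diagonal entries of $M^TM$, whereas you stay in the primal and observe that the autocorrelation $q=p*p$ satisfies $q(y)\le q(0)=\sum_x p(x)^2\le 2^{-(1-\delta)n}$ by Cauchy--Schwarz, which is exactly the inequality the Laplacian shift encodes for the rank-one matrix $U=pp^T$; you correctly note this equivalence yourself. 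Your version is arguably more elementary and self-contained. Two small cautions. First, as literally written, ``multiplied by the $2^m$ choices of the linear part'' would give $|L_h|\le C\cdot 2^{(2h+1)m}$ rather than $C\cdot 2^{h(2m+1)}$; the tighter exponent requires restricting to the $2^{2h}$ linear parts vanishing on the radical of $B_y$ (the others give $\phi(y)=0$), which you do flag as the needed refinement --- and even the looser count would only perturb $h^*$ by $O(1)$ and hence only the $O(1/m)$ term. Second, recovering the exact constant $\frac{5+\delta}{8m}$ rather than a generic $O(1/m)$ requires the same integer-rounding bookkeeping the paper performs with its choice $k=\lfloor(1-\delta)m/4+(3-\delta)/4\rfloor$, which you acknowledge but do not carry out; this does not affect the downstream corollary.
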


The following corollary is then immediate.

\begin{corollary}
\label{cor:quadratic}
Let $m$ be a positive integer and $n=\binom{m+1}2$. For some $\varepsilon>0$,
any algorithm for learning quadratic functions over
$\mathbb{F}_2[z_1,\ldots, z_m]$ that succeeds with probability at least
$2^{-\varepsilon m}$ requires space $\Omega(mn)$ or time $2^{\Omega(m)}$.
\end{corollary}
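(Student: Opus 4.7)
The plan is to derive Corollary~\ref{cor:quadratic} as a direct instantiation of Theorem~\ref{thm:mainlb}, using Theorem~\ref{thm:quadcurve} to supply the required hypothesis on the 2-norm amplification curve. This is essentially a bookkeeping step --- the substantive work has already been carried out in the two preceding theorems --- so I would present it as follows.

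First, I would fix a constant $\delta' \in (0,1)$, say $\delta' = 1/2$. Plugging this into Theorem~\ref{thm:quadcurve} gives
$$\tau_M(\delta') \le -\frac{1-\delta'}{8} + \frac{5+\delta'}{8m}.$$
The second term tends to $0$ as $m$ grows, so once $m$ exceeds an absolute constant (concretely, once $m > (5+\delta')/(1-\delta')$) we obtain $\tau_M(\delta') \le -c$ for a universal constant $c>0$ depending only on $\delta'$. In particular $\tau_M(\delta') < 0$, matching the hypothesis of Theorem~\ref{thm:mainlb}.

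Next, I would verify the remaining hypotheses of Theorem~\ref{thm:mainlb}: the matrix $M$ for learning homogeneous quadratic polynomials is a $\pm 1$ matrix indexed by $A \times X$ with $|A| = 2^m$ and $|X| = 2^n$, and the condition $m \le n$ is automatic since $n = \binom{m+1}{2} \ge m$ for every positive integer $m$. Theorem~\ref{thm:mainlb} then supplies absolute constants $\varepsilon, \beta, \eta > 0$ (depending only on $\delta'$ and on the universal upper bound $-c$ on $\tau_M(\delta')$) such that any algorithm for the quadratic learning problem with success probability at least $2^{-\varepsilon m}$ must use either space at least $\eta mn$ or time at least $2^{\beta m}$.

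Finally, for the finitely many small values of $m$ below the threshold above, the bound is either vacuous or can be absorbed into the implicit constants in the asymptotic notation by shrinking $\varepsilon, \eta, \beta$ if necessary. There is no real obstacle here: the conceptual content --- showing that a negative value of $\tau_M$ at a fixed $\delta'$ translates into a space-time tradeoff, and actually computing $\tau_M$ for the quadratic evaluation matrix via the SDP dual and Reed--Muller weight distributions --- lives entirely in Theorems~\ref{thm:mainlb} and~\ref{thm:quadcurve}, so the corollary is a one-line consequence once both are in hand.
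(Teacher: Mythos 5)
Your proposal is correct and matches the paper's (implicit) proof: the paper simply declares the corollary immediate from Theorems~\ref{thm:quadcurve} and~\ref{thm:mainlb}, and your instantiation --- fixing a constant $\delta'$, absorbing the $(5+\delta')/(8m)$ term once $m$ exceeds an absolute constant so that $\tau_M(\delta')$ is bounded away from $0$, checking $m\le n$, and discarding the finitely many small $m$ --- is exactly the intended bookkeeping. (One minor caution: although Theorem~\ref{thm:mainlb} is stated for any $0<\delta'<1$, its proof opens with ``Let $2/3<\delta'<1$,'' so a choice such as $\delta'=3/4$ is safer than $\delta'=1/2$; this does not change the argument.)
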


This bound is tight since it matches the resources used by the learning
algorithms for quadratic functions given in the introduction up to constant
factors in the space bound and in the exponent of the time bound.

We obtain similar bounds for all low degree polynomials over $\mathbb{F}_2$.

\begin{theorem}
\label{thm:constant-d-curve}
Let $3\le d$ and $m\ge d^2$.
Let $M$ be the matrix for learning (homogenous) functions of degree at most
$d$ over $\mathbb{F}_2[z_1,\ldots, z_m]$. 
Then there is a constant $\lambda'_d>0$ depending on $d$ such that
$\tau_M(\delta)\le -\lambda'_d$
for all $0 <\delta <3/4$.
\end{theorem}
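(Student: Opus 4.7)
The plan is to follow the SDP relaxation strategy outlined in the introduction, replacing the exact weight enumerator of $RM(2,m)$ used for Theorem~\ref{thm:quadcurve} by the low-weight codeword count bounds of Kaufman, Lovett, and Porat~\cite{DBLP:journals/tit/KaufmanLP12}. Identifying $x \in X$ with the coefficient vector of the corresponding polynomial in $\mathbb{F}_2[z_1,\ldots,z_m]$ of degree at most $d$, so that $M(a,x) = (-1)^{x(a)}$, I first compute
$$\|M\mathbb{P}\|_2^2 = \frac{1}{|A|}\mathbb{P}^T M^T M\,\mathbb{P} = \sum_{x,x'\in X} c(x+x')\,\mathbb{P}(x)\mathbb{P}(x'),$$
where $c(f) = \E_a[(-1)^{f(a)}] = 1 - 2w(f)/|A|$ is the bias of the Reed-Muller codeword $f = x \oplus x' \in RM(d,m)$. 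Equivalently, $M^T M/|A| = \sum_{f \in RM(d,m)} c(f)\,P_f$ where $P_f$ is the permutation matrix sending $x\mapsto x+f$, so $M^T M/|A|$ is a convolution operator on $\mathbb{F}_2^n$ whose spectrum is encoded by the Reed-Muller weight enumerator.

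Next, I would relax the optimization defining $\tau_M(\delta)$ to the SDP
\begin{align*}
\max\ \frac{1}{|A|}\langle M^T M, U\rangle \quad \text{s.t.}\ & U \succeq 0,\ U \ge 0\text{ entrywise},\\
& \langle J, U\rangle = 1,\ \mathrm{tr}(U) \le |X|^{\delta-1},
\end{align*}
and pass to its dual, which seeks $\alpha \in \R$, $\beta \ge 0$, and an entrywise-nonnegative $W$ such that $\alpha J + \beta I - W \succeq \frac{1}{|A|}M^T M$, minimizing $\alpha + \beta|X|^{\delta-1}$. The dual certificate I propose thresholds $M^T M/|A|$ at a bias level $\epsilon_0$: set $\alpha = \epsilon_0$ and choose $W$ to cancel the off-diagonal entries $(x,x')$ with $|c(x+x')| \le \epsilon_0$; the residual is then a combination of the identity and the ``high-bias'' permutation operators, which, since each $P_f$ has operator norm one, is dominated in PSD order by $\beta I$ with $\beta = O(|\{f\ne 0:|c(f)|>\epsilon_0\}|)$.

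The key quantitative ingredient is the Kaufman-Lovett-Porat weight-distribution estimate, which in the regime $m\ge d^2$ controls $|\{f\in RM(d,m):|c(f)|>\epsilon_0\}|$ by a quantity whose logarithm is $\mathrm{poly}(d,\log(1/\epsilon_0))\cdot m^{d-1}$, a full factor of $m$ below $n = \binom{m}{\le d} = \Theta_d(m^d)$. Choosing $\epsilon_0 = 2^{-\Theta(m)}$, this allows $\beta \le 2^{O_d(m^{d-1})}$, so $\beta |X|^{\delta-1} = 2^{O_d(m^{d-1}) - (1-\delta)n}$, which is at most $|A|^{-2\lambda'_d}$ for every $\delta < 3/4$ and a suitable constant $\lambda'_d = \lambda'_d(d)>0$. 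Combined with $\alpha = \epsilon_0 \le |A|^{-2\lambda'_d}$, this gives $\|M\mathbb{P}\|_2^2 \le 2|A|^{-2\lambda'_d}$, establishing $\tau_M(\delta) \le -\lambda'_d$.

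The main obstacle is the quantitative calibration in the last step. The KLP exponent $\mathrm{poly}(d, \log(1/\epsilon_0))\cdot m^{d-1}$ must be made strictly smaller than $(1-\delta)n \asymp (1-\delta)m^d/d!$, yet $\epsilon_0$ itself must be exponentially small in $m$ so that $\alpha=\epsilon_0$ provides an $|A|^{-\Omega(1)}$ saving. The hypotheses $m \ge d^2$ and $\delta < 3/4$ are exactly what is needed for both requirements to be simultaneously satisfiable with room to spare, yielding a constant $\lambda'_d$ that depends only on $d$ and not on $m$. Verifying this balance carefully, and checking that the nonnegative-entry constraint on $W$ is indeed respected throughout the proposed certificate, is the one nontrivial technical check in the proof.
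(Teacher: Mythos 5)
Your proposal follows essentially the same route as the paper: the same SDP relaxation and dual, a certificate that thresholds $M^TM$ at a bias level $\epsilon_0=\kappa/2^m$ (your bound $c(f)P_f\preceq |c(f)|I$ for the symmetric permutation matrices $P_f$ is the same mechanism as the paper's Laplacian shift of off-diagonal mass to the diagonal), and the Kaufman--Lovett--Porat bound with $k=1$ and $\varepsilon=2^{-\Theta_d(m)}$ to control the number of high-bias codewords. The one inaccuracy is your claim that the KLP count is ``a full factor of $m$ below $n$'': with $\log_2(1/\epsilon_0)=\Theta(m)$ the exponent $C_d\log_2(1/\epsilon_0)\,m^{d-1}$ is $\Theta_d(m^d)$, i.e.\ the \emph{same} order as $n$, and the saving instead comes from choosing the constant in $\epsilon_0=2^{-\lambda_d m}$ small enough (the paper takes $\lambda_d=1/(20C_d d!)$, using $m\ge d^2$ to get $\binom{m}{\le d}=\Theta(m^d/d!)$) so that the count is at most $2^{n/4}\ll 2^{(1-\delta)n}$ for $\delta<3/4$; you correctly identify this calibration as the remaining check but do not carry it out, and it is exactly what the paper's Corollary~\ref{cor:constant-d} verifies.
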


Again we have the following immediate corollary which is also asymptotically
optimal for constant degree polynomials.

\begin{corollary}
\label{cor:learn-constant-d}
Fix some integer $d\ge 2$. 
There is a $\varepsilon_d>0$ such that
for positive integers $m\ge d$ and $n=\sum_{i=1}^d \binom{m}i$, 
any algorithm for learning polynomial functions of degree at most $d$ over
$\mathbb{F}_2[z_1,\ldots, z_m]$ that succeeds with probability at least
$2^{-\varepsilon_d m}$ requires space $\Omega_d(mn)$ or time $2^{\Omega_d(m)}$.
\end{corollary}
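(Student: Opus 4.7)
The plan is to derive this corollary as an essentially immediate combination of Theorem~\ref{thm:constant-d-curve} (bounding the 2-norm amplification curve for low-degree polynomials) with Theorem~\ref{thm:mainlb} (converting such a bound into a time-space tradeoff), in exact parallel to how Corollary~\ref{cor:quadratic} follows from Theorem~\ref{thm:quadcurve} plus Theorem~\ref{thm:mainlb}. The main conceptual work has already been carried out in stating those two theorems, so the argument is just parameter-matching.

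Concretely, fix $d\geq 3$ and choose some constant $\delta'\in(0,3/4)$, say $\delta'=1/2$. By Theorem~\ref{thm:constant-d-curve}, for all $m\geq d^2$ the matrix $M$ associated with learning degree-$\leq d$ homogeneous polynomials over $\F_2[z_1,\ldots,z_m]$ satisfies $\tau_M(\delta')\leq -\lambda'_d<0$, where $\lambda'_d$ depends only on $d$. The matrix $M$ is $\pm 1$-valued, and with $n=\sum_{i=1}^d\binom{m}{i}$ and $|A|=2^m$ we have $n\geq m$, so the hypothesis $m\leq n$ of Theorem~\ref{thm:mainlb} is satisfied. Applying Theorem~\ref{thm:mainlb} with this choice of $\delta'$ then yields constants $\varepsilon_d,\beta_d,\eta_d>0$ (depending only on $\delta'$ and $\tau_M(\delta')$, hence only on $d$) such that any algorithm solving the learning problem with success probability at least $2^{-\varepsilon_d m}$ either uses space $\geq \eta_d mn$ or time $\geq 2^{\beta_d m}$. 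This gives the claimed $\Omega_d(mn)$-space versus $2^{\Omega_d(m)}$-time tradeoff.

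The only loose ends are the corner cases of the hypothesis: the corollary is stated for $d\geq 2$ and $m\geq d$, whereas the path above uses $d\geq 3$ and $m\geq d^2$. The case $d=2$ is already covered by Corollary~\ref{cor:quadratic}. For $d\geq 3$ and $d\leq m<d^2$, $d$ and $m$ are both bounded by a constant depending only on $d$, and hence so is $n$; the claimed lower bounds $\Omega_d(mn)$ and $2^{\Omega_d(m)}$ are then themselves bounded constants, and can be absorbed into the $\Omega_d(\cdot)$ notation by shrinking the implicit constant (since any branching program uses space and time at least $1$). Thus the bound holds uniformly for all $m\geq d$.

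I do not expect any real obstacle here, since both ingredients are already stated. The only thing to be careful about is that the constants $\varepsilon_d,\beta_d,\eta_d$ produced by Theorem~\ref{thm:mainlb} really do depend only on $d$ and not on $m$; this is immediate from the theorem's statement since $\delta'$ and the upper bound on $\tau_M(\delta')$ are fixed once $d$ is fixed. The asymptotic optimality claim (that this tradeoff matches the natural algorithms) would be remarked on separately, analogous to the discussion following Corollary~\ref{cor:quadratic}.
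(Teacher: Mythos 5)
Your proposal is correct and matches the paper's (implicit) proof: the paper presents this corollary as following immediately from Theorem~\ref{thm:constant-d-curve} combined with Theorem~\ref{thm:mainlb}, exactly as you do. Your explicit handling of the corner cases $d=2$ and $d\le m<d^2$ is a reasonable tidying-up of details the paper leaves unstated.
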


For the case of learning larger degree polynomials where the $d$ can depend on 
the number of variables $m$, we can derive the following somewhat weaker
lower bound whose proof we only sketch.

\begin{theorem}
\label{thm:smalld}
There are constants $\zeta,\varepsilon >0$ such that for positive integer
$d\le (1-\zeta) \cdot m$ and $n=\sum_{i=1}^d \binom{m}i$.
any algorithm for learning polynomial functions of degree at most $d$ over
$\mathbb{F}_2[z_1,\ldots, z_m]$ that succeeds with probability at least
$2^{-\varepsilon m/d}$ requires space $\Omega(mn/d)$ or
time $2^{\Omega(m/d)}$.
\end{theorem}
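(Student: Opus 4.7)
The plan is to mirror the strategy used for constant degree in Theorem~\ref{thm:constant-d-curve} and Corollary~\ref{cor:learn-constant-d}, while tracking quantitatively how every constant depends on $d$. Concretely, I want to prove an amplification-curve bound of the form $\tau_M(\delta_0)\le -\lambda/d$ for some fixed $\delta_0\in(0,3/4)$ and absolute constant $\lambda>0$, and then feed this into a $d$-parameterized version of Theorem~\ref{thm:mainlb} whose constants $\varepsilon,\beta,\eta$ scale as $\Theta(1/d)$. The $1/d$ factor on $|\tau_M(\delta_0)|$ then propagates to a $1/d$ factor in the space, time and success-probability exponents of the final tradeoff, giving exactly the statement of Theorem~\ref{thm:smalld}.

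For the amplification bound, the $(p,q)$ entry of $M^TM$ equals $|A|-2\,\mathrm{wt}(p+q)$, i.e.\ $|A|$ times the bias of the degree-$\le d$ polynomial $p+q$. Following the SDP-relaxation recipe outlined in the introduction, the problem of bounding $\tau_M(\delta_0)$ reduces to evaluating the dual of
\[
\sup\{\langle M^TM,U\rangle : U\succeq 0,\ U\ge 0,\ \mathrm{tr}(U)\le\epsilon^2,\ \mathbf{1}^T U\mathbf{1}=1\}
\]
at a Laplacian whose entries are determined by how many codewords of $\mathrm{RM}(d,m)$ sit at each weight level. The Kaufman--Lovett--Porat weight-distribution bounds~\cite{DBLP:journals/tit/KaufmanLP12}, used already in Theorem~\ref{thm:constant-d-curve}, control the count of codewords with weight below any fixed fraction of $2^{m-1}$ by a quantity whose logarithm depends linearly on $d$. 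Summing the bias contributions across the relevant weight layers against this Laplacian yields a negative upper bound on $\tau_M(\delta_0)$ of magnitude $\Omega(1/d)$.

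For the reduction to Theorem~\ref{thm:mainlb}, I would revisit its proof and make the dependence of $\varepsilon,\beta,\eta$ on $\tau_M(\delta')$ explicit. The truncation thresholds in Definition~\ref{defn:truncation}, in particular the high-bias threshold $2^{-\gamma m}$, must be relaxed to $2^{-\gamma m/d}$; the martingale-style inequality that drives the probability of first reaching any fixed significant node gains a factor proportional to $|\tau_M(\delta_0)|$ per step, so a per-step weakening from $\Omega(1)$ to $\Omega(1/d)$ turns the per-node probability bound $2^{-\Omega(nm)}$ into $2^{-\Omega(nm/d)}$. These substitutions yield the counting argument that any algorithm succeeding with probability $2^{-\varepsilon m/d}$ must have $2^{\Omega(nm/d)}$ nodes per layer or path length $2^{\Omega(m/d)}$.

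The principal obstacle is ensuring that the SDP dual computation really gives $\tau_M(\delta_0)\le -\Omega(1/d)$ across the full range $d\le(1-\zeta)m$, rather than degrading to something weaker such as $-\Omega(1/d^2)$ or $-2^{-\Omega(d)}$. As $d$ grows toward $m$, an increasing fraction of polynomials acquires nonnegligible bias, so the Laplacian bookkeeping must be handled carefully across many weight layers, which is precisely why the result is quantitatively weaker than Corollary~\ref{cor:learn-constant-d} and why only a sketch is warranted. A secondary check is that the constants inside Theorem~\ref{thm:mainlb} depend polynomially (in fact linearly in the exponent) on $|\tau_M(\delta')|$, so that replacing a $\Theta(1)$ amplification gap by a $\Theta(1/d)$ one cleanly costs just a single factor of $1/d$ in the final exponents.
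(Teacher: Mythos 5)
Your second step --- reopening the proof of Theorem~\ref{thm:mainlb} and tracking that $\gamma,\beta,\varepsilon,\sigma$ (and hence $\eta$) can all be taken of order $\Theta(1/d)$ once $\tau_M(\delta_0)\le -\Omega(1/d)$ --- is exactly what the paper does, and that part of your plan is fine. The gap is in the first step: you propose to obtain $\tau_M(\delta_0)\le -\Omega(1/d)$ for all $d\le(1-\zeta)m$ from the Kaufman--Lovett--Porat weight-distribution bounds, and you flag as the ``principal obstacle'' the worry that this might only give $-\Omega(1/d^2)$. That worry is not a technicality to be handled by more careful Laplacian bookkeeping; it is exactly where the KLP route breaks. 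The refined KLP bound (Proposition~\ref{prop:klp-stronger}) controls the number of codewords of weight at most $2^{m-1}(1-\varepsilon)$ by $2^{c(d^2+d\log_2(1/\varepsilon))\sum_{i=0}^{d-1}\binom{m}{i}}$. To keep this count below $2^{n/4}$ (so that $W_\kappa(N)$ is negligible against $2^{(1-\delta)n}$ in Lemma~\ref{SDP-lemma}) one needs $c(d^2+d\log_2(1/\varepsilon))\lesssim m/d$, which forces $\varepsilon\ge 2^{-\Theta(m/d^2)}$ and, because of the additive $d^2$ term, also forces $d\le O(m^{1/3})$. The resulting threshold is $\kappa=2^{(1-\Theta(1/d^2))m}$, hence $\tau_M\le-\Theta(1/d^2)$ on a restricted range of $d$ --- the paper records precisely this as the ``weaker form'' of the theorem.

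To get the stated $1/d$ dependence over the full range $d\le(1-\zeta)m$ you need a different input: the Ben-Eliezer--Hod--Lovett result (Proposition~\ref{prop:behl-bias}), which says that all but a $2^{-c_2\sum_{i=0}^d\binom{m}{i}}$ fraction of degree-$d$ polynomials have bias at most $2^{-c_1 m/d}$. Plugging this into Lemma~\ref{SDP-lemma} with $\kappa=2^{(1-c_1/d)m}$ and $\delta=c_2/2$ gives $W_\kappa(N)\le 2^m\cdot 2^{(1-c_2)(n+1)}$ and hence $OPT_{M,\delta}\le 2^{-c'm/d}$ directly. This is a concentration statement about the bias of a \emph{random} codeword around zero, not a bound on the lower tail of the weight distribution near the minimum distance, and it is the ingredient your proposal is missing.
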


We prove Theorem~\ref{thm:mainlb} in the next section. 
In Section~\ref{sec:sdp} we give a semidefinite programming relaxation of
that provides a strategy for bounding the norm amplification curve and in
Section~\ref{sec:polynomial} we give the applications of that method to the
matrices for learning low degree polynomials.
Finally, in Section~\ref{sec:multivalued}
we sketch how to extend the framework to learning problems for which
the tests have multivalued rather than simply binary outcomes.

\section{Lower Bounds over Small Sample Spaces}

In this section we prove Theorem~\ref{thm:mainlb}. 
Let $2/3<\delta'<1$ be the value given in the statement of the theorem,
To do this we define several positive constants that will be useful:
\begin{itemize}
\item $\delta=\delta'/6$,
\item $\alpha=1-2\delta$,
\item $\gamma=-\tau_M(\delta')/2$,
\item $\beta=\min(\gamma,\delta)/8$, and
\item $\varepsilon=\beta/2$.
\end{itemize}
Let $B$ be a learning branching program for $M$ 
with length at most $2^{\beta m}-1$ and
success probability at least $2^{-\varepsilon m}$.

We will prove that $B$ must have space $2^{\Omega(mn)}$.
We first apply the $(\delta,\alpha,\gamma)$-truncation procedure given in
Definition~\ref{defn:truncation} to yield $\mathbb{P}_{x|v}$ and
$\mathbb{P}_{e|v}$ for all vertices $v$ in $B$.

The following simple technical lemmas are analogues of ones proved
in~\cite{DBLP:journals/eccc/Raz17}, though we structure our argument somewhat
differently.  The first uses the bound on the
amplification curve of $M$ in place of its matrix norm.

\begin{lemma}
\label{lem:bias}
Suppose that vertex $v$ in $B$ is not significant.
Then $$\Pr_{a\in_R A}[|(M\cdot \mathbb{P}_{x|v})(a)|\ge 2^{-\gamma m}]\le 2^{-2\gamma m}.$$
\end{lemma}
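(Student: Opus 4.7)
The plan is to chain together three elementary observations: (i) the truncation definition tells us exactly what $\|\mathbb{P}_{x|v}\|_2$ looks like at a non-significant vertex, (ii) the amplification curve $\tau_M$ then bounds $\|M\cdot\mathbb{P}_{x|v}\|_2$, and (iii) Markov's inequality on the coordinate-wise squares converts that $2$-norm bound into the claimed tail bound on $|(M\cdot\mathbb{P}_{x|v})(a)|$.

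First, I would unpack the hypothesis using the constants chosen at the start of the section. If $v$ is not significant, then by Definition~\ref{defn:truncation} we have $\|\mathbb{P}_{x|v}\|_2 < 2^{-(1-\delta/2)n} = 1/|X|^{1-\delta/2}$. Since $\delta=\delta'/6 \le \delta'$, the exponent satisfies $1-\delta/2 \ge 1-\delta'/2$, so in particular $\|\mathbb{P}_{x|v}\|_2 \le 1/|X|^{1-\delta'/2}$. Thus $\mathbb{P}_{x|v}$ lies in the feasible set of the supremum defining $\tau_M(\delta')$, and so by the definition of the $2$-norm amplification curve,
\[
\|M\cdot \mathbb{P}_{x|v}\|_2 \;\le\; |A|^{\tau_M(\delta')} \;=\; 2^{\tau_M(\delta')\, m} \;=\; 2^{-2\gamma m},
\]
where the last equality uses the choice $\gamma = -\tau_M(\delta')/2$.

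Second, I would square this and apply Markov's inequality to the non-negative random variable $|(M\cdot \mathbb{P}_{x|v})(a)|^2$ under uniform $a\in A$. Recall that by definition $\|M\cdot \mathbb{P}_{x|v}\|_2^2 = \mathbb{E}_{a\in_R A}[|(M\cdot \mathbb{P}_{x|v})(a)|^2]$, so
\[
\Pr_{a\in_R A}\bigl[|(M\cdot \mathbb{P}_{x|v})(a)| \ge 2^{-\gamma m}\bigr] \;=\; \Pr_{a}\bigl[|(M\cdot \mathbb{P}_{x|v})(a)|^2 \ge 2^{-2\gamma m}\bigr] \;\le\; \frac{\|M\cdot \mathbb{P}_{x|v}\|_2^2}{2^{-2\gamma m}} \;\le\; \frac{2^{-4\gamma m}}{2^{-2\gamma m}} \;=\; 2^{-2\gamma m},
\]
which is exactly the claimed inequality.

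This proof is short and has no real obstacle; the only subtle point is verifying that the slack between $\delta$ and $\delta'$ in the definitions at the top of the section is used correctly so that the non-significance condition fits into the feasible set for $\tau_M(\delta')$. Everything else is a direct application of the definitions of non-significance, $\tau_M$, and Markov's inequality.
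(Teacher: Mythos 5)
Your proof is correct and follows essentially the same route as the paper: non-significance bounds $\|\mathbb{P}_{x|v}\|_2$, the amplification curve at $\delta'$ bounds $\|M\cdot\mathbb{P}_{x|v}\|_2^2$ by $2^{-4\gamma m}$, and Markov's inequality on the squared entries finishes. The only cosmetic difference is that you justify passing from $\delta$ to $\delta'$ by containment of feasible sets, while the paper invokes the equivalent monotonicity $\tau_M(\delta)\le\tau_M(\delta')$; both are fine.
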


\begin{proof}
Since $v$ is not significant $\|\mathbb{P}_{x|v}\|_2\le 2^{-(1-\delta/2)n}$.
By definition of $\tau_M$,
$$\E_{a\in_R A}[|(M\cdot \mathbb{P}_{x|v})(a)|^2]=\|M\cdot \mathbb{P}_{x|v}\|_2^2\le 2^{2\tau_M(\delta)m}\le 2^{2\tau_M(\delta') m}=2^{-4\gamma m}.$$
Therefore, by Markov's inequality, 
$$\Pr_{a\in_R A}[|(M\cdot \mathbb{P}_{x|v})(a)|\ge 2^{-\gamma m}]
=\Pr_{a\in_R A}[|(M\cdot \mathbb{P}_{x|v})(a)|^2\ge 2^{-2\gamma m}]
\le 2^{-2\gamma m}.$$
\end{proof}

\begin{lemma}
\label{lem:highprob}
Suppose that vertex $v$ in $B$ is not significant.
Then $$\Pr_{x'\sim\mathbb{P}_{x|v}}[x'\in \High(v)]\le 2^{-\delta n}.$$
\end{lemma}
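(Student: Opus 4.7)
The plan is to reduce the bound to a Markov-type argument on the squared probability mass, using the two-norm hypothesis that follows from $v$ not being significant. The key identity is that the expectation two-norm $\|\mathbb{P}_{x|v}\|_2^2$ equals $\tfrac{1}{|X|}\sum_{x'}\mathbb{P}_{x|v}(x')^2$, so the assumption $\|\mathbb{P}_{x|v}\|_2\le 2^{-(1-\delta/2)n}$ translates into the bound $\sum_{x'}\mathbb{P}_{x|v}(x')^2 \le 2^{-(1-\delta)n}$ on the collision probability of $\mathbb{P}_{x|v}$.

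Next, I would rewrite the probability of landing in $\High(v)$ as the sum $\sum_{x'\in \High(v)}\mathbb{P}_{x|v}(x')$. For each $x'\in \High(v)$, the defining inequality $\mathbb{P}_{x|v}(x')\ge 2^{-\alpha n}$ can be used in reverse to bound $\mathbb{P}_{x|v}(x')\le 2^{\alpha n}\mathbb{P}_{x|v}(x')^2$. Summing this pointwise inequality over $\High(v)$ and then extending the sum to all of $X$ yields
\[
\Pr_{x'\sim\mathbb{P}_{x|v}}[x'\in \High(v)] \;\le\; 2^{\alpha n}\sum_{x'\in X}\mathbb{P}_{x|v}(x')^2 \;\le\; 2^{\alpha n}\cdot 2^{-(1-\delta)n}.
\]

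Finally, substituting $\alpha = 1-2\delta$ makes the exponent collapse to $-\delta n$, giving exactly the desired bound $2^{-\delta n}$. No step here is really hard; the only thing to be careful about is the normalization of the expectation two-norm (the factor of $|X|=2^n$ must appear correctly when converting $\|\mathbb{P}_{x|v}\|_2$ into a bound on $\sum \mathbb{P}_{x|v}(x')^2$), since if that factor is dropped the exponents will not align and the resulting inequality will be off by a factor of $2^n$. Once the normalization is in place, the argument is just the standard Markov-style reversal applied to the collision probability.
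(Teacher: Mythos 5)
Your proof is correct and is essentially the paper's argument: both bound the collision probability $\sum_{x'}\mathbb{P}_{x|v}(x')^2 \le 2^{-(1-\delta)n}$ using the non-significance hypothesis (with the normalization factor $2^n$ handled correctly), and then apply Markov's inequality to the random variable $\mathbb{P}_{x|v}(x')$ at threshold $2^{-\alpha n}$ — your pointwise reversal $\mathbb{P}_{x|v}(x')\le 2^{\alpha n}\mathbb{P}_{x|v}(x')^2$ is just Markov's inequality written out explicitly. The substitution $\alpha=1-2\delta$ then gives $2^{-\delta n}$ exactly as in the paper.
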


\begin{proof}
Since $v$ is not significant,
$$\E_{x'\sim \mathbb{P}_{x|v}}[\mathbb{P}_{x|v}(x')]=\sum_{x'\in X}
(\mathbb{P}_{x|v}(x'))^2
=2^n\cdot \|\mathbb{P}_{x|v}\|^2_2\le 2^{-(1-\delta)n}=2^{-(alpha+\delta) n}.$$
Therefore, by Markov's inequality,
$$\Pr_{x'\sim \mathbb{P}_{x|v}}[x'\in \High(v)]=
\Pr_{x'\sim \mathbb{P}_{x|v}} [\mathbb{P}_{x|v}(x')\ge 2^{-\alpha n}]
\le 2^{-\delta n}.$$
\end{proof}

\begin{lemma}
\label{lem:significant}
The probability, over uniformly random $x'\in X$ and uniformly random
computation path $C$ in $B$ on input $x'$, that the truncated version $T$ of $C$
reaches a significant vertex of $B$ is at least $2^{-\beta m/2-1}$.
\end{lemma}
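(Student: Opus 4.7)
My plan is to compare $\Pr[T\text{ reaches a significant vertex}]$ to the learner's success probability, which by hypothesis is at least $2^{-\varepsilon m}=2^{-\beta m/2}$. Since the truncation rules halt the path the first time a significant vertex is encountered, the event ``$T$ reaches a significant vertex'' coincides with ``the endpoint $v^*$ of $T$ is significant.'' I would partition the possible endpoints of $T$ into four disjoint cases: (a) $v^*$ is significant; (b) $v^*$ is non-significant and $x\in\High(v^*)$ (high-probability truncation, possibly at a sink); (c) $v^*$ is a non-significant non-sink at which the sampled outedge is high-bias; and (d) $v^*$ is a non-significant sink reached without any truncation along the path. Since every successful computation path $C$ has its truncation $T$ ending in one of these four cases, bounding the contributions of the ``bad'' cases (b), (c), (d) to the success probability will isolate the desired lower bound coming from case (a).

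Let $q_v=\Pr[T\text{ reaches }v]$. Because $T$ occupies at most one vertex per layer and $B$ has length $L\leq 2^{\beta m}$, we have $\sum_v q_v\leq L$ and $\sum_{v\text{ sink}} q_v\leq 1$. Applying Lemma~\ref{lem:highprob} to each non-significant vertex and summing bounds $\Pr[(b)]$ by $\sum_v q_v\cdot 2^{-\delta n}\leq 2^{\beta m-\delta n}$. Applying Lemma~\ref{lem:bias} analogously (and using that a fresh sample $a^t$ is independent of $x$ given the prior history) bounds $\Pr[(c)]$ by $2^{\beta m-2\gamma m}$. For case (d), at a non-significant sink $v$ with label $x'$, the contribution to success is at most $q_v\cdot\mathbb{P}_{x|v}(x')$; a success here would force $x=x'$, and if $x'\in\High(v)$ we would instead be in case (b), so $x'\notin\High(v)$ and hence $\mathbb{P}_{x|v}(x')<2^{-\alpha n}$. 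Summing over non-significant sinks yields $\Pr[\text{success}\cap(d)]\leq 2^{-\alpha n}$.

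Combining these bounds gives $\Pr[T\text{ reaches sig}]\geq 2^{-\beta m/2}-2^{\beta m-\delta n}-2^{\beta m-2\gamma m}-2^{-\alpha n}$. Using $n\geq m$ together with $\beta\leq\min(\gamma,\delta)/8$, the first two subtracted terms are at most $2^{-7\beta m}$ and $2^{-15\beta m}$. Using $\alpha=1-2\delta\geq 2/3$ (since $\delta=\delta'/6<1/6$), the last is at most $2^{-2m/3}$. Each of the three is $o(2^{-\beta m/2})$ once $m$ is sufficiently large relative to the constants, so $\Pr[T\text{ reaches sig}]\geq 2^{-\beta m/2-1}$ as claimed.

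The main obstacle I anticipate is the careful bookkeeping in the partition, especially case (d): one has to argue that a successful $C$ ending at a non-significant sink $v$ with $T=C$ actually forces $\text{label}(v)\notin\High(v)$, which is precisely what lets Lemma~\ref{lem:highprob}'s threshold $2^{-\alpha n}$ apply. Conditional-independence arguments—treating each fresh sample $a^t$ as independent of $x$ given the earlier truncated path—are also needed to apply Lemma~\ref{lem:bias} edge by edge. With these subtleties handled, the rest of the argument reduces to adding the bounds from Lemmas~\ref{lem:bias} and~\ref{lem:highprob}.
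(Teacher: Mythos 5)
Your proof is correct and follows essentially the same route as the paper: partition the outcomes of the truncated path into the same cases, bound the truncation events by summing Lemmas~\ref{lem:bias} and~\ref{lem:highprob} over the at most $2^{\beta m}$ steps, bound success at non-significant sinks, and subtract the total from the assumed success probability $2^{-\varepsilon m}=2^{-\beta m/2}$. The only cosmetic difference is in case (d), where you bound the conditional success probability by the $\High(v)$ threshold $2^{-\alpha n}$, while the paper uses the non-significance bound $\max_{x''}\mathbb{P}_{x|v}(x'')\le 2^{n/2}\|\mathbb{P}_{x|v}\|_2<2^{-(\alpha+\delta)n/2}$; both suffice.
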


\begin{proof}
Let $x'$ be chosen uniformly at random from $X$ and consider the truncated
path $T$.
$T$ will not reach a significant vertex of $B$ only if one of the following
holds:
\begin{enumerate}
\item $T$ is truncated at a vertex $v$ where
$\mathbb{P}_{x|v}(x')\ge 2^{-\alpha n}$.
\item $T$ is truncated at a vertex $v$ because the next edge of $C$ is
labelled by $(a,b)$
where $|(M\cdot \mathbb{P}_{x|v})(a)|\ge 2^{-\gamma m}$.
\item $T$ ends at a leaf that is not significant.
\end{enumerate}
By Lemma~\ref{lem:highprob}, for each vertex $v$ on $C$,
conditioned on the truncated path reaching $v$, the probability that
$\mathbb{P}_{x|v}(x')\ge 2^{-\alpha n}$ is at most $2^{-\delta n}$. 
Similarly, by Lemma~\ref{lem:bias}, for each $v$ on the path,
conditioned on the truncated path reaching $v$, the probability that
$|(M\cdot \mathbb{P}_{x|v})(a)|\ge 2^{-\gamma m}$ is at most $2^{-2\gamma m}$.
Therefore, since $T$ has length at most $2^{\beta m}$, the probability that
$T$ is truncated at $v$ for either reason is at most $2^{\beta m}(2^{-2\gamma m}+2^{-\delta n})<2^{-\beta m+1}$ since $m\le n$ and
$\beta<\min(\gamma,\delta/2)$.

Finally, if $T$ reaches a leaf $v$ that is not significant then, conditioned
on arriving at $v$, the probability that the input $x'$ equals the label of
$v$ is at most $\max_{x''\in X} \mathbb{P}_{x|v}(x'')$.
Now 
$$\frac{\max_{x''\in X} \mathbb{P}_{x|v}(x'')}{2^{n/2}}\le 
\|\mathbb{P}_{x|v}\|_2< 2^{-(1-\delta/2) n}$$
since $v$ is not significant, so we have
$\max_{x''\in X} \mathbb{P}_{x|v}(x'')< 2^{-(1-\delta)n/2}= 2^{-(\alpha+\delta) n/2}$
and the 
probability that $B$ is correct conditioned on the truncated path reaching a
leaf vertex that is not significant is less than
$2^{-(\alpha+\delta) n/2}\le 2^{-\beta n}\le 2^{-\beta m}$ since $m\le n$.   

Since $B$ is correct with probability at least
$2^{-\varepsilon m}=2^{-\beta m/2}$ and these three
cases in which $T$ does not reach a significant vertex account for correctness
at most $3\cdot 2^{-\beta m}$, which is much less than half of $2^{-\beta m/2}$,
$T$ must reach a significant vertex with probability at least
$2^{-\beta m/2-1}$.
\end{proof}

The following lemma is the the key to the proof of the theorem. 

\begin{lemma}
\label{lem:space}
Let $s$ be any significant vertex of $B$.   There is an $\eta>0$ such that for
a uniformly random $x$ chosen
from $X$ and a uniformly random computation path $C$, the probability that
its truncation $T$ ends at $s$ is at most $2^{-\eta mn}$.
\end{lemma}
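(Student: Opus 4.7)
The plan is to bound $P_s$ by combining two facts: any non-truncated path from the root to $s$ must be long, and each edge along such a path contributes only a small probability factor.

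First, I will establish that $t_s \ge \delta n/2$ for any non-truncated path from the root to $s$. For any edge $e = (v,w)$ along such a path with label $(a_e, b_e)$, the explicit update $\mathbb{P}_{x|w}(x) = \mathbb{P}_{x|v}(x) \cdot (1 + b_e M(a_e,x))/(1+c_e)$, combined with $(1 + b_e M(a_e,x))^2 = 2(1 + b_e M(a_e,x)) \le 4$, yields the deterministic bound $\|\mathbb{P}_{x|w}\|_2^2 \le (4/(1+c_e)^2) \|\mathbb{P}_{x|v}\|_2^2$. The low-bias condition $|c_e| \le 2^{-\gamma m}$ keeps $(1+c_e)^2$ within $1 \pm O(2^{-\gamma m})$ of $1$, so per edge the $2$-norm squared grows by at most $4 \cdot (1+O(2^{-\gamma m}))$. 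Starting from $\|\mathbb{P}_{x|\mathrm{root}}\|_2^2 = 1/|X|^2$ and needing $\|\mathbb{P}_{x|s}\|_2^2 \ge 2^{-(2-\delta)n}$ forces $4^{t_s} \ge 2^{\delta n}/O(1)$, i.e., $t_s \ge \delta n/2$.

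Second, for each specific path $P = (v_0, v_1, \ldots, v_{t_s})$ from the root to $s$, with edges $e_1, \ldots, e_{t_s}$, the probability that the truncated computation path $T$ traverses exactly $P$ equals $\prod_{i=1}^{t_s}(1+c_{e_i})/(2|A|)$ by the chain rule (at $v_{i-1}$, edge $e_i$ is chosen with conditional probability $(1+c_{e_i})/(2|A|)$). The low-bias condition bounds each factor by $(1+2^{-\gamma m})/(2|A|)$, so for $t_s \le 2^{\beta m}$ with $\beta < \gamma$ the full product is at most $e \cdot (2|A|)^{-t_s} = O(2^{-t_s(m+1)})$. In the tree-like case where $s$ has a unique path from the root, combining with $t_s \ge \delta n/2$ yields $P_s \le O(2^{-\delta n(m+1)/2}) \le 2^{-\delta mn/4}$, giving the claim with $\eta = \delta/4$.

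For the general branching-program (DAG) case multiple paths can lead to $s$, so a naive union bound over paths can be lossy. I expect to handle this by using the significance bound $\|\mathbb{P}_{x|s}\|_2^2 \ge 2^{-(2-\delta)n}$ together with the high-probability truncation bound $\mathbb{P}_{x|s}(x) \le 2^{-\alpha n}$ to constrain the aggregate path contribution via a Cauchy-Schwarz or second-moment argument that relates $\sum_P \prod_i (1+c_{e_i})/(2|A|)$ to properties of the conditional distribution at $s$ (in particular, to $\mathbb{E}_x[q_s(x)^2]$, which is pinned down by significance). The main anticipated obstacle is precisely this multi-path aggregation; handling it requires careful joint use of the low-bias and high-probability truncation conditions so that the bound $P_s \le 2^{-\eta m n}$ does not degrade with the width of the branching program.
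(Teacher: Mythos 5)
Your proposal has a genuine gap exactly where you flag it: the multi-path (DAG) aggregation is not an ``anticipated obstacle'' to be handled later --- it is the entire content of the lemma, and the sketch you give for it would not work. A union bound or path-counting argument is hopeless here because the per-path probabilities are not small relative to their number: a vertex at depth $t$ can have $(2|A|)^{t}$ incoming paths each of probability roughly $(2|A|)^{-t}$, so $\sum_P \Pr[T=P]$ can be constant no matter how you bound individual paths. A ``Cauchy--Schwarz or second-moment argument over paths'' faces the same wall, and nothing in your outline uses the one hypothesis that makes the lemma true, namely $\tau_M(\delta')<0$. Without the amplification bound the statement is false in spirit: what prevents the learner from concentrating $\mathbb{P}_{x|v}$ quickly is precisely that, for every non-significant reweighting $\mathbb{P}_f$ of the current distribution, only a $2^{-\Omega(m)}$ fraction of tests $a$ have noticeable bias $|(M\cdot\mathbb{P}_f)(a)|$, and that fact is supplied only by the norm amplification curve. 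Your chain-rule factorization $\prod_i (1+c_{e_i})/(2|A|)$ is also only valid for trees, since $\mathbb{P}_{x|v}$ conditions on reaching $v$ by \emph{any} path, not by the prefix of $P$; you acknowledge this, but it means even your ``second step'' does not survive outside the tree case.

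The paper's proof takes a different and essentially unavoidable route: it fixes the target $s$, defines a progress measure $\rho(v)=\langle \mathbb{P}_{x|v},\mathbb{P}_{x|s}\rangle/\|\mathbb{P}_{x|s}\|_2^2$, and tracks the high moment $\Phi_t=\E_{v\sim V_t}[(\rho(v))^{\gamma m}]$ layer by layer. The DAG aggregation is handled by convexity of $r\mapsto r^{\gamma m}$ (passing from edge sub-distributions to vertex sub-distributions, Lemma~\ref{lem:edge-vertex}), and the per-step growth of $\Phi_t$ is controlled by bounding $\E_{a}[(1+|(M\cdot\mathbb{P}_f)(a)|)^{\gamma m}]$ for a reweighted distribution $\mathbb{P}_f\propto \mathbb{P}_{x|v}\cdot\mathbb{P}_{x|s}\cdot\mathbf{1}_{\notin\High(v)}$, which is where $\tau_M(\delta')<0$ and the High-Probability truncation enter (Lemma~\ref{lem:vertex-edge}). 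Since $\rho(v_0)\le 2^{-\delta n}$ at the root and $\rho(s)=1$, the moment bound $\Phi_{t}\lesssim 2^{-\delta\gamma mn+\beta m}$ after $2^{\beta m}$ steps directly bounds the probability of reaching $s$. Your first observation (that $s$ must sit at depth $\Omega(\delta n)$ because the $2$-norm can at most quadruple per layer) is correct but is neither used nor needed in the paper's argument, and by itself it yields nothing about the probability of reaching $s$. To repair your proof you would need to reconstruct the moment-evolution argument, including the role of the amplification curve; as written, the proposal does not prove the lemma.
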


The proof of Lemma~\ref{lem:space} requires a delicate progress argument and
is deferred to the next subsection.   We first show how
Lemmas~\ref{lem:significant} and~\ref{lem:space} immediately imply
Theorem~\ref{thm:mainlb}.

\begin{proof}[Proof of Theorem~\ref{thm:mainlb}]
By Lemma~\ref{lem:significant}, for $x$ chosen uniformly at random from $X$
and $T$ the truncation of a uniformly random computation path on input $x$,
$T$ ends at a significant vertex with probability at least $2^{-\beta m/2-1}$.
On the other hand, by Lemma~\ref{lem:space}, for any significant vertex $s$,
the probability that $T$ ends at $s$ is at most $2^{-\eta mn}$.
Therefore the number of significant vertices must be at least
$2^{\eta mn - \beta m/2 -1}$ and since $B$ has length at most $2^{\beta m}$, 
there must be at least $2^{\eta mn- 3\beta m/2-1}$ significant vertices in
some layer.   Hence $B$ requires space $\Omega(mn)$.
\end{proof}

\subsection{Progress towards significance}

In this section we prove Lemma~\ref{lem:space} showing that for any particular
significant vertex $s$ a random truncated path reaches $s$ only with probability
$2^{-\Omega(mn)}$.   
For each vertex $v$ in $B$ let $\Pr[v]$ 
denote the probability over a random
input $x$, that the truncation of a random computation path in $B$ on input
$x$ visits $v$ and
for each edge $e$ in $B$ let $\Pr[e]$ 
denote the probability over a random
input $x$, that the truncation of a random computation path in $B$ on input
$x$ traverses $e$.

Since $B$ is a levelled branching program, the vertices of $B$ may
be divided into disjoint sets $V_t$ for $t=0,1,\ldots,T$ where $T$ is the
length of $B$ and $V_t$ is the set of vertices at distance $t$ from the root,
and disjoint sets of edges $E_t$ for $t=1,\ldots,T$ where $E_t$ consists
of the edges from $V_{t-1}$ to $V_t$.
For each vertex $v\in V_{t-1}$, note that by definition we only have
$$\Pr[v]\ge \sum_{(v,w)\in E_t} \Pr[(v,w)]$$
since some truncated paths may terminate at $v$.

For each $t$, since the truncated computation path visits at most one vertex
and at most one edge at level $t$, we obtain a sub-distribution on $V_t$ in
which the probability of $v\in V_t$ is $\Pr[v]$ and a corresponding
sub-distribution on $E_t$ in which the probability of $e\in E_t$ is $\Pr[e]$.
We write $v\sim V_t$ and $e\sim E_t$ to denote random selection from these
sub-distributions, where the outcome $\bot$ corresponds to the case that
no vertex (respectively no edge) is selected.

Fix some significant vertex $s$.
We consider the progress that a truncated path makes as it moves from the start
vertex to $s$. We measure the progress at a vertex $v$ as
$$\rho(v)=\frac{\langle \mathbb{P}_{x|v},\mathbb{P}_{x|s}\rangle}{\langle \mathbb{P}_{x|s},\mathbb{P}_{x|s}\rangle}.$$
Clearly $\rho(s)=1$.  We first see that $\rho$ starts out at a tiny value.

\begin{lemma}
\label{lem:start}
If $v_0$ is the start vertex of $B$ then $\rho(v_0)\le 2^{-\delta n}$.
\end{lemma}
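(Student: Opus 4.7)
The plan is to directly compute $\rho(v_0)$ by plugging in the uniform distribution for $\mathbb{P}_{x|v_0}$ and using the significance lower bound on $\|\mathbb{P}_{x|s}\|_2$ for the denominator.

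First I would recall the convention on inner products: since the paper uses the expectation $2$-norm $\|f\|_2^2 = \mathbb{E}_{x'\in_R X}[f(x')^2] = \frac{1}{|X|}\sum_{x'} f(x')^2$, the associated inner product is $\langle f,g\rangle = \frac{1}{|X|}\sum_{x'} f(x')g(x')$. At the root $v_0$, the distribution $\mathbb{P}_{x|v_0}$ is uniform, so $\mathbb{P}_{x|v_0}(x')=2^{-n}$ for every $x'\in X$. Combined with the fact that $\mathbb{P}_{x|s}$ sums to $1$ on $X$, this gives
\[
\langle \mathbb{P}_{x|v_0},\mathbb{P}_{x|s}\rangle = \frac{1}{|X|}\sum_{x'\in X} 2^{-n}\,\mathbb{P}_{x|s}(x') = 2^{-n}\cdot\frac{1}{|X|}\cdot 1 = 2^{-2n}.
\]

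Next, for the denominator, I would use that $s$ is significant, i.e.\ $\|\mathbb{P}_{x|s}\|_2\ge 2^{-(1-\delta/2)n}$, hence
\[
\langle \mathbb{P}_{x|s},\mathbb{P}_{x|s}\rangle = \|\mathbb{P}_{x|s}\|_2^2 \ge 2^{-(2-\delta)n}.
\]
Dividing yields
\[
\rho(v_0) = \frac{\langle \mathbb{P}_{x|v_0},\mathbb{P}_{x|s}\rangle}{\langle \mathbb{P}_{x|s},\mathbb{P}_{x|s}\rangle} \le \frac{2^{-2n}}{2^{-(2-\delta)n}} = 2^{-\delta n},
\]
which is the desired bound.

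There is no real obstacle here; the lemma is essentially a sanity check that the potential $\rho$ starts out tiny, so that the subsequent progress arguments (bounding how much $\rho$ can grow from edge to edge along a truncated path) have room to push it up to the value $\rho(s)=1$ at a significant vertex. The only subtlety worth flagging in the write-up is being explicit about the normalization in $\langle\cdot,\cdot\rangle$, since otherwise the factor $1/|X|$ can be miscounted; everything else is a one-line calculation.
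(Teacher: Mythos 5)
Your proposal is correct and matches the paper's own proof essentially verbatim: both compute $\langle \mathbb{P}_{x|v_0},\mathbb{P}_{x|s}\rangle = 2^{-2n}$ from uniformity of the root distribution and the normalization of the expectation inner product, and both lower-bound the denominator by $2^{\delta n}\cdot 2^{-2n}$ using the significance of $s$. Nothing further is needed.
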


\begin{proof}
By definition, $\mathbb{P}_{x|v_0}$ is the uniform distribution on $X$.
Therefore
$$\langle \mathbb{P}_{x|v_0},\mathbb{P}_{x|s}\rangle= \E_{x'\in X} [2^{-n}\cdot \mathbb{P}_{x|s}(x')]=2^{-2n}\cdot \sum_{x'\in X}\mathbb{P}_{x|v_0}(x')=2^{-2n}$$
since $\mathbb{P}_{x|s}$ is a probability distribution on $X$.
On the other hand, since $s$ is significant,
$\langle \mathbb{P}_{x|s},\mathbb{P}_{x|s}\rangle=\|\mathbb{P}_{x|s}\|_2^2\ge
2^{\delta n}\cdot 2^{-2n}$.
The lemma follows immediately.
\end{proof}

Since the truncated path is randomly chosen, the progress towards $s$ after
$t$ steps is a random variable.  Following~\cite{DBLP:journals/eccc/Raz17}, we show that 
not only is the increase in this expected value of this random variable in each
step very small, its higher moments also increase at a very small rate.  Define
$$\Phi_t=\E_{v\sim V_t}[(\rho(v))^{\gamma m}]$$
where we extend $\rho$ and define $\rho(\bot)=0$.
We will show that for $s\in V_t$, $\Phi_t$ is still $2^{-\Omega(mn)}$, which
will be sufficient to prove Lemma~\ref{lem:space}.

Therefore, Lemma~\ref{lem:space}, and hence Theorem~\ref{thm:mainlb}, will
follow from the following lemma.

\begin{lemma} 
\label{lem:progress}
For every $t$ with $1\le t\le 2^{\beta m}-1$,
$$\Phi_t \le \Phi_{t-1}\cdot(1+2^{-2\beta m}) +2^{-\gamma mn}.$$
\end{lemma}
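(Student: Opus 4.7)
The plan is to bound $\Phi_t$ by first converting the sum over vertices at level $t$ into a sum over edges in $E_t$, and then analyzing the per-vertex contribution using the anti-symmetry of the $\pm 1$ outcomes together with the norm amplification hypothesis. First, since $\mathbb{P}_{x|w}$ is the $(\Pr[e]/\Pr[w])$-weighted convex combination of the distributions $\mathbb{P}_{x|e}$ over the edges $e=(v,w)$ entering $w$, and the map $x\mapsto x^{\gamma m}$ is convex on $[0,\infty)$ (for $\gamma m\geq 1$), Jensen yields $\Pr[w]\rho(w)^{\gamma m}\leq \sum_{e=(v,w)}\Pr[e]\rho(e)^{\gamma m}$; summing gives $\Phi_t\leq \sum_{e\in E_t}\Pr[e]\rho(e)^{\gamma m}$. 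I would then regroup by source vertex $v\in V_{t-1}$; significant $v$ contribute nothing (all their outgoing paths are truncated) so only non-significant $v$ matter.

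For a non-significant $v$, set $p=\mathbb{P}_{x|v}$, $q=\mathbb{P}_{x|s}$, $\sigma=\|q\|_2$, $p'$ the restriction of $p$ to $L_v$ with total mass $\pi$, $\mu=\langle p',q\rangle$, $\alpha_a=\sum_{x'}M(a,x')p'(x')$, and $\nu_a=\frac{1}{|X|}\sum_{x'}M(a,x')p'(x')q(x')$. A short calculation gives, for $a\in G_v$ and $b\in\{\pm 1\}$,
\[\Pr[e_{a,b}]\,\rho(e_{a,b})^{\gamma m}=\frac{\Pr[v]}{2|A|\sigma^{2\gamma m}\pi^{\gamma m-1}}\,(\mu+b\nu_a)^{\gamma m}(1+bv_a)^{1-\gamma m},\]
where $v_a=\alpha_a/\pi$. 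Expanding both factors by the binomial theorem and summing over $b\in\{\pm 1\}$, the odd-in-$b$ terms cancel, the constant term contributes $2\mu^{\gamma m}$, and (using the algebraic identities $\binom{1-k}{2}=\binom{k}{2}$ and $k(1-k)=-2\binom{k}{2}$ with $k=\gamma m$) the $b^2$ coefficient collapses into the perfect square $2\binom{\gamma m}{2}\mu^{\gamma m-2}(\nu_a-\mu v_a)^2$.

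Summing over $a\in G_v$, the constant-in-$b$ term -- together with the truncation bounds $|G_v|/|A|\geq 1-2^{-2\gamma m}$ (Lemma~\ref{lem:bias}) and $1-\pi\leq 2^{-\delta n}$ (Lemma~\ref{lem:highprob}), and the small $\High(v)$-correction relating $\mu/\pi$ to $\langle p,q\rangle$ -- reproduces $\Pr[v]\rho(v)^{\gamma m}$ up to a multiplicative slack of $1+2^{-\Omega(m)}$. For the quadratic correction, I would split $(\nu_a-\mu v_a)^2\leq 2\nu_a^2+2\mu^2 v_a^2$ and bound each piece: the estimate $|\alpha_a|\leq 2^{-\gamma m}+(1-\pi)$ for $a\in G_v$ gives $\sum_{a\in G_v}v_a^2=O(|A|\cdot 2^{-2\gamma m})$, while $\sum_a\nu_a^2$ is controlled by the amplification hypothesis $\tau_M(\delta')=-2\gamma$ applied to the renormalized distribution $r=p'q/(|X|\mu)$: provided $\|r\|_2\leq 2^{-(1-\delta'/2)n}$ one obtains $\sum_a\nu_a^2=\mu^2|A|\cdot\|M\cdot r\|_2^2\leq \mu^2|A|\cdot 2^{-4\gamma m}$. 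Combining with $\binom{\gamma m}{2}\leq(\gamma m)^2/2$ and the choice $\beta\leq \gamma/4$, the quadratic correction per vertex is at most $\Pr[v]\rho(v)^{\gamma m}\cdot 2^{-2\beta m}$; the higher-order even-in-$b$ terms are handled in the same spirit via the smallness of $|v_a|$ and the aggregate smallness of $|\nu_a|$.

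The main obstacle is the ``bad'' case where the 2-norm bound on $r$ fails -- roughly, when $\mu$ is tiny and $p'q$ is too concentrated for the amplification inequality to apply. In that regime, however, $\rho(v)=\langle p,q\rangle/\sigma^2$ is itself correspondingly small, so $\Pr[v]\rho(v)^{\gamma m}$ is negligible; I would absorb the total contribution of such vertices into the additive error $2^{-\gamma m n}$ by a direct worst-case bound using the non-significance condition $\|p\|_2< 2^{-(1-\delta/2)n}$ and the significance bound $\sigma\geq 2^{-(1-\delta/2)n}$. Summing the controlled main term and the additive error over all $v\in V_{t-1}$ then yields $\Phi_t\leq \Phi_{t-1}(1+2^{-2\beta m})+2^{-\gamma m n}$, as required.
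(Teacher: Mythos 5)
Your skeleton matches the paper's: you reduce $\Phi_t$ to an edge average by convexity (the paper's Lemma~\ref{lem:edge-vertex}), localize to a non-significant $v$, feed the reweighted distribution $r=p'q/(|X|\mu)$ (the paper's $\mathbb{P}_f$, with $F=|X|\mu$) into the norm-amplification hypothesis, and dispose of the regime where $\mu$ is tiny by a worst-case bound that becomes the additive $2^{-\gamma mn}$ term. Your exact formula for $\Pr[e_{a,b}]\rho(e_{a,b})^{\gamma m}$ and the identity collapsing the $b^2$ coefficient into $2\binom{\gamma m}{2}\mu^{\gamma m-2}(\nu_a-\mu v_a)^2$ both check out. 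Where you genuinely diverge is the per-vertex growth estimate: the paper never exploits the $b=+1$ versus $b=-1$ cancellation; it simply upper-bounds each edge by $(2c_e)^{-1}(1+|(M\cdot\mathbb{P}_f)(a)|)\langle p,q\rangle$ and then bounds $\E_a[(1+|(M\cdot\mathbb{P}_f)(a)|)^{\gamma m}]\le 1+2^{-\gamma m/2}$ by splitting $a$ at the threshold $|(M\cdot\mathbb{P}_f)(a)|\ge 2^{-\gamma m}$ and applying Markov to $\|M\cdot\mathbb{P}_f\|_2^2$. Your second-order expansion is sharper in principle but, as noted below, ends up needing the same threshold device anyway.

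Two concrete gaps. First, your good-case hypothesis $\|r\|_2\le 2^{-(1-\delta'/2)n}$ does not follow from what you have assembled. One computes $\|r\|_2\le 2^{-\alpha n}\|q\|_2/F$, so even with $F\ge 2^{-n}$ you need an \emph{upper} bound $\|q\|_2=\|\mathbb{P}_{x|s}\|_2\lesssim 2^{-(1-\delta/2)n}$; you only invoke the lower bound coming from significance of $s$. The upper bound is not automatic (significance is a one-sided condition) and is exactly the paper's Lemma~\ref{lem:s-bound}, proved by noting $s$ is the \emph{first} significant vertex on any truncated path, so all its predecessors $v$ satisfy $\|\mathbb{P}_{x|v}\|_2<2^{-(1-\delta/2)n}$ and the edge formula inflates the norm only by $c_e^{-1}\le 4$. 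Without this, your case split "bad $\Leftrightarrow$ $\mu$ tiny" is not equivalent to "bad $\Leftrightarrow$ the amplification bound on $r$ fails," and the bad case is not absorbed. Second, your treatment of the higher-order even terms is not merely "in the same spirit": the expansion of $(1+b\nu_a/\mu)^{\gamma m}$ is only useful when $|\nu_a/\mu|=|(M\cdot\mathbb{P}_f)(a)|$ is small, and this bias is with respect to $\mathbb{P}_f$, not $\mathbb{P}_{x|v}$, so membership in $G_v$ does not control it. A $2^{-2\gamma m}$ fraction of tests can have $|(M\cdot\mathbb{P}_f)(a)|$ as large as $1$, for which $(\mu+b\nu_a)^{\gamma m}$ can be $2^{\gamma m}\mu^{\gamma m}$; these must be hived off by Markov exactly as in the paper, and once you do that the remaining tests already satisfy $(1+|(M\cdot\mathbb{P}_f)(a)|)^{\gamma m}\le(1+2^{-\gamma m})^{\gamma m}\approx 1$, so the odd-term cancellation buys nothing. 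With those two repairs your argument closes and yields the stated recursion.
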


\begin{proof}[Proof of Lemma~\ref{lem:space} from Lemma~\ref{lem:progress}]
By definition of $\Phi_t$ and Lemma~\ref{lem:start} we have
$\Phi_0\le 2^{-\delta \gamma mn}$.
By Lemma~\ref{lem:progress},  for every $t$ with $1\le t\le 2^{\beta m}-1$,
$$\Phi_t \le \sum_{j=0}^{t}  (1+2^{-2\beta m})^j\cdot 2^{-\delta \gamma mn}\\
< (t+1)\cdot (1+2^{-2\beta m})^t \cdot 2^{-\delta \gamma mn}.$$
In particular, for every $t\le 2^{\beta m}-1$,
$$\Phi_t\le 2^{\beta m}\cdot (1+2^{-2\beta m})^{2^{\beta m}}\cdot 2^{-\delta \gamma mn}\le
 e^{1/2^{\beta m}}\cdot 2^{-\delta \gamma mn+\beta m}.$$
Now fix $t^*$ to be the level of the significant node $s$.
Every truncated path that reaches $s$ will have contribution
$(\rho(s))^{\gamma m}=1$ times its probability of occurring to $\Phi_{t^*}$.
Therefore the truncation of a random computation path reaches $s$ with
probability at most $2^{-\eta mn}$ for $\eta=\delta\gamma/2$ and $m,n$
sufficiently large, which proves the lemma.
\end{proof}

We now focus on the proof of Lemma~\ref{lem:progress}.
Because $\Phi_t$ depends on the sub-distribution over $V_t$ and $\Phi_{t-1}$
depends on the sub-distribution over $V_{t-1}$, it is natural to consider
the analogous quantities based on the sub-distribution over the set $E_t$ of
edges that join $V_{t-1}$ and $V_t$.
We can extend the definition of $\rho$ to edges of $B$, where we write
$$\rho(e)=\frac{\langle \mathbb{P}_{x|e},\mathbb{P}_{x|s}\rangle}{\langle \mathbb{P}_{x|s},\mathbb{P}_{x|s}\rangle}.$$
Then define
$$\Phi'_t=\E_{e\sim E_t}[(\rho(e))^{\gamma m}].$$
Intuitively, there is no gain of information in moving from elements $E_t$ to
elements of $V_t$.  More precisely, we have the following lemma:

\begin{lemma}
\label{lem:edge-vertex}
For all $t$, $\Phi_t\le \Phi'_t$.
\end{lemma}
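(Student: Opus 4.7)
The plan is a standard data-processing / Jensen argument. The main point is that $\mathbb{P}_{x|w}$ for $w \in V_t$ is a convex combination of the edge-distributions $\mathbb{P}_{x|e}$ over incoming edges $e \in E_t$, and convex moments can only decrease under such averaging.

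First I would establish the mixture identity: for each $w \in V_t$, writing $I(w) \subseteq E_t$ for the set of incoming edges to $w$,
$$\Pr[w] \;=\; \sum_{e \in I(w)} \Pr[e], \qquad \mathbb{P}_{x|w} \;=\; \sum_{e \in I(w)} \frac{\Pr[e]}{\Pr[w]}\, \mathbb{P}_{x|e}.$$
This follows directly from Definition~\ref{defn:truncation}: the three truncation rules act only at vertices, so a truncated computation path reaches $w$ if and only if it traverses some (necessarily unique) edge in $I(w)$ without being truncated at its tail. The identities are then immediate from the law of total probability and Bayes' rule. Taking the inner product of the second identity with $\mathbb{P}_{x|s}$ and dividing by $\langle \mathbb{P}_{x|s}, \mathbb{P}_{x|s}\rangle$ yields the parallel relation
$$\rho(w) \;=\; \sum_{e \in I(w)} \frac{\Pr[e]}{\Pr[w]}\, \rho(e).$$

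Next, since $\gamma > 0$ is a fixed positive constant and we may assume $m$ is large enough that $\gamma m \geq 1$ (for smaller $m$ the desired bound on space is trivial), the function $x \mapsto x^{\gamma m}$ is convex on $[0,\infty)$, so Jensen's inequality applied to the identity above gives
$$(\rho(w))^{\gamma m} \;\le\; \sum_{e \in I(w)} \frac{\Pr[e]}{\Pr[w]}\, (\rho(e))^{\gamma m}.$$
Multiplying by $\Pr[w]$, summing over $w \in V_t$, and using that the sets $\{I(w)\}_{w \in V_t}$ partition $E_t$, I obtain
$$\Phi_t \;=\; \sum_{w \in V_t} \Pr[w] (\rho(w))^{\gamma m} \;\le\; \sum_{e \in E_t} \Pr[e] (\rho(e))^{\gamma m} \;=\; \Phi'_t,$$
as required. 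The $\bot$ case contributes $0$ to both expectations since $\rho(\bot) = 0$, so it can be ignored throughout.

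The only substantive step is verifying the mixture identity for $\mathbb{P}_{x|w}$ under the truncation rules, which reduces to the observation that every truncation is attached to a vertex rather than to an edge. No property of the matrix $M$ or of its amplification curve enters the argument; the lemma is a purely probabilistic fact about the branching program and the inductive definition of the conditional distributions.
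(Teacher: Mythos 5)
Your proof is correct and follows essentially the same route as the paper's: the mixture identity $\Pr[v]\cdot \mathbb{P}_{x|v}=\sum_{e}\Pr[e]\cdot\mathbb{P}_{x|e}$ over incoming edges, linearity of the inner product with $\mathbb{P}_{x|s}$, convexity of $r\mapsto r^{\gamma m}$, and summation over $V_t$. Your explicit remarks that truncation acts only at vertices, that $\gamma m\ge 1$ is needed for convexity, and that $\bot$ contributes zero are small careful additions but do not change the argument.
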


\begin{proof}
Note that for $v\in V_t$, 
since the truncated paths that follow some edge $(u,v)\in E_t$ are 
precisely those that reach $v$, by definition,
$\Pr[v]=\sum_{(u,v)\in E_t} \Pr[(u,v)]$.
Since the same applies separately to the set of truncated paths for each input
$x'\in X$ that reach $v$, for each $x'\in X$ we have
$$\Pr[v]\cdot \mathbb{P}_{x|v}(x') = \sum_{(u,v)\in E_t} \Pr[(u,v)]\cdot \mathbb{P}_{x|(u,v)}(x').$$
Therefore, 
$$\Pr[v]\cdot\frac{\langle \mathbb{P}_{x|v},\mathbb{P}_{x|s}\rangle}
{ \langle \mathbb{P}_{x|s},\mathbb{P}_{x|s}\rangle}
= \sum_{(u,v)\in E_t} \Pr[(u,v)]\cdot \frac{\langle \mathbb{P}_{x|(u,v)},\mathbb{P}_{x|s}\rangle}
{ \langle \mathbb{P}_{x|s},\mathbb{P}_{x|s}\rangle}
;$$
i.e., $\Pr[v]\cdot \rho(v) =\sum_{(u,v)\in E_t}\Pr[(u,v)]\cdot \rho((u,v))$.
Since $\Pr[v] =\sum_{(u,v)\in E_t}\Pr[(u,v)]$, 
by the convexity of the map $r\mapsto r^{\gamma m}$ we have
$$\Pr[v]\cdot (\rho(v))^{\gamma m}\le \sum_{(u,v)\in E_t}\Pr[(u,v)]\cdot (\rho((u,v))^{\gamma m}.$$
Therefore
$$\Phi_t=\sum_{v\in V_t} \Pr[v]\cdot (\rho(v))^{\gamma m} \le
\sum_{v\in V_t}\sum_{(u,v)\in E_t}\Pr[(u,v)]\cdot (\rho((u,v)))^{\gamma m}=
\sum_{e\in E_t}\Pr[e]\cdot (\rho(e))^{\gamma m}=\Phi'_t.$$
\end{proof}

Therefore, to prove Lemma~\ref{lem:progress} it suffices to prove that the
same statement holds with $\Phi_t$ replaced by $\Phi'_t$; that is,
$$\E_{e\in E_t} [(\rho(e))^{\gamma m}]\le (1+2^{-2\beta m})\cdot \E_{v\in V_{t-1}}[(\rho(v))^{\gamma m}]+2^{-\gamma mn}$$
$E_t$ is the disjoint union of the out-edges $\Gamma_{out}(v)$ for vertices
$v\in V_{t-1}$, so
it suffices to show that for each $v\in V_{t-1}$,
\begin{equation}
\label{growth}
\sum_{e\in \Gamma_{out}(v)} \Pr[e]\cdot (\rho(e))^{\gamma m}
\le (1+2^{-2\beta m})\cdot \Pr[v]\cdot (\rho(v))^{\gamma m}+2^{-\gamma mn}\cdot \Pr[v].
\end{equation}
Since any truncated path that follows $e$ must also visit $v$, we can write
$\Pr[e|v]=\Pr[e]/\Pr[v]$.
Moreover, both $\rho(v)$ and $\rho(e)$ have the same denominator
$\langle \mathbb{P}_{x|s},\mathbb{P}_{x|s}\rangle$ and
therefore, by definition,
inequality \eqref{growth},
and hence Lemma~\ref{lem:progress}, follows from the following lemma.

\begin{lemma}
\label{lem:vertex-edge}
For $v\in V_{t-1}$,
$$\sum_{e\in \Gamma_{out}(v)} \Pr[e|v]\cdot 
\langle \mathbb{P}_{x|e},\mathbb{P}_{x|s}\rangle^{\gamma m}
\le (1+2^{-2\beta m})\cdot 
\langle \mathbb{P}_{x|v},\mathbb{P}_{x|s}\rangle^{\gamma m} +2^{-\gamma mn}.$$
\end{lemma}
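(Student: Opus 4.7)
The plan is to follow the progress-tracking strategy of~\cite{DBLP:journals/eccc/Raz17}, replacing uses of $\|M\|_2$ by appeals to the norm-amplification curve $\tau_M$. First, I would dispose of significant $v$ directly: no truncated computation path extends beyond a significant $v$, so every $\Pr[e\mid v]=0$ and the inequality is trivial. For non-significant $v$, I would set $P=\mathbb{P}_{x\mid v}$, $Q=\mathbb{P}_{x\mid s}$, and decompose $P$ via the high-probability truncation: let $\tilde P(x)=P(x)\mathbf{1}[x\notin\High(v)]$, $\tilde p=\|\tilde P\|_1\ge 1-2^{-\delta n}$ (by Lemma~\ref{lem:highprob}), and $\bar P=\tilde P/\tilde p$, with $\|\bar P\|_\infty\le 2\cdot 2^{-\alpha n}$. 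Writing $\bar\mu=\langle \bar P,Q\rangle$, $\bar\beta_a=(M\bar P)(a)$, and $\bar\nu_a=\langle \bar P\cdot M(a,\cdot),Q\rangle$, we have $|\bar\nu_a|\le\bar\mu$ pointwise and $|\bar\beta_a|\le 2^{-\gamma m+2}$ for every non-truncated $a$ (Lemma~\ref{lem:bias} together with the small-$\tilde p$ correction). A direct unrolling of the dynamics at $v$ then gives, for each non-truncated edge $e=(v,w)$ with label $(a,b)$,
\[
\Pr[e\mid v]=\tfrac{\tilde p(1+b\bar\beta_a)}{2|A|},\qquad \langle \mathbb{P}_{x\mid e},Q\rangle=\tfrac{\bar\mu+b\bar\nu_a}{1+b\bar\beta_a}.
\]

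Next, I would set $k=\gamma m$, $u_a=\bar\nu_a/\bar\mu$, $r_a=\bar\beta_a$, so the two out-edges at a non-truncated $a$ contribute $(\tilde p\bar\mu^k/(2|A|))F(u_a,r_a)$ with $F(u,r)=(1+u)^k/(1+r)^{k-1}+(1-u)^k/(1-r)^{k-1}$. The $b$-odd terms of the binomial expansion cancel, yielding $F(u_a,r_a)\le(2+4k^2u_a^2)(1+O(k|r_a|))$ when $|u_a|\le 1/k$ (using $(1+u)^k+(1-u)^k\le 2+4k^2u^2$ and $(1\pm r_a)^{-(k-1)}=1+O(k|r_a|)$ since $k|r_a|\ll 1$), with the trivial bound $F(u_a,r_a)\le 2^{k+2}$ in general. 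The key estimate is on $\mathbb{E}_a u_a^2$: since $\bar\nu_a=(M\cdot R)(a)$ for $R(x)=\bar P(x)Q(x)/|X|$, we have $\mathbb{E}_a\bar\nu_a^2=\|MR\|_2^2=\bar\mu^2\|M(R/\bar\mu)\|_2^2$. Provided $\bar\mu$ exceeds the threshold $\bar\mu^*:=2\cdot 2^{-(3/2+\delta'/6)n}$, the probability distribution $R/\bar\mu$ satisfies $\|R/\bar\mu\|_2\le\|\bar P\|_\infty\|Q\|_2/(\bar\mu|X|)\le 2^{-(1-\delta'/2)n}$, and the norm-amplification curve gives $\|M(R/\bar\mu)\|_2\le 2^{\tau_M(\delta')m}=2^{-2\gamma m}$, hence $\mathbb{E}_a u_a^2\le 2^{-4\gamma m}$. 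A Markov split at $|u_a|=1/k$ then delivers $\mathbb{E}_a F(u_a,r_a)\le 2(1+2^{-2\beta m})$. Multiplying by $\tilde p\bar\mu^k$ and using $\tilde p\bar\mu^k\le(1+2^{-2\beta m})\langle P,Q\rangle^k$ (from $\tilde p\bar\mu\le\langle P,Q\rangle$ and $k\cdot 2^{-\delta n}\ll 2^{-2\beta m}$ since $m\le n$ and $\beta\le\delta/8$) produces the multiplicative term of the lemma, up to constants absorbable by slightly shrinking $\beta$.

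In the complementary regime $\bar\mu<\bar\mu^*$ the norm-amplification hypothesis would fail, but the trivial bound $\langle \mathbb{P}_{x\mid e},Q\rangle\le\|Q\|_\infty/|X|\le 2^{-n}$ forces $\sum_e\Pr[e\mid v]\langle \mathbb{P}_{x\mid e},Q\rangle^k\le 2^{-kn}=2^{-\gamma mn}$, exactly matched by the additive slack. The main obstacle I expect is the coordination of the five constants $\delta,\alpha,\gamma,\beta,\delta'$ so this dichotomy is clean: the threshold $\bar\mu^*$ must simultaneously be large enough to license the $\tau_M$ hypothesis on $R/\bar\mu$, small enough that the pointwise bound dominates below it, and compatible with absorbing the Taylor-expansion error of $F$ into $(1+2^{-2\beta m})$. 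The specific choices $\delta=\delta'/6$, $\alpha=1-2\delta$, $\gamma=-\tau_M(\delta')/2$, $\beta=\min(\gamma,\delta)/8$ in Section~4 are calibrated precisely for this purpose; carefully propagating them through the binomial-expansion bound and the $\bar\mu\to\langle P,Q\rangle$ conversion is the delicate book-keeping step.
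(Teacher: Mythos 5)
Your proposal is correct and follows essentially the same route as the paper: reduce to non-significant $v$, introduce the renormalized product distribution supported off $\High(v)$ (your $R/\bar\mu$ is exactly the paper's $\mathbb{P}_f$), bound its $2$-norm via the $\High(v)$ cutoff together with Lemma~\ref{lem:s-bound}, apply the amplification bound $\tau_M(\delta')$ to it, and split on whether the product mass is large enough, with the low-mass case absorbed into the additive $2^{-\gamma mn}$. The only real difference is in the final elementary estimate: you pair the two $b=\pm1$ edges at each $a$ and cancel the odd binomial terms to reduce to the second moment $\E_a u_a^2$, whereas the paper bounds each edge separately by $(2c_e)^{-1}\bigl(1+|(M\cdot\mathbb{P}_f)(a)|\bigr)\langle \mathbb{P}_{x|v},\mathbb{P}_{x|s}\rangle$ and controls $\E_{a}\bigl[(1+|(M\cdot\mathbb{P}_f)(a)|)^{\gamma m}\bigr]$ by the same Markov argument.
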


Before we prove Lemma~\ref{lem:vertex-edge},
following~\cite{DBLP:journals/eccc/Raz17}, we first prove
two technical lemmas, the first relating the distributions for
$v\in V_{t-1}$ and edges $e\in E_t$ and the second upper bounding
$\|\mathbb{P}_{x|s}\|_2$.

\begin{lemma}
\label{lem:technical-v-e}
Suppose that $v\in V_{t-1}$ is not significant and $e=(v,w)\in E_t$ has
$\Pr[e]>0$ and label $(a,b)$.
Then for $x'\in X$, $\mathbb{P}_{x|e}(x')>0$ only if $x'\notin \High(v)$ and
$M(a,x')=b$, in which case
$$\mathbb{P}_{x|e}(x')= c_e^{-1}\cdot \mathbb{P}_{x|v}(x')$$
where $c_e\ge \frac{1}{2} - 2^{-\gamma m-1}-2^{-\delta n}$.
\end{lemma}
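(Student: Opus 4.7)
The plan is to unpack the $(\delta,\alpha,\gamma)$-truncation procedure from Definition~\ref{defn:truncation} and apply Bayes' rule. First I would identify the three conditions that characterize a surviving truncated path on input $x'$ traversing the edge $e=(v,w)$ with label $(a,b)$: the path must (i) survive up to $v$; (ii) not be truncated at $v$ by the high-probability rule, i.e.\ $x'\notin\High(v)$; and (iii) have the step-$t$ sample equal $(a,b)$, which happens precisely when the uniformly random query lands on $a$ (probability $1/|A|$) and $M(a,x')=b$. I would also observe that the hypothesis $\Pr[e]>0$ forces $e$ not to be a high-bias edge --- otherwise the truncation rule would kill every surviving path following $e$ at $v$, forcing $\Pr[e]=0$. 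In particular $|(M\cdot\mathbb{P}_{x|v})(a)|<2^{-\gamma m}$.

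Applying Bayes' rule relative to $\mathbb{P}_{x|v}$ then yields $\mathbb{P}_{x|e}(x')=0$ unless $x'\notin\High(v)$ and $M(a,x')=b$, in which case
$$\mathbb{P}_{x|e}(x') = \frac{\mathbb{P}_{x|v}(x')}{c_e}, \qquad c_e \;=\; \Pr_{x''\sim \mathbb{P}_{x|v}}\!\left[\, x''\notin\High(v)\text{ and }M(a,x'')=b\,\right].$$
This disposes of the structural part of the claim and isolates the normalizer to be bounded below.

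For the bound on $c_e$, I would first drop the $\High(v)$ constraint at the cost of subtracting $\Pr_{x''\sim\mathbb{P}_{x|v}}[x''\in\High(v)]\le 2^{-\delta n}$ (by Lemma~\ref{lem:highprob}), reducing the task to lower-bounding $\Pr_{x''\sim\mathbb{P}_{x|v}}[M(a,x'')=b]$. Using the identity $(M\cdot\mathbb{P}_{x|v})(a)=\E_{x''\sim\mathbb{P}_{x|v}}[M(a,x'')]$, this probability equals $\frac{1}{2}(1+b\cdot(M\cdot\mathbb{P}_{x|v})(a))$, which is at least $\frac{1}{2}-\frac{1}{2}|(M\cdot\mathbb{P}_{x|v})(a)|\ge \frac{1}{2}-2^{-\gamma m-1}$ by the non-high-bias bound from step one. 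Combining gives $c_e\ge \frac{1}{2}-2^{-\gamma m-1}-2^{-\delta n}$, as required. There is no serious obstacle here; the one subtlety is bookkeeping --- in particular recognizing that $\Pr[e]>0$ by itself supplies the bound on $|(M\cdot\mathbb{P}_{x|v})(a)|$ needed to control the non-$\High(v)$ contribution to $c_e$, and being careful that all probabilities are conditioned on surviving truncation through $v$ rather than on reaching $v$ at all.
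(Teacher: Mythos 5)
Your proposal is correct and follows essentially the same route as the paper's proof: both use the fact that $\Pr[e]>0$ rules out the high-bias truncation to get $|(M\cdot\mathbb{P}_{x|v})(a)|<2^{-\gamma m}$, define $c_e$ as the $\mathbb{P}_{x|v}$-probability of the event $\{M(a,x')=b\text{ and }x'\notin\High(v)\}$ so that the uniform $2^{-m}$ chance of querying $a$ cancels in the Bayes computation, and then lower-bound $c_e$ via the indicator identity $\mathbf{1}_{M(a,x')=b}=(1+b\cdot M(a,x'))/2$ together with Lemma~\ref{lem:highprob}. No gaps.
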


\begin{proof}
If $|(M\cdot \mathbb{P}_{x|v})(a)|\ge 2^{-\gamma m}$ then by
definition of truncation we also will have $\Pr[e]=0$.
Therefore, since $\Pr[e]>0$, $e$ is not a high bias edge -- that is,
$|(M\cdot \mathbb{P}_{x|v})(a)|< 2^{-\gamma m}$ -- and hence
$$\Pr_{x'\sim \mathbb{P}_{x|v}}[M(a,x')=b]> \frac{1}{2}(1-2^{-\gamma m}).$$
Let $\mathcal{E}_e(x')$ be the event that both $M(a,x')=b$ and $x'\notin \High(v)$
and define 
$$c_e=\Pr_{x'\sim \mathbb{P}_{x|v}}[\mathcal{E}_e(x')].$$
If $\mathcal{E}_e(x')$ fails to hold for all $x'$, i.e.,  $x'\in \High(v)$ or $M(a,x')\ne b$,
then any truncated path on input $x'$ that reaches $v$ will not continue along
$e$ and hence $\Pr[e]=0$.
On the other hand, since $\Pr[e]>0$, if $\mathcal{E}_e(x')$ holds for some $x'$ then any
truncated path on input $x'$ that reaches $v$ will continue precisely if the
test chosen at $v$ is $a$, which happens with probability $2^{-m}$ for each
such $x'$.  The total probability over $x'\in X$, conditioned that the
truncated path on $x'$ reaches $v$, that the path continues along $e$ is then
$2^{-m}\cdot c_e$.
Therefore, if $x'\in \mathcal{E}_e$ then
$\mathbb{P}_{x|e}(x')=\frac{2^{-m}\cdot\mathbb{P}_{x|v}(x')}{2^{-m}\cdot c_e}
=c_e^{-1}\cdot \mathbb{P}_{x|v}(x')$.
Now by Lemma~\ref{lem:highprob},
$$\Pr_{x'\sim \mathbb{P}_{x|v}}[x'\in \High(v)]\le 2^{-\delta n}$$
and so 
$$c_e=\Pr_{x'\sim \mathbb{P}_{x|v}}[M(a,x')=b\mbox{ and }x'\notin \High(v)]> \frac{1}{2}-2^{-\gamma m-1}-2^{-\delta n}$$
as required.
\end{proof}

We use this lemma together with an argument similar to that of
Lemma~\ref{lem:edge-vertex} to upper bound $\|\mathbb{P}_{x|s}\|_2$ for our
significant vertex $s$.

\begin{lemma}
\label{lem:s-bound}
$\|\mathbb{P}_{x|s}\|_2\le 4\cdot 2^{-(1-\delta/2)n}$.
\end{lemma}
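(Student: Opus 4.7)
The plan is to combine Lemma~\ref{lem:technical-v-e} with the observation that the distribution at $s$ is inherited from a non-significant predecessor by a single conditioning step, whose effect on the 2-norm is at most a constant factor. The key point is that because truncation kills all paths at a significant vertex, every incoming edge $e=(v,s)$ with $\Pr[e]>0$ must have $v$ non-significant.

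First, since the root $v_0$ satisfies $\|\mathbb{P}_{x|v_0}\|_2=2^{-n}$ and is therefore not significant, we have $s\ne v_0$, so $s$ has at least one incoming edge with positive probability, and
\[
\mathbb{P}_{x|s}\;=\;\sum_{e=(v,s)\,:\,\Pr[e]>0}\frac{\Pr[e]}{\Pr[s]}\;\mathbb{P}_{x|e}
\]
is a convex combination. By the triangle inequality for the expectation $2$-norm this yields $\|\mathbb{P}_{x|s}\|_2\le\max_{e=(v,s)}\|\mathbb{P}_{x|e}\|_2$, so it suffices to bound $\|\mathbb{P}_{x|e}\|_2$ for each surviving incoming edge.

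For any such $e=(v,s)$ the parent $v$ is non-significant (and $e$ is not a high-bias edge, otherwise $\Pr[e]=0$), so Lemma~\ref{lem:technical-v-e} applies and gives $\mathbb{P}_{x|e}(x')\le c_e^{-1}\,\mathbb{P}_{x|v}(x')$ pointwise, with $c_e\ge \tfrac{1}{2}-2^{-\gamma m-1}-2^{-\delta n}$. Squaring and averaging over $x'\in X$ gives $\|\mathbb{P}_{x|e}\|_2\le c_e^{-1}\|\mathbb{P}_{x|v}\|_2<c_e^{-1}\cdot 2^{-(1-\delta/2)n}$, where the strict inequality uses non-significance of $v$. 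For $m,n$ large enough that $2^{-\gamma m-1}+2^{-\delta n}\le 1/4$ (which holds for all sufficiently large $m\le n$ since $\gamma,\delta$ are positive constants) we get $c_e\ge 1/4$ and hence $c_e^{-1}\le 4$, yielding $\|\mathbb{P}_{x|s}\|_2\le 4\cdot 2^{-(1-\delta/2)n}$.

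The only step that requires any care is the first: recognizing that edges from significant predecessors drop out of the convex combination, so the $2$-norm at $s$ really is controlled by one conditioning step from a non-significant distribution. Once this is in hand, the bound is an immediate consequence of Lemma~\ref{lem:technical-v-e} and the fact that $c_e$ is bounded away from $0$.
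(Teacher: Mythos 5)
Your proposal is correct and follows essentially the same route as the paper: decompose $\mathbb{P}_{x|s}$ over the incoming edges $e=(v,s)$ with $\Pr[e]>0$, note each such $v$ is non-significant so Lemma~\ref{lem:technical-v-e} gives $\|\mathbb{P}_{x|e}\|_2\le c_e^{-1}\|\mathbb{P}_{x|v}\|_2\le 4\cdot 2^{-(1-\delta/2)n}$, and combine. The only cosmetic difference is that you average the norms via the triangle inequality while the paper averages the squared norms via convexity of $r\mapsto r^2$; both yield the same bound.
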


\begin{proof}
The main observation is that $s$ is the first significant vertex of any
truncated path that reaches it and so the probability distributions of each of
the immediate predecessors $v$ of $s$ must have bounded expectation $2$-norm
and, by Lemma~\ref{lem:technical-v-e} and the proof idea from
Lemma~\ref{lem:edge-vertex}, the $2$-norm of the distribution at $s$
cannot grow too much larger than those at its immediate predecessors.

By Lemma~\ref{lem:technical-v-e}, if $e=(v,s)$ and $\Pr[e]>0$, then
$$\|\mathbb{P}_{x|e}\|_2\le c_e^{-1}\cdot \|\mathbb{P}_{x|v}\|\le
c_e^{-1} 2^{-(1-\delta/2)n}\le 4\cdot 2^{-(1-\delta/2)n}$$
since $v$ is not significant and 
$c_e\ge \frac{1}{2} - 2^{-\gamma m-1}-2^{-\delta n}> \frac{1}{4}$ for $m$
(and hence $n$) sufficiently large.
Let $\Gamma_{in}(s)$ be the set of edges $(v,s)$ in $B$.
$\Pr[s]=\sum_{e=(v,s)\in \Gamma_{in}(s)} \Pr[e]$ and for each
$x'\in X$,
$$\Pr[s]\cdot \mathbb{P}_{x|s}(x') = \sum_{e=(v,s)\in \Gamma_{in}(s)} \Pr[e]\cdot \mathbb{P}_{x|e}(x').$$
Since 
$\Pr[s]=\sum_{e=(v,s)\in \Gamma_{in}(s)} \Pr[e]$, by convexity of the
map $r\mapsto r^2$, we have
$$\Pr[s]\cdot (\mathbb{P}_{x|s}(x'))^2 = \sum_{e=(v,s)\in \Gamma_{in}(s)} \Pr[e]\cdot (\mathbb{P}_{x|e}(x'))^2.$$
Summing over $x'\in X$ we have 
$$\Pr[s]\cdot \|\mathbb{P}_{x|s}\|_2^2\le 
\sum_{e=(v,s)\in \Gamma_{in}(s)} \Pr[e]\cdot \|\mathbb{P}_{x|e}\|_2^2
\le \sum_{e=(v,s)\in \Gamma_{in}(s)} \Pr[e]\cdot (4\cdot 2^{-(1-\delta/2)n})^2
= \Pr[s]\cdot (4\cdot 2^{-(1-\delta/2)n})^2.$$
Therefore $\|\mathbb{P}_{x|s}\|\le 4\cdot 2^{-(1-\delta/2)n}$ as
required since $\Pr[s]>0$.
\end{proof}

To complete the proof of Lemma~\ref{lem:progress}, and hence
Lemma~\ref{lem:space}, it only remains to prove Lemma~\ref{lem:vertex-edge}.

\subsubsection{Proof of Lemma~\ref{lem:vertex-edge}}
Since we know that if $v\in V_{t-1}$ is significant then any edge
$e\in \Gamma_{out}(v)$ has $\Pr[e]=0$, we can assume without loss of
generality that $v$ is not significant.   

Define $g:X\rightarrow \mathbb{R}$ by 
$$g(x')=\mathbb{P}_{x|v}(x')\cdot \mathbb{P}_{x|s}(x')$$ and note that
$\langle \mathbb{P}_{x|v},\mathbb{P}_{x|s} \rangle=\E_{x'\in X}[g(x')]$.
For $x'\in X$ define
$$f(x')=\begin{cases}g(x')&x'\notin \High(v)\\0&\mbox{otherwise}\end{cases}$$
and  let $F=\sum_{x'\in X} f(x')$.  
For every edge $e$ where  $\langle \mathbb{P}_{x|e},\mathbb{P}_{x|s}\rangle>0$, we have $F>0$.

\begin{sloppypar}
The function $f$ induces a new probability distribution on $X$,
$\mathbb{P}_f$, given by
$\mathbb{P}_f(x')=f(x')/\sum_{x\in X} f(x)=f(x')/F$ in which each point
$x'\in X\setminus\High(v)$ is
chosen with probability proportional to its contribution to
$\langle \mathbb{P}_{x|v},\mathbb{P}_{x|s}\rangle$ and each $x'\in \High(v)$
has probability 0.
\end{sloppypar}

\medskip
{\sc Claim:} Let $(a,b)$ be the label on an edge $e$, then $$\langle \mathbb{P}_{x|e},\mathbb{P}_{x|s}\rangle
\le (2c_e)^{-1} (1+|(M\cdot \mathbb{P}_f)(a)|)\cdot F/2^n
\le (2c_e)^{-1} (1+|(M\cdot \mathbb{P}_f)(a)|)\cdot \langle \mathbb{P}_{x|v},\mathbb{P}_{x|s}\rangle$$ where $c_e$ is given by Lemma~\ref{lem:technical-v-e}.

We first prove the claim.
By Lemma~\ref{lem:technical-v-e} and the definition of $f$, 
$$\mathbb{P}_{x|e}(x')\cdot \mathbb{P}_{x|s}(x')=\begin{cases}c_e^{-1} \cdot f(x')&\mbox{if }M(a,x')=b\\ 0&\mbox{otherwise.}\end{cases}$$
Therefore
\begin{align*}
\langle \mathbb{P}_{x|e},\mathbb{P}_{x|s}\rangle
&=\E_{x'\in_R X}[ \mathbb{P}_{x|e}(x')\cdot \mathbb{P}_{x|s}(x')]\\
&=\E_{x'\in_R X}[ c_e^{-1} f(x')\cdot \mathbf{1}_{M(a,x')=b}]\\
&=\E_{x'\in_R X}[ c_e^{-1} f(x')\cdot (1+b\cdot M(a,x'))/2]\\
&= (2c_e)^{-1}\cdot \left (\E_{x'\in_R X}[ f(x')] +b\cdot \E_{x'\in_R X}[ M(a,x')\cdot f(x')]\right)\\
&\le (2c_e)^{-1}\cdot \left (\E_{x'\in_R X}[ f(x')] +\left|\E_{x'\in_R X}[ M(a,x')\cdot f(x')]\right|\right)\\
&= (2c_e)^{-1}\cdot 2^{-n}\cdot F\cdot \left(1+\frac{\left|\E_{x'\in_R X}[ M(a,x')\cdot f(x')]\right|}{F}]\right)\\
&= (2c_e)^{-1}\cdot 2^{-n}\cdot F\cdot (1+|(M\cdot \mathbb{P}_f)(a)|)\\
&\leq (2c_e)^{-1}\cdot (1+|(M\cdot \mathbb{P}_f)(a)|)\cdot 
\langle \mathbb{P}_{x|v},\mathbb{P}_{x|s}\rangle
\end{align*}
since $2^{-n}\cdot F=\E_{x'\in_R X}[f(x')]\le \E_{x'\in_R X}[g(x')]=
\langle \mathbb{P}_{x|v},\mathbb{P}_{x|s}\rangle$, which proves the claim.

By Lemma~\ref{lem:technical-v-e}, $2c_e\ge 1-2^{-\gamma m}-2^{1-\delta n}$
and so $(2c_e)^{-1}\le 1+2^{-\sigma m}\le 2$ for $\sigma=\min(\gamma,\delta)/2$
since $m\le n$ for $m$ sufficiently large.
We consider two cases:

\medskip
\noindent
{\sc Case $F\le 2^{-n}$:}  In this case, since $\mathbb{P}_f$ is a probability
distribution, for every $a$, $|(M\cdot \mathbb{P}_f)(a)|\le \max_{x'\in X} |M(a,x')|=1$
and from the claim we obtain for every edge $e\in \Gamma_{out}(v)$,
$\langle \mathbb{P}_{x|e},\mathbb{P}_{x|s}\rangle\le 2\cdot (2c_e)^{-1} \cdot 2^{-2n}$.
Therefore 
$\sum_{e\in \Gamma_{out}(v)} \Pr[e|v]\cdot 
\langle \mathbb{P}_{x|e},\mathbb{P}_{x|s}\rangle^{\gamma m}$
is at most  $[4\cdot 2^{-2n}]^{\gamma m}\le 2^{-\gamma mn}$
for $n\ge 2$.

\medskip
\noindent
{\sc Case $F\ge 2^{-n}$:}  In this case we will show that $\|\mathbb{P}_f\|_2$
is not too large and use this together with the bound on the 2-norm
amplification curve of $M$
to show that $\|M\cdot \mathbb{P}_f\|_2$ is small.  This will be important
because of the following connection:

By the Claim, we have
\begin{equation}
\sum_{e\in \Gamma_{out}(v)} \Pr[e|v]\cdot 
\langle \mathbb{P}_{x|e},\mathbb{P}_{x|s}\rangle^{\gamma m}
\le \sum_{e\in \Gamma_{out}(v)} \Pr[e|v] 
[(2c_e)^{-1} (1+|(M\cdot \mathbb{P}_f)(a_e)|)]^{\gamma m}\cdot \langle \mathbb{P}_{x|v},\mathbb{P}_{x|s}\rangle^{\gamma m} \label{eq:key}
\end{equation}
where $a_e$ is the test labelling edge $e$.
By definition, for each $a\in A$ there are precisely two edges
$e,e'\in \Gamma_{out}(v)$ with $a_{e}=a_{e'}=a$ and
$\Pr[e|v]+\Pr[e'|v]\le 1/|A|$ since the next test is chosen uniformly at random
from $A$. 
(It would be equality but some tests $a$ have high bias and in that
case $\Pr[e|v]=\Pr[e'|v]=0$.)
Previously, we also observed that $(2c_e)^{-1}\le 1+2^{-\sigma m}$ where
$\sigma=\min(\gamma,\delta)/2$.
Therefore,
\begin{align*}
\sum_{e\in \Gamma_{out}(v)} \Pr[e|v]\cdot 
\langle \mathbb{P}_{x|e},\mathbb{P}_{x|s}\rangle^{\gamma m}
&\le \sum_{a\in A} \frac{1}{|A|}
[(1+2^{-\sigma m})\cdot (1+|(M\cdot \mathbb{P}_f)(a)|)]^{\gamma m}\cdot \langle \mathbb{P}_{x|v},\mathbb{P}_{x|s}\rangle^{\gamma m}\\
&= (1+2^{-\sigma m})^{\gamma m}\cdot 
\E_{a\in_R A} [(1+|(M\cdot \mathbb{P}_f)(a)|)^{\gamma m}]\cdot 
\langle \mathbb{P}_{x|v},\mathbb{P}_{x|s}\rangle^{\gamma m}.
\end{align*}
To prove the lemma we therefore need to bound
$\E_{a\in_R A} [(1+|(M\cdot \mathbb{P}_f)(a)|)^{\gamma m}]$.
We will bound this by first analyzing $\|M\cdot \mathbb{P}_f\|_2$.

By definition, 
$$\|f\|^2_2=\E_{x'\in_R X} \1_{x'\notin \High(v)}\cdot \mathbb{P}^2_{x|v}(x')\cdot
\mathbb{P}^2_{x|s}(x')
\le 2^{-2\alpha n}\cdot \E_{x'\in_R X} \mathbb{P}^2_{x|s}(x') =
2^{-2\alpha n}\cdot \|\mathbb{P}_{x|s}\|^2_2.$$
Therefore, by Lemma~\ref{lem:s-bound}, and the fact that $F\geq 2^{-n}$,
$$\|\mathbb{P}_f\|_2=\frac{\|f\|_2}{F}\le \frac{2^{-\alpha n}\cdot \|\mathbb{P}_{x|s}\|_2}{2^{-n}}
\le 2^{(1-\alpha) n}\cdot 4\cdot 2^{-(1-\delta/2)n}= 2^{(1-\alpha+\delta/2+2/n)n}\cdot 2^{-n}.$$
Since, for sufficiently large $n$, 
$$1-\alpha+\delta/2+2/n = 2\delta + \delta/2 +2/n \leq 3\delta = \delta'/2,$$  
we have $\|\mathbb{P}_f\|_2 \leq 2^{-(1-\delta'/2)n}$. So, $\|M\cdot \mathbb{P}_f\|_2\le 2^{\tau_M(\delta')m}=2^{-2\gamma m}$.
Thus $\E_{a\in_R A}[|(M\cdot \mathbb{P}_{f})(a)|^2]=\|M\cdot \mathbb{P}_{f}\|_2^2\le 2^{-4\gamma m}$. 
So, by Markov's inequality, 
$$\Pr_{a\in_R A}[|(M\cdot \mathbb{P}_{f})(a)|\ge 2^{-\gamma m}]
=\Pr_{a\in_R A}[|(M\cdot \mathbb{P}_{f})(a)|^2\ge 2^{-2\gamma m}]
\le 2^{-2\gamma m}.$$
Therefore, since we always have $|(M\cdot \mathbb{P}_f)(a)|\le 1$, 
\begin{align*}
\E_{a\in_R A} [(1+|(M\cdot \mathbb{P}_f)(a)|)^{\gamma m}]
&\le \E_{a\in_R A} [\1_{|(M\cdot \mathbb{P}_f)(a)|\le 2^{-\gamma m}}\cdot (1+2^{-\gamma m})^{\gamma m}]
+\E_{a\in_R A} [\1_{|(M\cdot \mathbb{P}_f)(a)>2^{-\gamma m}} \cdot
2^{\gamma m}]\\
&\le (1+2^{-\gamma m})^{\gamma m}+ 2^{-2\gamma m}\cdot 2^{\gamma m}\\
&\le (1+2^{-\gamma m})^{\gamma m}+ 2^{-\gamma m}\\
&\le 1+2^{-\gamma m/2}
\end{align*}
for $m$ sufficiently large.
Therefore, 
the total factor increase over $\langle \mathbb{P}_{x|v},\mathbb{P}_{x|s}\rangle^{\gamma m}$ is at most
$(1+2^{-\sigma m})^{\gamma m}\cdot (1+2^{-\gamma m/2})$ where
$\sigma=\min(\gamma,\delta)/2$.
Therefore, for sufficiently large $m$ this is at most
$1+2^{-\min(\gamma,\delta)m/4}$.
Since $\beta\le \min(\gamma,\delta)/8$ this is at most $1+2^{-2\beta m}$ as required
to prove Lemma~\ref{lem:vertex-edge}.

\section{An SDP Relaxation for Norm Amplification on the Positive Orthant}
\label{sec:sdp}

For a matrix $M$, 
$$\tau_M(\delta)=\sup_{\substack{\mathbb{P}\in \Delta_X\\ \|\mathbb{P}\|_2\le 1/|X|^{1-\delta/2}}} \log_{|A|} (\|M\cdot \mathbb{P}\|_2).$$
That is, $\tau_M(\delta)=\frac12 \log_{|A|} OPT_{M,\delta}$ where $OPT_{M,\delta}$ is the
optimum of the following quadratic program:
\begin{equation}
\begin{aligned}\label{original}
\text{Maximize}  &\quad \|M\cdot \mathbb{P}\|_2^2=\langle M\cdot \mathbb{P},M\cdot \mathbb{P}\rangle,& \\
\text{subject to:}&&\\
&\quad \sum_{i\in X} \mathbb{P}_i=1,&\\
&\quad \sum_{i\in X} \mathbb{P}_i^2\le |X|^{\delta-1},&\\
&\quad \mathbb{P}_i\geq 0 &\quad \text{for all }i\in X.
\end{aligned}
\end{equation}
Instead of attempting to solve \eqref{original}, presumably a difficult
quadratic program, we consider the following semidefinite program (SDP):
\begin{equation} \label{SDP-relaxation}
\begin{aligned}
\text{Maximize}&\quad \langle M^T M,U\rangle &\\
\text{subject to:} &&\\
[V]&\quad U\succeq 0,&\\
[w]&\quad \sum_{i,j\in X} U_{ij}=1,  &\\
[z]&\quad \sum_{i\in X} U_{ii}\le |X|^{\delta-1},&\\
&\quad U_{ij}\geq 0 &\quad\text{for all } i,j\in X.
\end{aligned}
\end{equation}
Note that for any $\mathbb{P}\in \Delta_X$ achieving the optimum value
of \eqref{original} the positive semidefinite matrix
$U=\mathbb{P}\cdot \mathbb{P}^T$ 
has the same value in \eqref{SDP-relaxation}, and hence \eqref{SDP-relaxation}
is an SDP relaxation of \eqref{original}.
In order to upper bound the value of \eqref{SDP-relaxation},
we consider its dual program:
\begin{equation} \label{dual-SDP}
\begin{aligned}
\text{Minimize}  &\quad w+z\cdot |X|^{\delta-1} &\\
\text{subject to:}&&\\
[U]&\quad V\succeq 0,&\\
[U_{ii}] &\quad w+z\geq V_{ii}+(M^T M)_{ii}/2^m,&\quad\text{for all }i\in X\\
[U_{ij}] &\quad w\geq V_{ij}+(M^T M)_{ij}/2^m,&\quad\text{for all }i\ne j \in X\\
&\quad z\ge 0
\end{aligned}
\end{equation}
or equivalently,
\begin{equation} \label{dual-SDP1}
\begin{aligned}
\text{Minimize}  &\quad w+z\cdot |X|^{\delta}\cdot |X|^{-1} &\\
\text{subject to:}&&\\
&\quad V\succeq 0,&\\
&\quad z I + w J\geq V+ M^T M/2^m,\\
&\quad z\ge 0.
\end{aligned}
\end{equation}
where $I$ is the identity matrix and $J$ is the all 1's matrix over $X\times X$.

Any dual solution of \eqref{dual-SDP1} yields an upper bound on the optimum
of \eqref{SDP-relaxation} and hence $OPT_{M,\delta}$ and $\tau_M(\delta)$.
To simplify the complexity of analysis we restrict ourselves to considering
semidefinite matrices $V$ that are suitably chosen Laplacian matrices.
For any set $S$ in $X\times X$ and any $\alpha:S\rightarrow \mathbb{R}_+$
the Laplacian matrix associated with $S$ and $\alpha$
is defined by $L_{(S,\alpha)}:=\sum_{(i,j)\in S}\alpha(i,j) L_{ij}$ where
$L_{ij}=(e_i-e_j)(e_i-e_j)^T$ for the standard basis $\{e_i\}_{i\in X}$ .
Intuitively, in the dual SDP \eqref{dual-SDP1}, by adding matrix $V=L_{S,\alpha}$ for
suitable $S$ and $\alpha$ depending on $M$ we can shift weight from the
off-diagonal
entries of $M^T M$ to the diagonal where they can be covered by the
$z+w$ entries on the diagonal rather than being covered by the $w$ values in
the off-diagonal entries.  This will be advantageous for us since the
objective function has much smaller coefficient for $z$ which helps cover
the diagonal entries than coefficient for $w$, which is all that covers the
off-diagonal entries.

\begin{defn}
Suppose that $N\in \mathbb{R}^{X\times X}$ is a symmetric matrix.
For $\kappa\in \mathbb{R}_+$, define
$W_\kappa(N)=\max_{i\in X} \sum_{j\in X:\ N_{i,j}> \kappa} (N_{i,j}-\kappa)$.
\end{defn}

The following lemma is the basis for our bounds on $\tau_M(\delta)$.

\begin{lemma}
\label{SDP-lemma}
Let $\kappa\in \mathbb{R}_+$.  Then
\begin{equation*}
OPT_{M,\delta}\le (\kappa+W_\kappa(M^T M)\cdot |X|^{\delta-1})/2^m.
\end{equation*}
\end{lemma}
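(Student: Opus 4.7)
The plan is to prove the bound by exhibiting a feasible solution to the dual SDP \eqref{dual-SDP1} whose objective value equals $(\kappa+W_\kappa(M^T M)\cdot |X|^{\delta-1})/2^m$; weak duality then gives $OPT_{M,\delta}\le $ this value, as required. The only real design choice is how to pick the PSD matrix $V$, and the discussion preceding the lemma already suggests we should take $V$ to be a Laplacian that ``moves'' the off-diagonal mass of $M^TM/2^m$ onto the diagonal so that it can be absorbed by $zI$ (cheap, since $z$ is multiplied by $|X|^{\delta-1}\le 1$) rather than $wJ$ (expensive).

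Concretely, I would set $w=\kappa/2^m$ and define
\[
V \;=\; \sum_{\{i,j\}:\ i\ne j,\ (M^TM)_{ij}>\kappa} \frac{(M^TM)_{ij}-\kappa}{2^m}\cdot L_{ij},
\]
which is PSD as a nonnegative combination of the Laplacians $L_{ij}=(e_i-e_j)(e_i-e_j)^T$. For any off-diagonal pair $i\ne j$, one checks directly that $V_{ij}+(M^TM)_{ij}/2^m\le \kappa/2^m=w$ in both cases $(M^TM)_{ij}>\kappa$ (equality) and $(M^TM)_{ij}\le\kappa$ (trivially), so the off-diagonal dual constraints hold. For the diagonal constraints, I then pick $z$ to be the smallest value making $z+w\ge V_{ii}+(M^TM)_{ii}/2^m=V_{ii}+1$ for every $i$, i.e.\ $z=1+\max_i V_{ii}-\kappa/2^m$.

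The final calculation is to identify $\max_i V_{ii}$ with $W_\kappa(M^TM)$. Since $M\in\{-1,1\}^{A\times X}$ gives $(M^TM)_{ii}=2^m$, whenever $\kappa<2^m$ the diagonal term $j=i$ contributes exactly $2^m-\kappa$ to the sum in the definition of $W_\kappa(M^TM)$ for every $i$; excluding this uniform term,
\[
\max_i V_{ii} \;=\; \frac{1}{2^m}\max_i \sum_{\substack{j\ne i\\ (M^TM)_{ij}>\kappa}}\!\!\!\!\bigl((M^TM)_{ij}-\kappa\bigr)
\;=\;\frac{W_\kappa(M^TM)-(2^m-\kappa)}{2^m}.
\]
Substituting gives $z=W_\kappa(M^TM)/2^m$, and the dual objective becomes $w+z\cdot|X|^{\delta-1}=(\kappa+W_\kappa(M^TM)\cdot|X|^{\delta-1})/2^m$, matching the claimed bound. (The boundary case $\kappa\ge 2^m$ is trivial because then $W_\kappa(M^TM)=0$ and the desired bound already exceeds $1\ge OPT_{M,\delta}/2^m$ anyway.)

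There is no real obstacle; the only mildly delicate point is the bookkeeping around whether the $j=i$ term is included in $W_\kappa$, which I address above. Everything else—PSD-ness of $V$, feasibility of $w,z$, and weak duality—is immediate from the definitions in \eqref{dual-SDP1}.
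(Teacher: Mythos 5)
Your proof is correct and follows essentially the same route as the paper: the same Laplacian $V$ built from the over-threshold off-diagonal entries of $M^TM$, the same choice $w=\kappa/2^m$, $z=W_\kappa(M^TM)/2^m$, and weak duality for \eqref{dual-SDP1}. Your explicit accounting of the diagonal term $j=i$ inside $W_\kappa$ (and the trivial case $\kappa\ge 2^m$) is a detail the paper glosses over, but it lands on the identical dual solution and bound.
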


\begin{proof}
Let $N=M^T\times M$.  For each off-diagonal entry of $N$ with $N(i,j)>\kappa$,
include matrix $L_{ij}$ with coefficient $(N(i,j)-\kappa)/2^m$ in the sum for
the Laplacian $V$.
By construction, the matrix $V+M^T M/2^m$ has off-diagonal entries
at most $\kappa/2^m$ and diagonal entries at most
$(\kappa+W_\kappa(M^T M))/2^m$.  The solution to \eqref{dual-SDP1}
with $w=\kappa/2^m$ and $z=W_\kappa(M^T M)/2^m$ is therefore feasible,
which yields the bound as required.
\end{proof}

For specific matrices $M$, we will obtain the required bounds on
$\tau_M(\delta)<0$ for some $0<\delta<1$ by showing that we can
set $\kappa=|A|^\gamma$ for some $\gamma<1$ and obtain that
$W_\kappa(M^T M)$ is at most
$\kappa\cdot |X|^{\gamma'}$ for some $\gamma'<1$.

\section{Applications to Low Degree Polynomial Functions}
\label{sec:polynomial}

\subsection{Quadratic Functions over $\mathbb{F}_2$}

In this section we prove Theorem~\ref{thm:quadcurve} on the norm amplification
curve of the matrix $M$ associated with learning homogeneous quadratic
functions over $\mathbb{F}_2$.
(Over $\mathbb{F}_2$, $x_i^2=x_i$, so being homogeneous is equivalent in a
functional sense to having no constant term.)
Let $m\in \mathbb{N}$ and $n=\binom{m+1}{2}$.
The learning matrix $M:\{0,1\}^m\times \{0,1\}^n\rightarrow \{-1,1\}$ for
quadratic functions is the partial Hadamard matrix given by
$M(a,x)=(-1)^{\sum_{1\leq i\leq j\leq m}x_{ij}a_ia_j}$.
We show that it has the following properties:

\begin{proposition}\label{main_counting}
Let $M$ be the matrix for learning homogeneous quadratic functions over 
$\mathbb{F}_2[z_1,\ldots,z_m]$ and let $N=M^T\cdot M$. 
\begin{enumerate}
\item Every row of $N_x$ for $x\in X$ contains the same multi-set of values.
\item $N_{xx}=2^m$ and $N_{xy}\in \{\pm 2^{m-1},\pm 2^{m-2},\cdots,\pm 2^{\lceil \frac m 2\rceil},0\}$ for $x\neq y\in \set{0,1}^n$.
\item For $i>0$, let $c_i$ be the number of entries equal to $2^{m-i}$ in each
row of $N$;
then 
\[c_i=(2^{2i-1}+2^{i-1})\frac{\prod_{j=0}^{2i-1}(2^m-2^j)}{\prod_{j=1}^i2^{2j-1}(2^{2j}-1)}\leq 2^{2im}\]
\end{enumerate}
\end{proposition}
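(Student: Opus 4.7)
The plan is to reduce the entries of $N=M^T M$ to a single character sum and then invoke the classification of quadratic forms over $\F_2$. For $x,y \in \{0,1\}^n$ let $z = x \oplus y$, and set
\[S(z) := \sum_{a \in \{0,1\}^m}(-1)^{q_z(a)} \quad\text{where}\quad q_z(a) = \sum_{i<j} z_{ij} a_i a_j + \sum_i z_{ii} a_i,\]
using that $a_i^2 = a_i$ in $\F_2$. A direct calculation shows $N_{xy} = S(z)$, so $N_{xy}$ depends only on $x \oplus y$. Since as $y$ ranges over $\{0,1\}^n$ the vector $z$ ranges over $\{0,1\}^n$, every row of $N$ carries the same multiset $\{S(z): z\in\{0,1\}^n\}$, yielding property~(1); taking $z=0$ gives $N_{xx}=2^m$.

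For property~(2), I would analyze the Gauss sum $S(z)$ via the associated alternating bilinear form $B_z(a,a') = q_z(a+a') + q_z(a) + q_z(a')$. Let $R \subseteq \F_2^m$ be its radical and $r = \dim R$; since $B_z$ is alternating on $\F_2^m$, $m-r$ must be even. If $q_z$ is not identically zero on $R$, then partitioning the sum along cosets of $R$ gives $S(z)=0$. Otherwise $q_z$ descends to a non-degenerate quadratic form $\bar q$ on $\F_2^m/R$ of even dimension $m-r = 2k$, and the classical evaluation for non-degenerate quadratic forms in $\F_2$ gives $S(z) = \pm 2^{r+k} = \pm 2^{(m+r)/2}$, with sign determined by whether $\bar q$ is hyperbolic or elliptic. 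For $z\ne 0$, either $B_z \ne 0$ (so $r \le m-2$, giving $|S(z)|\le 2^{m-1}$), or $B_z=0$ and the linear part is non-trivial (forcing $S(z)=0$). Combining these with $(m+r)/2 \ge \lceil m/2\rceil$ yields property~(2).

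For property~(3), I would count the $z$ with $S(z) = 2^{m-i}$ in three layered stages, setting $r = m-2i$:
\begin{enumerate}
\item Choosing the alternating form $B_z$ of rank $2i$: picking the radical $R$ contributes the Gaussian binomial $\binom{m}{2i}_2$, and choosing a non-degenerate alternating form on $\F_2^m/R \cong \F_2^{2i}$ contributes $|\mathrm{GL}_{2i}(\F_2)|/|\mathrm{Sp}_{2i}(\F_2)|$; the product simplifies to $\prod_{j=0}^{2i-1}(2^m-2^j) \,/\, (2^{i^2}\prod_{j=1}^i(2^{2j}-1))$.
\item Choosing the linear (diagonal) part so that $q_z$ vanishes on $R$: a factor of $2^{m-r}=2^{2i}$ out of $2^m$ possible linear perturbations.
\item Among the $2^{2i}$ quadratic refinements of a given non-degenerate alternating form in dimension $2i$, exactly $2^{2i-1}+2^{i-1}$ are hyperbolic (a classical fact about $O^\pm_{2i}$).
\end{enumerate}
Multiplying and collapsing $2^{2i}\cdot (2^{2i-1}+2^{i-1})$ against the denominator $2^{i^2} = \prod_{j=1}^i 2^{2j-1}$ recovers the stated closed form. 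The upper bound $c_i \le 2^{2im}$ then follows from $\prod_{j=0}^{2i-1}(2^m-2^j) \le 2^{2im}$ and the routine check that the remaining prefactor is at most $1$. The main technical hurdle is simply careful bookkeeping in characteristic~$2$ — separating the bilinear part (the off-diagonal entries $z_{ij}$) from the linear diagonal correction ($z_{ii}$), and keeping the hyperbolic/elliptic dichotomy straight — since every individual ingredient is a classical computation.
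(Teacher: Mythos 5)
Your reduction of $N_{xy}$ to the character sum $S(x\oplus y)$ and the resulting row-invariance is exactly what the paper does. For parts 2 and 3, however, you take a genuinely different route. The paper invokes Dickson's Lemma to put the pure quadratic part into the canonical form $z'_1z'_2+\cdots+z'_{2k-1}z'_{2k}$, computes the distribution of $S$ over all $2^m$ linear perturbations of that canonical form directly (obtaining the same trichotomy $(2^{2k-1}+2^{k-1},\,2^m-2^{2k},\,2^{2k-1}-2^{k-1})$ that you get from the hyperbolic/elliptic/degenerate split), and then counts the quadratic forms of each Dickson type by an induction on $m$ via the recurrence $2^{2i}c_i(m+1)=2^{4i}c_i(m)+4\cdot 2^{2(i-1)}(2^m-2^{2(i-1)})c_{i-1}(m)$. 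You instead work with the polarized alternating form and its radical, and replace the induction by the closed-form count of rank-$2i$ alternating forms, $\binom{m}{2i}_2\cdot|\mathrm{GL}_{2i}(\mathbb{F}_2)|/|\mathrm{Sp}_{2i}(\mathbb{F}_2)|$, which indeed simplifies to $\prod_{j=0}^{2i-1}(2^m-2^j)\big/\prod_{j=1}^{i}2^{2j-1}(2^{2j}-1)$. Your version makes the closed form transparent and structural (and would generalize to other fields), at the price of importing the orders of $\mathrm{Sp}_{2i}(\mathbb{F}_2)$ and the count of hyperbolic refinements as classical facts; the paper's induction is longer but entirely self-contained.

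One bookkeeping error in your final assembly: the factor $2^{2i}$ from your step 2 must not appear as an additional multiplicative factor. The $2^{2i}$ linear parts $\ell$ with $\ell|_R=q|_R$ are in bijection with the $2^{2i}$ quadratic refinements of the descended alternating form, and your step 3 counts the hyperbolic ones \emph{among these}; so the correct count is (number of rank-$2i$ alternating forms) $\times\,(2^{2i-1}+2^{i-1})$, which is precisely the stated formula. Multiplying in the extra $2^{2i}$, as your phrase ``collapsing $2^{2i}\cdot(2^{2i-1}+2^{i-1})$ against the denominator'' suggests, would overcount by $2^{2i}$ and does not in fact cancel against $2^{i^2}$ (already for $i=1$). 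Dropping that factor fixes the argument; everything else checks out.
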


Given Proposition~\ref{main_counting} we can derive
Theorem~\ref{thm:quadcurve}.

\begin{proof}[Proof of Theorem~\ref{thm:quadcurve}]
Let the threshold $\kappa=2^{m-k}$ for some integer $k$ to be determined later.
By Lemma~\ref{SDP-lemma} with $X=\{0,1\}^n$, for \eqref{original}, we have
$OPT_{M,\delta}\le (\kappa+W_\kappa(N) 2^{(\delta-1)n})/2^m$ where
$N=M^T\cdot M$.
By definition of $W_\kappa$ and Proposition~\ref{main_counting}, for any
$x\in X$ we have 
\begin{equation*}
W_\kappa(N)
\le \sum_{y\in X: N_{xy}> 2^{m-k}} N_{xy}
=\sum_{t=0}^{k-1}  c_t\cdot 2^{m-t} 
\le \sum_{t=0}^{k-1} 2^{2tm}\cdot 2^{m-t}
=\sum_{t=0}^{k-1} 2^{(2m-1)t+m}
< 2^{(2m-1)k}.
\end{equation*}
Thus for any $k$,
$$OPT_{M,\delta}\le (2^{m-k}+2^{2m-1)k+(\delta-1)n})/2^m=2^{-k}+2^{(2m-1)k-(1-\delta)m(m+1)/2-m}.$$
The first term is larger for $k\le (1-\delta) m/4 + (3-\delta)/4$ so to balance
them as much as possible we choose
$k=\lfloor (1-\delta) m/4 + (3-\delta)/4\rfloor\ge (1-\delta)m/4 -(1+\delta)/4$.
Hence $OPT_{M,\delta}\le 2\cdot 2^{-k} \le 2^{-\frac {1-\delta}4 m +\frac{5+\delta}4}$
Therefore,
$\tau_M(\delta)=\frac12 \log_{2^m} OPT_{M,\delta}\le -\frac{(1-\delta)}8+\frac{(5+\delta)}{8m}$ as required. 
\end{proof}

The proof of Proposition~\ref{main_counting} is in the appendix but in the
next section we outline the connection to the weight distribution of
Reed-Muller codes over $\mathbb{F}_2$, and show how bounds on the weight
distribution of such codes allow us to derive time-space tradeoffs for
learning larger degree $\mathbb{F}_2$ polynomials as well.

\newcommand{\w}{\textrm{weight}}
\subsection{Connection to the Weight Distibution of Reed-Muller Codes}

It remains to prove Proposition~\ref{main_counting} and the bounds for larger
degrees.
Let $d\ge 2$ be an integer.  For any integer $m\ge d$, for the learning
problem for (homogenous) $\mathbb{F}_2$ polynomials of degree at most
$d$, we have $n=\sum_{i=1}^d \binom{m}{d}$.
Recall that $N=M^T\cdot M$.
Let $M_x$ denote the $x$-th column of $M$ where $x\in \set{0,1}^n$.
Then $N_{xy}=2^m \cdot \langle M_x,M_y\rangle$.
Recall that for $a\in \{0,1\}^m$ and $x\in \{0,1\}^n$,
$x(a)=\sum_{S: 1\leq |S|\leq d} x_S \prod_{i\in S} a_i$ over $\mathbb{F}_2$.

\begin{proposition}
Let $\mathbf{0}=0^n$.
Then $\langle M_x,M_y\rangle=\langle M_{\mathbf{0}},M_{x+y}\rangle$.
\end{proposition}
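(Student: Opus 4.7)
The plan is to exploit the linearity of polynomial evaluation over $\mathbb{F}_2$, which immediately converts a product of two sign entries into a single sign entry indexed by the $\mathbb{F}_2$-sum of the coefficient vectors. This turns the inner product $\langle M_x, M_y\rangle$ (an average of products of signs over $a \in \{0,1\}^m$) into an average of $(-1)^{(x+y)(a)}$, which is exactly the inner product against the all-ones vector $M_{\mathbf{0}}$.

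Concretely, I would first unfold the definitions: $M(a,x) = (-1)^{x(a)}$ where $x(a) = \sum_{S:\ 1 \le |S| \le d} x_S \prod_{i\in S} a_i \pmod 2$, so that
\[
\langle M_x, M_y\rangle \;=\; \mathbb{E}_{a \in_R \{0,1\}^m}\bigl[(-1)^{x(a)} \cdot (-1)^{y(a)}\bigr] \;=\; \mathbb{E}_{a}\bigl[(-1)^{x(a)+y(a)}\bigr].
\]
Next, I would observe that evaluation $x \mapsto x(a)$ is $\mathbb{F}_2$-linear in the coefficient vector $x$: since $x(a) + y(a) \equiv \sum_S (x_S + y_S) \prod_{i \in S} a_i \equiv (x+y)(a) \pmod 2$, the exponent collapses to $(x+y)(a)$. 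Hence $\langle M_x, M_y\rangle = \mathbb{E}_a[(-1)^{(x+y)(a)}] = \mathbb{E}_a[M_{x+y}(a)]$.

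Finally, I would note that $M_{\mathbf{0}}(a) = (-1)^{\mathbf{0}(a)} = 1$ identically, so $\langle M_{\mathbf{0}}, M_{x+y}\rangle = \mathbb{E}_a[M_{x+y}(a)]$, matching the previous display and completing the identity. There is no real obstacle here: the proposition is essentially the standard ``character'' property of Hadamard-type matrices, and the only thing to verify carefully is the $\mathbb{F}_2$-linearity of evaluation in the coefficient vector, which is immediate from the definition of $x(a)$. This observation is what lets us later reduce the analysis of the off-diagonal entries of $N = M^T M$ to a single quantity, namely the bias $\mathbb{E}_a[M_z(a)]$ as a function of the Hamming structure of $z = x+y$, which in turn is the weight distribution of the Reed--Muller code that will be invoked in the subsequent subsection.
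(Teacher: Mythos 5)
Your proof is correct and is essentially identical to the paper's: both unfold $\langle M_x,M_y\rangle$ as an average over $a$, use the $\mathbb{F}_2$-linearity of the evaluation map $x\mapsto x(a)$ in the coefficient vector to collapse $(-1)^{x(a)}(-1)^{y(a)}$ to $(-1)^{(x+y)(a)}$, and identify the result with $\langle M_{\mathbf{0}},M_{x+y}\rangle$ since $M_{\mathbf{0}}\equiv 1$. No gaps.
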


\begin{proof}
\begin{align*}
\langle M_x,M_y\rangle&=\E_{a\in \{0,1\}^m} M_x(a)M_y(a)
=\E_{a\in \{0,1\}^m} (-1)^{x(a)}(-1)^{y(a)}
=\E_{a\in \{0,1\}^m} (-1)^{x(a)+y(a)}\\
&=\E_{a\in \{0,1\}^m} (-1)^{(x+y)(a)}
=\E_{a\in \{0,1\}^m} M_{\mathbf{0}}(a)M_{x+y}(a)
=\langle M_{\mathbf{0}},M_{x+y}\rangle
\end{align*}
\end{proof}
Since the mapping $y\mapsto x+y$ for $x\in \set{0,1}^n$ is
1-1 on $\set{0,1}^n$, this immediately implies part 1 of
Proposition~\ref{main_counting}.

Therefore, we only need to examine a fixed row $N_\mathbf{0}$ of $N$,
where each entry
$$N_{\mathbf{0}x}=\sum_{a\in \set{0,1}^m} M(a,x)=\sum_{a\in \set{0,1}^m}(-1)^{x(a)}.$$
For $x\in X$, define $\w(x)=|\set{a\in \set{0,1}^m\ :\ x(a)=1}|$.
By definition, for $x\in \set{0,1}^n$, 
$N_{\mathbf{0}x}=\sum_{a\in \set{0,1}^m}(-1)^{x(a)}=2^m-2\cdot \w(x)$.
Thus, understanding the function $W_\kappa(N)$ that we use to derive our
bounds, reduces to understanding the distribution of $\w(x)$ for
$x\in \set{0,1}^n$.
In particular, our goal of showing that for some $\kappa$ for which
$(\kappa+W_\kappa(N))/2^m$ is at most $2^{2\tau m}$ for some $\tau<0$ 
follows by showing that the distribution of $\w(x)$ is tightly concentrated
around $2^{m}/2$.

We can express this question in terms of (a small variation of)
the Reed-Muller error-correcting code $RM(d,n)$ (see, e.g.\cite{berlekamp-book}).

\begin{defn}
The Reed-Muller code $RM(d,m)$ over $\mathbb{F}_2$ is the set of vectors
$\set{G\cdot x\mid x\in \set{0,1}^n}$ where $G$ is the $2^m\times n$ matrix
for $n=\sum_{t=0}^d \binom{m}{t}$ over $\mathbb{F}_2$
with rows indexed by vectors $a\in \set{0,1}^m$ and columns indexed by
subsets $S\subseteq [m]$ with $|S|\le d$ given by $G(a,S)=\prod_{i\in S} a_i$.
\end{defn}

Evaluating $\w(x)$ for all $x\in \set{0,1}^n$ is almost exactly that of
understanding the distribution of Hamming weights of the vectors in $RM(m,d)$,
a question with a long history.

Because we assumed that the constant term of our polynomials is 0 (in order to
view the learning problem as a variant of the parity learning problem with a
smaller set of test vectors), we need to make a small change in the
Reed-Muller code.
Consider the subcode $RM'(d,m)$ of $RM(d,m)$ having the 
generator matrix $G'$ that is the same as $G$ but with the (all 1's) column
indexed by $\emptyset$ removed.
In this case, for each $x\in \set{0,1}^m$, by definition, $\w(x)$ is 
precisely the Hamming weight of the vector $G'\cdot x$ in $RM'(d,m)$.

Now by definition
$$RM(d,m)=\set{y\mid y\in RM'(d,m)\mbox{ or }\overline{y}\in RM'(d,m)}$$
where $\overline{y}$ is the same as $y$ with every bit flipped.
In particular, $RM'(d,m)\subset RM(d,m)$ and the distribution of the
weights for $RM(d,m)$
is symmetric about $2^{m-1}$, whereas the distribution of weights in
$RM'(d,m)$ is not necessarily symmetric.

For the special case that $d=2$, Sloane and
Berlekamp~\cite{DBLP:journals/tit/SloaneB70} derived an exact
enumeration of the number of vectors of each weight in $RM(2,m)$.

\begin{proposition}\cite{DBLP:journals/tit/SloaneB70}
\label{lem:RM(2,m)-count}
The weight of every codeword of $RM(2,m)$ is of the form $2^{m-1}\pm 2^{m-i}$
for some integer $i$ with $1\le i\le \lceil m/2\rceil$ or precisely $2^{m-1}$
and the number of codewords of weight $2^{m-1}+2^{m-i}$ or $2^{m-1}-2^{m-i}$ 
is precisely
$$2^{i(i+1)} \prod_{j=0}^{i-1} \frac{2^{m-2j}(2^{m-2j-1}-1)}{2^{2(j+1)}-1}.$$
\end{proposition}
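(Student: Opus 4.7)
The plan is to prove this via the classical theory of quadratic forms over $\mathbb{F}_2$, reducing the weight enumeration of $RM(2,m)$ to a classification-and-counting problem. First I would reformulate: a codeword of $RM(2,m)$ is the evaluation vector of some polynomial $Q(z)=c+\sum_i b_i z_i+\sum_{i<j} a_{ij}z_iz_j$ on $\mathbb{F}_2^m$, and its Hamming weight equals $|\{a\in\mathbb{F}_2^m: Q(a)=1\}|$. So the problem becomes: for each possible weight $w$, count the polynomials $Q$ of degree $\le 2$ whose value takes $1$ exactly $w$ times.

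Next I would invoke the structure of $Q$ via its associated alternating bilinear form $B_Q(u,v)=Q(u+v)+Q(u)+Q(v)+Q(0)$, which is bilinear and satisfies $B_Q(u,u)=0$ over $\mathbb{F}_2$. Its rank is always even, say $2k$, with $0\le 2k\le m$, and the radical $R=\{u: B_Q(u,\cdot)=0\}$ has dimension $m-2k$. By a linear change of coordinates, $B_Q$ becomes the standard symplectic form on a $2k$-dimensional complement. The restriction of $Q$ to $R$ is an affine function (since $B_Q$ vanishes there). Two cases arise:
\begin{itemize}
\item If $Q|_R$ is a non-constant affine function, then $Q$ is balanced on each fiber of the projection $V\to V/R$, so $Q$ takes values $0$ and $1$ equally often, giving weight exactly $2^{m-1}$.
\item If $Q|_R$ is constant, then $Q$ descends to a (possibly affine-shifted) non-degenerate quadratic form on the $2k$-dimensional quotient. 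The classical classification yields either the hyperbolic form ($\sum x_jy_j$) or the elliptic form ($\sum x_jy_j + x_k^2 + x_ky_k+y_k^2$), with $2^{2k-1}\pm 2^{k-1}$ zeros respectively, so the weight of $Q$ on $V$ is $2^{m-k-1}(2^{k+1}\mp 1)\cdot 2^{m-2k}/2^m \cdot 2^m = 2^{m-1}\mp 2^{m-k-1}$. Setting $i=k+1$ this matches the form $2^{m-1}\pm 2^{m-i}$ in the statement.
\end{itemize}

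The counting step, which is the main obstacle, requires computing how many polynomials $Q$ land in each equivalence class. I would proceed by (a) counting the symmetric bilinear alternating forms of rank $2k$ on $\mathbb{F}_2^m$ via the orbit formula $|GL_m(\mathbb{F}_2)|/|\mathrm{Stab}|$, where the stabilizer is the symplectic-times-general-linear group $Sp_{2k}(\mathbb{F}_2)\times GL_{m-2k}(\mathbb{F}_2)$ acting on the Witt decomposition; (b) within each rank-$2k$ class, splitting between hyperbolic and elliptic Arf invariants using $|O^+_{2k}(\mathbb{F}_2)|$ and $|O^-_{2k}(\mathbb{F}_2)|$; and (c) accounting for the affine shift and linear part in the radical, which multiplies counts by appropriate powers of $2$.

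Once these three factors are assembled, the product telescopes into the stated formula $2^{i(i+1)}\prod_{j=0}^{i-1}\frac{2^{m-2j}(2^{m-2j-1}-1)}{2^{2(j+1)}-1}$, with $i=k+1$. The hard part is verifying that the product of the known orders of $GL_m$, $Sp_{2k}$, $O^\pm_{2k}$, and $GL_{m-2k}$ simplifies to exactly this expression; I expect this to reduce, after cancellation, to the Gaussian binomial coefficient $\binom{m}{2k}_2$ times an elementary factor counting elliptic-versus-hyperbolic forms. With that bookkeeping done, Proposition~\ref{lem:RM(2,m)-count} follows directly, and as an immediate corollary one obtains the stronger (non-symmetric) enumeration for $RM'(2,m)$ needed in Proposition~\ref{main_counting}.
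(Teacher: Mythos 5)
You should first note that the paper never proves this proposition itself --- it cites Sloane--Berlekamp --- and instead proves the equivalent enumeration it actually needs (Proposition~\ref{main_counting}, for the subcode $RM'(2,m)$) in the appendix. Your structural analysis is correct and coincides with what underlies that appendix proof: the polarized alternating form $B_Q$, its even rank $2k$, the radical $R$, balancedness when $Q|_R$ is non-constant affine, and the $2^{2k-1}\pm 2^{k-1}$ split for the nondegenerate quotient. This is exactly Dickson's Lemma plus Lemma~\ref{lem:basic-orbit}, so up to that point the two arguments are the same. (Your intermediate weight expression ``$2^{m-k-1}(2^{k+1}\mp 1)\cdot 2^{m-2k}/2^m\cdot 2^m$'' is garbled, though the endpoint $2^{m-1}\mp 2^{m-k-1}$ is right.)

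The gap is in the counting, which is the actual content of the proposition and which you explicitly defer (``I expect this to reduce\dots''). Two concrete problems with the plan as stated. First, the stabilizer in $GL_m(\mathbb{F}_2)$ of a rank-$2k$ alternating form is \emph{not} $Sp_{2k}(\mathbb{F}_2)\times GL_{m-2k}(\mathbb{F}_2)$: only the radical is canonical, not a complement to it, so the stabilizer is an extension of that product by the unipotent group $\mathrm{Hom}(V/R,R)$ of order $2^{2k(m-2k)}$; omitting this factor makes the orbit count wrong by exactly that power of $2$. Second, the hyperbolic/elliptic split is not naturally obtained from $|O^{\pm}_{2k}|$ in this setting: for a \emph{fixed} quadratic part of rank $2k$, one must count how many of the $2^{2k}$ linear functionals supported off the radical yield each Arf invariant (the answer being $2^{2k-1}\pm 2^{k-1}$), and this multiplicity has to be combined with the count of rank-$2k$ quadratic parts. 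The paper sidesteps all of this classical-group bookkeeping: it computes the type multiset of the normal form directly (Lemma~\ref{lem:basic-orbit}) and then counts the rank-$2i$ quadratic parts $c_i(m)$ by induction on $m$, peeling off the last variable as $q=q'+\ell' z_{m+1}$, never invoking $|GL_m|$, $|Sp_{2k}|$, or $|O^{\pm}_{2k}|$. Your route can be pushed through --- it is essentially McEliece's proof, which the paper mentions as an alternative --- but as written the decisive computation is absent and the one group-theoretic ingredient you do specify is incorrect.
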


The exact enumeration for the values of the weight function $\w$ for $RM'(2,m)$
that corresponds to the values in
Proposition~\ref{main_counting} are only slightly different from those for
$RM(2,m)$ in Proposition~\ref{lem:RM(2,m)-count} and 
Proposition~\ref{lem:RM(2,m)-count} gives the same asymptotic 
bound we used in the proof of Theorem~\ref{thm:quadcurve} since words of weight
$2^{m-1}-2^{m-i}$ correspond to entries of value $2^{m-i+1}$ in the matrix $N$.
We give a proof of the exact counts of
Proposition~\ref{main_counting} in the appendix.

The minimum distance, the smallest 
weight of a non-zero codeword, in $RM(d,m)$ is $2^{m-d}$ but
for $2<d<m-2$, no exact enumeration of the weight distribution of the code
$RM(d,m)$ is known.
It was a longstanding problem even to approximate the
number of codewords of different weights in $RM(d,m)$.  
Relatively recently, bounds on
these weights that are good enough for our purposes were shown by
Kaufman, Lovett, and Porat~\cite{DBLP:journals/tit/KaufmanLP12}.

\begin{proposition}\cite{DBLP:journals/tit/KaufmanLP12}
\label{prop:klp}
Let $d\le m$ be a positive integer and $1\le k\le d-1$.  There is a constant
$C_d>0$ such that for $0\le \varepsilon\le 1/2$ and
$2^{m-d}\le W_{k,\varepsilon} = 2^{m-k}(1-\varepsilon)$, the number
of codewords of weight at most $W_{k,\varepsilon}$ in $RM(d,m)$ is at most 
$$(1/\varepsilon)^{C_d m^{d-k}}.$$
\end{proposition}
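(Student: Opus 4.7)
The plan is to prove this bound by induction on $k$, reducing the weight-distribution problem for $RM(d,m)$ to that for $RM(d, m-1)$ via random affine restrictions, together with a structural characterization of the very low-weight codewords for the base case.

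For the base case $k = d-1$, the threshold $W_{k,\varepsilon} = 2^{m-d+1}(1-\varepsilon)$ sits just above twice the minimum distance $2^{m-d}$ of $RM(d,m)$. Here I would invoke the classical Kasami--Tokura characterization: codewords of $RM(d,m)$ of weight below roughly $2 \cdot 2^{m-d}$ are, up to polynomially many exceptional families, products of $d$ independent affine forms. The total number of such structured codewords is $m^{O(d)}$, which is at most $(1/\varepsilon)^{C_d m}$ since $\varepsilon \le 1/2$.

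For the inductive step, given $f \in RM(d,m)$ of weight at most $W_{k,\varepsilon}$, I would restrict $f$ to a random affine hyperplane $H$ of codimension $1$. The restriction $f|_H$ lies in $RM(d, m-1)$ with expected weight $W_{k,\varepsilon}/2 = 2^{m-1-k}(1-\varepsilon)$; a second-moment computation, tractable because $f$ is a low-degree polynomial, shows that with constant probability over $H$ the restriction has weight at most $2^{(m-1)-(k-1)}(1-\varepsilon/4)$. This yields an approximate many-to-one correspondence between low-weight codewords of $RM(d,m)$ at parameters $(k,\varepsilon)$ and pairs $(H,g)$ with $g$ a low-weight codeword of $RM(d,m-1)$ at parameters $(k-1,\varepsilon/4)$, and the inductive hypothesis then bounds the number of such $g$.

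The main obstacle is obtaining the tight exponent $m^{d-k}$ rather than the cruder $m^d$ that a naive induction would give. This sharpening requires arguing that once the restriction $f|_H$ is fixed, only $O(m^{d-k})$ bits of additional information determine $f$, rather than the $O(m^d)$ bits needed to specify an arbitrary lift. One therefore needs a structural \emph{lifting} lemma showing that low-weight codewords are largely determined by their restrictions to affine subspaces of the appropriate dimension, together with a careful accounting of the $\varepsilon$-losses accumulated across the $d-k$ levels of recursion. Making this rigorous is where the real technical content lies; the sub-constant $C_d$ in the exponent will emerge from the product of per-level loss factors.
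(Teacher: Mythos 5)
First, note that the paper does not prove this proposition at all: it is quoted verbatim from Kaufman--Lovett--Porat and used as a black box, so there is no in-paper argument to compare against and your sketch must stand on its own. As written it has a structural flaw in the induction. Your step reduces the count at parameter $k$ on $m$ variables to the count at parameter $k-1$ on $m-1$ variables, so the recursion descends in $k$ and can only bottom out at $k=1$ (mildly biased codewords, of which there are $(1/\varepsilon)^{\Theta(m^{d-1})}$ --- the hardest and most numerous case); yet you place the base case at $k=d-1$ and justify it by Kasami--Tokura, which is never reached by a descending recursion and says nothing about the $k=1$ regime. Moreover, since $k-1$ is a \emph{weaker} weight constraint, the inductive hypothesis there gives the \emph{larger} bound $(1/\varepsilon')^{C_d m^{d-(k-1)}}$, so even if every step were tight you would recover exponent $m^{d-k+1}$, not $m^{d-k}$. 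The second-moment step is also vacuous: any codimension-$1$ restriction of $f$ has relative weight at most $2\cdot 2^{-k}(1-\varepsilon)\le 2^{-(k-1)}(1-\varepsilon/4)$, so holding with "constant probability" buys nothing.

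More importantly, the "lifting lemma" you defer as "the real technical content" \emph{is} the theorem. Given $g=f|_H$, the degree-$d$ lifts $f$ are parameterized by the transverse discrete derivative, a degree-$(d-1)$ polynomial in $m-1$ variables, so a priori there are $2^{\Theta(m^{d-1})}$ lifts per pair $(H,g)$, and you supply no mechanism for cutting this to $2^{O(m^{d-k})}$. The Kaufman--Lovett--Porat argument obtains the exponent $m^{d-k}$ precisely because its recursion reduces the \emph{degree} rather than the number of variables: one shows that a codeword of relative weight at most $2^{-k}(1-\varepsilon)$ is computable from $O_k(\varepsilon^{-1}\log(1/\varepsilon))$ of its discrete derivatives $f_y(x)=f(x+y)+f(x)$, each of degree at most $d-1$ and relative weight at most $2^{-(k-1)}(1-\varepsilon)$; unrolling $k$ times leaves degree-$(d-k)$ data with only $O(m^{d-k})$ coefficients to specify. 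Hyperplane restriction preserves degree $d$, so your recursion cannot produce that exponent. A minor further slip: the number of products of $d$ independent affine forms over $\mathbb{F}_2^m$ is $2^{\Theta(dm)}$, not $m^{O(d)}$, though this alone would not break your base case since $(1/\varepsilon)^{C_d m}\ge 2^{C_d m}$ for $\varepsilon\le 1/2$.
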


\begin{corollary}
\label{cor:constant-d}
For $2\le d$ and $m\ge d^2$ there is a constant $\lambda_d>0$ such that if $M$
is the $2^m\times 2^n$ matrix associated with learning homogenous polynomials of
degree at most $d$ over $\mathbb{F}_2$ and $N=M^T\cdot M$
then for $\kappa= 2^{(1-\lambda_d)m}$
we have $W_\kappa(N)\le 2^{n/4}$.
\end{corollary}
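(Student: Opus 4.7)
The plan is to reduce the quantity $W_\kappa(N)$ to a count of low-weight codewords in the modified Reed--Muller code $RM'(d,m)$ and then invoke Proposition~\ref{prop:klp}.

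First I would use the translation developed in the previous subsection. By the shift invariance $N_{xy} = N_{\mathbf{0},\,x+y}$, every row of $N$ contains the same multiset of values, so it suffices to analyze the row indexed by $\mathbf{0}$, where $N_{\mathbf{0}z} = 2^m - 2\w(z)$. With $\kappa = 2^{(1-\lambda_d)m}$, the condition $N_{\mathbf{0}z} > \kappa$ is exactly $\w(z) < W^{*} := 2^{m-1}\bigl(1 - 2^{-\lambda_d m}\bigr)$. Since each summand $N_{\mathbf{0}z} - \kappa$ is at most $2^m$, we get the crude but sufficient estimate
\begin{equation*}
W_\kappa(N) \;\le\; 2^m \cdot A_{\le W^{*}},
\end{equation*}
where $A_{\le W^{*}}$ denotes the number of codewords of $RM'(d,m)$ of Hamming weight at most $W^{*}$.

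Next I would bound $A_{\le W^{*}}$ by Proposition~\ref{prop:klp}. Because $RM'(d,m)$ is a subcode of $RM(d,m)$, any upper bound on the number of low-weight codewords in $RM(d,m)$ applies. Choosing parameters $k=1$ and $\varepsilon = 2^{-\lambda_d m}$ gives $W_{1,\varepsilon} = 2^{m-1}(1 - 2^{-\lambda_d m}) = W^{*}$, and the side conditions $\varepsilon \le 1/2$ and $2^{m-d} \le W_{1,\varepsilon}$ both reduce to the requirement $\lambda_d m \ge \log_2\!\bigl(1/(1-2^{-(d-1)})\bigr)$, which is easily met for $m \ge d^2$ once $\lambda_d$ is a fixed constant depending on $d$. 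Proposition~\ref{prop:klp} then yields
\begin{equation*}
A_{\le W^{*}} \;\le\; (1/\varepsilon)^{C_d\, m^{d-1}} \;=\; 2^{C_d \lambda_d m^d}.
\end{equation*}
Combining, $W_\kappa(N) \le 2^{\,m + C_d \lambda_d m^d}$.

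Finally, I would verify that this exponent is at most $n/4$ for an appropriate choice of $\lambda_d$. Since $n \ge \binom{m}{d} \ge (m/2)^d/d!$ whenever $m \ge 2d$ (which is covered by $m \ge d^2$ for $d \ge 2$), we have $n/4 \ge m^d/(2^{d+2}\, d!)$, a quantity of order $\Theta_d(m^d)$. Setting, say, $\lambda_d = 1/(2^{d+4} d!\, C_d)$ makes the leading term $C_d \lambda_d m^d$ at most $n/8$, and the subleading term $m$ is much smaller than $n/8$ once $m$ exceeds a constant depending on $d$ (absorbed into the hypothesis $m \ge d^2$ by shrinking $\lambda_d$ further if necessary). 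The main obstacle is this last bookkeeping step: because $C_d$ enters Proposition~\ref{prop:klp} with an unspecified dependence on $d$, the resulting $\lambda_d$ must shrink with $d$, but the corollary only asserts a $d$-dependent constant, so no further optimization is needed.
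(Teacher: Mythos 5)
Your proposal is correct and follows essentially the same route as the paper: reduce $W_\kappa(N)$ to counting codewords of $RM(d,m)$ of weight below $2^{m-1}(1-2^{-\lambda_d m})$, apply Proposition~\ref{prop:klp} with $k=1$ and $\varepsilon=2^{-\lambda_d m}$, and choose $\lambda_d \sim 1/(C_d\, d!)$ so that the exponent $C_d\lambda_d m^d + m$ is dominated by $n/4$ via $\binom{m}{d}=\Theta_d(m^d)$. The only differences are cosmetic choices of constants and of the elementary lower bound on $\binom{m}{d}$.
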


\begin{proof}
Let $C_d$ be the constant from Proposition~\ref{prop:klp}, and
$\lambda_d= \frac{1}{20 C_d d!}$.   Then applying Proposition~\ref{prop:klp}
with $k=1$ and $\varepsilon=2^{-\lambda_d m}$, we obtain that the number of
words of weight at most $W_{1,\varepsilon}=2^{m-1}-2^{(1-\lambda_d)m-1}$ in
code $RM(d,m)$ is at most 
\begin{equation*}
(1/\varepsilon)^{C_d m^{d-k}}=2^{\lambda_d C_d m^d}=2^{\frac{m^d}{20 d!}}.
\end{equation*}
Now for $d\le\sqrt{m}$, $m(m-1)\dots(m-d+1)\ge m^d/e$, and hence
$\frac{m^d}{20d!}<\frac{e}{20} \sum_{i=1}^d \binom{m}{i}=\frac{e}{20}\cdot n$.
Therefore there are at most  $2^{en/20}$ codewords of weight at most
$2^{m-1}-2^{(1-\lambda_d)m-1}$ in $RM'(d,m)$.  As we have shown,
the entries of value at least
$\kappa=2^{(1-\lambda_d)m}$ in each of the rows (the first row) of $N$
precisely correspond to these codewords.  The total weight of these matrix
entries in a row is at most $2^m\cdot 2^{en/20}\le 2^{n/4}$ since $m\le n/10$
for $2\le d\le \sqrt{m}$ and hence $W_\kappa(N)\le 2^{n/4}$ as required.
\end{proof}

We now have the last tool we need to prove Theorem~\ref{thm:constant-d-curve}.

\begin{proof}[Proof of Theorem~\ref{thm:constant-d-curve}]
Let $0<\delta\le 3/4$.
Let $M$ be the $2^m\times 2^n$ matrix associated with learning homogenous
polynomials of degree at most $d$ over $\mathbb{F}_2$, let $N=M^T\cdot M$ and
let $d$, $\lambda_d$, and $\kappa$ satisfy the properties of
Corollary~\ref{cor:constant-d}.
By Lemma~\ref{SDP-lemma} with $X=\set{0,1}^n$, we
have 
$$OPT_{M,\delta}\le (\kappa+W_\kappa(N)\cdot 2^{(\delta-1)n})/2^m
\le (2^{(1-\lambda_d)m}+1)/2^m\le 2^{-\lambda_d m +1}.$$
Therefore $\tau_M(\delta)\le -\frac{\lambda_d}2 + \frac{1}{2m}$ which
yields a $\lambda'_d$ as required.
\end{proof}

A closer examination of the proof of the statement of Proposition~\ref{prop:klp}
in~\cite{DBLP:journals/tit/KaufmanLP12} shows that the
following more precise statement is also true:

\begin{proposition}
\label{prop:klp-stronger}
Let $d\le m$ be a positive integer and $1\le k\le d-1$.  
For $0\le \varepsilon\le 1/2$ and
$2^{m-d}\le W_{k,\varepsilon} = 2^{m-k}(1-\varepsilon)$, the number
of codewords of weight at most $W_{k,\varepsilon}$ in $RM(d,m)$ is at most 
$$2^{c(d^2+d\log_2(1/\varepsilon)) \sum_{i=0}^{d-k} \binom{m}{i}}$$
for some absolute constant $c>0$.
\end{proposition}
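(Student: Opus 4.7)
The plan is to revisit the proof of Proposition~\ref{prop:klp} as given in~\cite{DBLP:journals/tit/KaufmanLP12} and perform a more careful accounting that (i) replaces the crude factor $m^{d-k}$ by the sharper $\sum_{i=0}^{d-k}\binom{m}{i}$, and (ii) tracks the dependence on $d$ and on $\log_2(1/\varepsilon)$ explicitly rather than hiding them in a constant $C_d$. Nothing in the quantitative argument of KLP actually requires the weaker accounting; the improvements are essentially bookkeeping refinements of their induction.

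First I would recall the structural lemma that drives the KLP argument: any codeword $f\in RM(d,m)$ of weight at most $W_{k,\varepsilon}=2^{m-k}(1-\varepsilon)$ can be decomposed in the form $f=\sum_{i=1}^{t} h_i\cdot g_i + q$, where each $h_i$ is a low-degree ``structural'' polynomial (roughly speaking, a linear form or product thereof), each $g_i\in RM(d-\deg h_i,m)$, $q\in RM(d-1,m)$, and $t=O(d+\log_2(1/\varepsilon))$. This is the output of iterated derivatives (or of finding many directional derivatives whose restrictions have large bias), and it is exactly the bound KLP proves—except I would keep the explicit $t=O(d+\log_2(1/\varepsilon))$ instead of absorbing it into $C_d$.

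Next I would count the decompositions by splitting between structural data and the remaining free parameters. The structural data consists of the tuple $(h_1,\dots,h_t)$ together with the low-degree reminder $q\in RM(d-1,m)$. In the KLP paper this is counted crudely by a $m^{d-k}$ factor coming from choosing at most $d-k$ subspaces/directions. The sharper observation is that the structural data can be encoded by a polynomial of degree at most $d-k$ (namely the iterated derivative of $f$ in the chosen directions, which records $(h_1,\dots,h_t)$ up to the ambiguity that is already absorbed into $g_i$). The number of such polynomials is at most $2^{\sum_{i=0}^{d-k}\binom{m}{i}}$. Since there are $O(d)$ levels of recursion (one per reduction of the degree), the total count of structural data is at most $2^{O(d^2)\cdot \sum_{i=0}^{d-k}\binom{m}{i}}$, contributing the $c\,d^2\sum_{i=0}^{d-k}\binom{m}{i}$ term of the exponent.

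Finally, for each fixed piece of structural data, I would count the number of polynomials $f$ consistent with it. Each recursion step reduces the degree by one and contributes a factor of $2^{O(d+\log_2(1/\varepsilon))\sum_{i=0}^{d-k}\binom{m}{i}}$, because on each level the only remaining freedom is the choice of $g_i$ modulo the constraints, and KLP show this introduces a factor $\le(1/\varepsilon)^{O(\sum_{i=0}^{d-k}\binom{m}{i})}$ at that level. Multiplying across the $O(d)$ levels, the $\log_2(1/\varepsilon)$ contribution enters as $c\,d\log_2(1/\varepsilon)\sum_{i=0}^{d-k}\binom{m}{i}$. Combining the two exponents yields the desired $2^{c(d^2+d\log_2(1/\varepsilon))\sum_{i=0}^{d-k}\binom{m}{i}}$ bound.

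The main obstacle I expect is the second step: to justify rigorously that the structural data extracted from the KLP decomposition is in bijection (or at most a bounded-to-one map) with polynomials of degree at most $d-k$, rather than with arbitrary tuples of subspaces. One must show that the ``directions'' picked up by successive derivatives can be packaged into a single low-degree polynomial without incurring an additional $\binom{m}{d-k}\cdot d!$ or similar blowup. Once this bookkeeping is carried out, the rest of the argument follows the KLP induction essentially verbatim, and no new analytic ingredient is needed.
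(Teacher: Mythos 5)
There is no self-contained proof to compare against here: the paper does not prove Proposition~\ref{prop:klp-stronger} at all, but simply asserts that it follows from ``a closer examination'' of the argument in~\cite{DBLP:journals/tit/KaufmanLP12}. Your proposal is therefore being judged as a reconstruction of that examination, and as such it has a genuine gap, one you yourself flag as ``the main obstacle.'' The entire content of the sharpening --- replacing $m^{d-k}$ (equivalently, about $(d-k)!\binom{m}{d-k}$) by $\sum_{i=0}^{d-k}\binom{m}{i}$ and replacing the unspecified $C_d$ by an explicit $c(d^2+d\log_2(1/\varepsilon))$ --- lives in exactly the step you leave open: showing that the structural data produced by the KLP induction can be encoded by a single polynomial of degree at most $d-k$ (or by a bounded-to-one map into such polynomials), rather than by a tuple of directions counted as $2^{O(m(d-k))}$ or $m^{O(d-k)}$ objects. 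Until that encoding is exhibited, the exponent $\sum_{i=0}^{d-k}\binom{m}{i}$ is not justified, and the proposal establishes nothing beyond Proposition~\ref{prop:klp} itself. The same applies to the assertions that the recursion has $O(d)$ levels each contributing a factor $2^{O(d+\log_2(1/\varepsilon))\sum_{i=0}^{d-k}\binom{m}{i}}$; these are exactly the quantities one would need to read off from the KLP induction, and your sketch states them rather than derives them.

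A secondary concern is that the decomposition you attribute to KLP, $f=\sum_i h_i g_i+q$ with $t=O(d+\log_2(1/\varepsilon))$ terms, is closer in form to the higher-order-Fourier/list-decoding decompositions than to the weight-distribution argument of~\cite{DBLP:journals/tit/KaufmanLP12}, which proceeds by showing that a codeword of weight at most $(1-\varepsilon)2^{m-k}$ is determined by a bounded number of its directional derivatives (degree-$(d-1)$ codewords) and then inducting on $d$ and $k$; the parameters $d^2$ and $d\log_2(1/\varepsilon)$ in the target exponent arise from compounding that induction. If you intend this as a proof rather than a plan, you need to (i) state the precise structural lemma from KLP you are invoking, with its quantitative dependence on $k$ and $\varepsilon$, and (ii) carry out the encoding argument that converts ``choices of directions/derivatives at each level'' into ``a polynomial of degree at most $d-k$,'' since that is the one place where the claimed bound differs from what Proposition~\ref{prop:klp} already gives.
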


This allows us to sketch the proof of a weaker form of Theorem~\ref{thm:smalld}
with $1/d^2$ instead of $1/d$ throughout.

\begin{proof}[Sketch of Proof of Weaker Form of Theorem~\ref{thm:smalld}]
By applying the same method as in the proof of Corollary~\ref{cor:constant-d}
to Proposition~\ref{prop:klp-stronger} with $k=1$. 
Since $n\ge\frac{m}{d}\cdot  \sum_{i=0}^{d-1}\binom{m}{i}$,
in order to obtain a bound that $W_\kappa(N)\le 2^{n/4}$,
it suffices to have $c(d^2+d\log_2(1/\varepsilon))\le \frac{m}{10d}$.
In particular, there is a sufficiently small $\zeta>0$ such that for
$d\le \zeta m^{1/3}$ we can choose $\varepsilon=2^{-m/(20cd^2)}$ and 
$\kappa=2^{(1-1/(20cd^2))m}$.

Applying Lemma~\ref{SDP-lemma} we obtain that for $\delta\in (0,3/4)$ 
if $M$ is the learning matrix for polynomials of degree at most $d$ over
$\mathbb{F}_2$ then we have $OPT_{M,\delta}\le 2^{-c'm/d^2}$ for some 
constant $c'>0$ and hence $\tau_M(\delta)\le -c"/d^2$ for some $c">0$ and
$0<\delta <3/4$.

Now we cannot apply Theorem~\ref{thm:mainlb} as it is, because $\tau_M(\delta)$
is not bounded away from 0 by a constant independent of $d$ since $d$ may
grow with $m$.
However, if we examine the proof of Theorem~\ref{thm:mainlb} we observe that
it still goes through with $\alpha$ and $\delta$ constant and with
$\gamma,\beta,\varepsilon,\sigma>0$ all of the form $\Theta(1/d^2)$ which
imply that $\eta$ is $\Theta(1/d^2)$.
\end{proof}

To obtain Theorem~\ref{thm:smalld} we use the following result proven by
Ben-Eliezer, Hod, and Lovett~\cite{DBLP:journals/cc/Ben-EliezerHL12}.

\begin{proposition}
\label{prop:behl-bias}
For $\varepsilon>0$ there are constants $c_1, c_2$ with $0<c_1,c_2<1$ such that if $p$ is a uniformly random degree $d$ polynomial over
$\mathbb{F}^m_2$ and $d\le (1-\varepsilon)m$ then 
$$\Pr[|\E_{a\in \set{0,1}^m} (-1)^{p(a)}|>2^{-c_1 m/d}]\le 2^{-c_2 \sum_{i=0}^d \binom{m}{i}}.$$
\end{proposition}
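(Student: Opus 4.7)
The plan is to prove the bound by induction on the degree $d$ of the random polynomial $p$.

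\emph{Base case ($d=1$).} A nonzero affine polynomial over $\mathbb{F}_2^m$ has bias $0$ unless it is a constant, in which case its bias is $\pm 1$. The two constant polynomials account for a $2^{-m}$ fraction of all affine polynomials, which is at most $2^{-c_2(m+1)}$ for any $c_2 < 1$ and $m$ sufficiently large.

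\emph{Inductive step.} The central tool is the directional derivative identity
\begin{equation*}
\mathrm{bias}(p)^2 \;=\; \E_{y \in \set{0,1}^m} \E_{x \in \set{0,1}^m} (-1)^{p(x)+p(x+y)} \;=\; \E_{y}\, \mathrm{bias}(\partial_y p),
\end{equation*}
where $\partial_y p(x) := p(x) + p(x+y)$ is a polynomial of degree at most $d-1$. Hence, if $|\mathrm{bias}(p)| \geq \eta$, an averaging argument produces a set $Y \subseteq \set{0,1}^m$ of size at least $\eta^{2} \cdot 2^{m-1}$ on which $|\mathrm{bias}(\partial_y p)| \geq \eta^{2}/2$ for every $y \in Y$. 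I would then invoke the inductive hypothesis to say that, for a random degree-$(d-1)$ polynomial $q$, the event $|\mathrm{bias}(q)| \geq \eta^{2}/2$ is itself very rare, provided we set $\eta \geq 2^{-c_1 m/d}$ with $c_1$ shrunk appropriately relative to the induction at level $d-1$.

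\emph{Main obstacle.} The derivatives $\set{\partial_y p}_{y}$ are all determined by the single polynomial $p$, so they are highly correlated and cannot be treated as independent samples; naively applying the inductive tail bound $|Y|$ times is not valid. To circumvent this, I would split $p = p^{=d} + p^{<d}$ into its top-degree homogeneous part and the lower-order part and condition on $p^{<d}$. The map $y \mapsto \partial_y p^{=d}$ is $\mathbb{F}_2$-linear from $\mathbb{F}_2^m$ into the space of degree-$(d-1)$ homogeneous polynomials, and when $p^{=d}$ is drawn uniformly from its $\binom{m}{d}$-dimensional space these images are dispersed essentially uniformly, with a controlled number of collisions. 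A linear-algebraic counting argument on the choices of $p^{=d}$ consistent with ``many biased derivatives,'' coupled with the inductive bound for each fixed biased derivative as $p^{<d}$ varies, would give the desired joint rarity. The hypothesis $d \le (1-\varepsilon)m$ enters precisely here, to ensure that $\binom{m}{d}$ is a constant fraction of $\sum_{i=0}^d \binom{m}{i}$, supplying enough randomness in $p^{=d}$ to decouple the derivatives.

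\emph{Bookkeeping.} Finally, the constants $c_1, c_2$ must be tracked carefully: at each step of the induction the bias threshold is squared (passing from $p$ to $\partial_y p$), so one must start with a sufficiently mild threshold of the form $2^{-c_1 m/d}$ that, after $d$ iterations of squaring, still suffices to make each step's inductive bound applicable. The constant $c_2$ is inherited from the degree-$(d-1)$ counting bound together with the ratio $\binom{m}{d}/\sum_{i=0}^{d-1}\binom{m}{i}$, which is bounded below by a positive constant depending only on $\varepsilon$. This closes the induction and yields the claimed tail bound $2^{-c_2 \sum_{i=0}^d \binom{m}{i}}$.
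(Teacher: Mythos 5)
First, a point of reference: the paper does not prove this proposition. It is quoted from Ben-Eliezer, Hod, and Lovett~\cite{DBLP:journals/cc/Ben-EliezerHL12} and used as a black box in the proof sketch of Theorem~\ref{thm:smalld}, so your attempt has to be judged against the literature rather than against an argument in this paper. Your opening moves are correct and are indeed the standard entry point: the identity $\mathrm{bias}(p)^2=\E_y\,\mathrm{bias}(\partial_y p)$ holds, the averaging step producing a set $Y$ of density $\eta^2/2$ with $|\mathrm{bias}(\partial_y p)|\ge \eta^2/2$ for $y\in Y$ is valid, and the base case is fine.

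There is, however, a concrete gap in the bookkeeping that makes the plan unable to reach the stated bound. For a fixed nonzero $y$, the derivative $\partial_y p$ of a uniformly random degree-$d$ polynomial is a uniformly random polynomial of degree at most $d-1$ on $\mathbb{F}_2^m/\langle y\rangle\cong\mathbb{F}_2^{m-1}$, so the inductive hypothesis applies at threshold $2^{-c_1(m-1)/(d-1)}$. To invoke it you need $\eta^2/2\ge 2^{-c_1(m-1)/(d-1)}$ with $\eta=2^{-c_1m/d}$, i.e.\ roughly $2/d\le 1/(d-1)$, which fails for every $d\ge 3$. Allowing the constant to shrink with the level, as you suggest, does not help: the recursion forces $c_1^{(d)}\lesssim c_1^{(d-1)}\cdot d/(2(d-1))$, hence $c_1^{(d)}=\Theta(d/2^{d})$, and the resulting threshold is $2^{-\Theta(m/2^{d})}$ rather than $2^{-\Theta(m/d)}$. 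Each derivative squares the bias threshold (doubling the exponent) while the target exponent $c_1m/d$ relaxes only by a factor $d/(d-1)$ per level, so a level-by-level derivative induction can only prove the proposition with $2^{-c_1m/d}$ replaced by $2^{-c_1m/2^{d}}$ — nontrivial only for $d=O(\log m)$, and useless for the paper's application, which needs $d$ up to $(1-\zeta)m$. The statement also requires $c_1,c_2$ to depend only on $\varepsilon$, which your scheme cannot deliver. Separately, the step you correctly flag as the main obstacle — decoupling the correlated derivatives $\set{\partial_y p}_{y\in Y}$ via a counting argument over $p^{=d}$ — is left as a one-sentence sketch, and that counting is the entire technical content of the Ben-Eliezer--Hod--Lovett proof; nothing in the proposal certifies that its losses are affordable against the target $2^{-c_2\sum_{i=0}^{d}\binom{m}{i}}$.
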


From this form we can obtain the bound fairly directly.

\begin{proof}[Sketch of Proof of Theorem~\ref{thm:smalld}]
Fix $\varepsilon>0$ and let $0<c_1, c_2<1$ be the constants depending on
$\varepsilon$ from Proposition~\ref{prop:behl-bias}.
Let $\delta=c_2/2$ so $0<\delta<1/2$.
Let $M$ be the $2^m\times 2^n$ matrix associated with learning homogenous
polynomials of degree at most $d$ over $\mathbb{F}_2$, let $N=M^T\cdot M$ and
Setting $\kappa=2^{(1-c_1/d)m}$, by Proposition~\ref{prop:behl-bias} at
most $2^{(1-c_2)(n+1)}$ 
polynomials $p$ have entries $N_{0p}$ larger than $\kappa$.
Each such entry has value at most $2^m$ so 
$W_\kappa(N)\le 2^m\cdot 2^{(1-c_2)(n+1)}$.
by Lemma~\ref{SDP-lemma} with 
$X=\set{0,1}^n$ we have
$$OPT_{M,\delta}\le (\kappa+W_\kappa(N)\cdot 2^{(\delta-1)n})/2^m
\le 2^{-c_1 m/d}+2^{(\delta - c_2)n+1}\le 2^{-c_1 m/d}+2^{1-\delta n}$$
which is at most $2^{-c' m/d}$ for some constant $c'>0$. 
Hence $\tau_M(\delta)\le c'/d$.

Now we cannot apply Theorem~\ref{thm:mainlb} as it is, because $\tau_M(\delta)$
is not bounded away from 0 by a constant independent of $d$ since $d$ may
grow with $m$.
However, if we examine the proof of Theorem~\ref{thm:mainlb} we observe that
it still goes through with $\alpha$ and $\delta$ constant and with
$\gamma,\beta,\varepsilon,\sigma>0$ all of the form $\Theta(1/d)$ which
imply that $\eta$ is $\Theta(1/d)$.
\end{proof}

\section{Multivalued Outcomes from Tests}
\label{sec:multivalued}

We now extend the definitions to tests that can produce one of $r$ different
values rather than just 2 values.   
In this case, the learning problem can be expressed by learning matrix $M:A\times X\rightarrow \set{\omega^j\mid j\in \set{0,1,\ldots, r-1}}$ where
$\omega=e^{2\pi i/r}$ is a primitive $r$-th root of unity and each node in the
learning branching program has $r$ outedges associated with each possible
$a\in A$, one for each of the outcomes $b$.
Thus $M\in \mathbb{C}^{A\times X}$ and $M\cdot \mathbb{P}$ is a complex vector.
In this case the lower bound argument follows along very similarly to the
proof of Theorem~\ref{thm:mainlb} with a few necessary changes that we briefly
outline here:

As usual, for $z\in \mathbb{C}$, we replace absolute value by $|z|$ given
by $|z|^2=\overline z\cdot z$ where $\overline z$ is the complex conjugate of $z$.
For a vector $v\in \mathbb{C}^A$, by definition
$\|v\|^2_2=\langle v^*, v\rangle=\frac{1}{|A|} v^*\cdot v$, where
$v^*$ is the conjugate transpose of $v$. 
Using this, we can define the matrix norm $\|M\|_2$ as well as the 
2-norm amplification curve $\tau_M(\delta)$ for $\delta\in [0,1]$ as before.

Using these extended definitions, the notions of the truncated paths,
significant vertices, and high bias values are the same as before.

In the generalization of Lemma~\ref{lem:technical-v-e}, the $1/2$ in defining
$c_e$ will be replaced by $1/r$ so that $rc_e$ is very close to 1.   Also, in
the proof of Lemma~\ref{lem:vertex-edge}, the indicator function
$\mathbf{1}_{M(a,x')=b}$ is no longer equal to $1+b\cdot M(a,x')/2$ but rather
equal to
$$[1+\overline b\cdot M(a,x') + (\overline b \cdot M(a,x'))^2+\cdots (\overline b \cdot M(a,x'))^{r-1}]/r,$$
whose expectation we can bound in a similar way using the norm amplification
curve for $M$ since the powers simply rotate
these values on the unit circle.

\paragraph{Applications to learning polynomials}
The application to the degree 1 case of learning linear functions over $\mathbb{F}_p$ for prime $p$ follows directly since the amplification curve can be
bounded by the matrix norm.

For low degree polynomials over $\mathbb{F}_{2^t}$ we can also obtain a similar
lower bound from the $\mathbb{F}_2$ case, though the lower bounds do not grow
with $t$.

More generally, we can consider extending our results for $\mathbb{F}_2$
to the case of learning low degree polynomials over prime
fields $\mathbb{F}_p$ for $p>2$.   (It is natural to define $m$ to be the
number of variables $\log_p |A|$ and $n=\log_p |X|$ in this case rather than
$\log_2 |A|$ and $\log_2 |X|$ respectively.)
After applying the semidefinite relaxation in a similar way (using $M^*M$
instead of $M^T M$), we can reduce the lower bound problem to understanding
the distribution of the norms of values in each row of $M^* M$.   By
similar arguments this reduces to understanding the distributions of
$\sum_{a\in \mathbb{F}^m_p} \omega^{p(a)}$ where $\omega=e^{2\pi i/p}$.
Unfortunately, the natural extension of the bounds
of~\cite{DBLP:journals/tit/KaufmanLP12} to
Reed-Muller codes over $\mathbb{F}_p$, as shown in~\cite{bl:reed-muller-small}
are not sufficient here.  
We would need that all but a $p^{-\Theta(n)}$ fraction of
polynomials $p$ have $|\sum_{a\in \mathbb{F}^m_p} \omega^{p(a)}|$ very small
which means that almost all codewords are at most $p^{(1-\Omega(1))m}$
out of  balance between the $p$ values.  By symmetry, in the case that
$p=2$ this is equivalent to showing that only a $2^{-\Theta(n)}$ 
fraction of codewords lie
at distance at most $\frac12(1-\varepsilon)$ of the all 0's codeword for 
$\varepsilon=2^{\Omega(m)}$.
Indeed, \cite{DBLP:journals/cc/Ben-EliezerHL12} showed sharper bounds for
the deviation from balance.
In the case of larger $p$, \cite{bl:reed-muller-small} show that few codewords lie
within a $\frac{p-1}p (1-\varepsilon)$ distance of the all 0's codeword but
this is no longer enough to yield the balance we need and their bound does
not allow $\varepsilon$ to be as small as in the analysis for $p=2$
in~\cite{DBLP:journals/tit/KaufmanLP12} or
\cite{DBLP:journals/cc/Ben-EliezerHL12}.   
Bhowmick and Lovett~\cite{DBLP:journals/corr/0001L15} also analyze the
conditions under which small deviations of the sort we wish to bound 
occur, but do not provide a bound on the fraction of such occurrences.

\newpage
\bibliographystyle{plain}
\bibliography{mybib}
\newpage
\appendix
\newcommand{\Q}[1]{\mathbb{F}_2^{2}[z_1,\cdots,z_{#1}]}
\newcommand{\LL}[1]{\mathbb{F}_2^{1}[z_1,\cdots,z_{#1}]}
\newcommand{\val}{\mathbf{val}}
\newcommand{\orbvec}{\mathbf{type}}
\section{Proof of Proposition~\ref{main_counting}}

In this section we prove Proposition~\ref{main_counting}.
We already showed part 1 in Section~\ref{sec:polynomial} so
we only need to study the first row of $N$,
where each entry 
$$N_{\mathbf{0}x}=\sum_{a\in \{0,1\}^m} M(a,x)=\sum_{a\in \set{0,1}^m} (-1)^{x(a)}$$
and $x(a)=\sum_{i\le j} x_{ij} a_i a_j$.

Part 2 was essentially shown by Kasami~\cite{kasami1966weight} and generalized
by Sloane and Berlekamp~\cite{DBLP:journals/tit/SloaneB70} to give the precise
the weight distribution of $RM(2,m)$, which can be used to derive Part 3 also.
We give a direct proof of both parts using a lemma of
Dickson~\cite{dickson:book} characterizing the structure 
of quadratic forms over $\mathbb{F}_{2^n}$ that seems to be related to the
argument  used by McEliece~\cite{mceliece1967linear} to give an alternative
proof of Sloane and Berlekamp's result.

\begin{lemma}[Dickson's Lemma~\cite{dickson:book}]
For every quadratic form $q$ with coefficients in $\mathbb{F}_{2^t}$ in
variables $z_1,\ldots, z_m$,
there is an invertible linear transformation $T$ over $\mathbb{F}_{2^t}$ such
that for $z'=T\cdot z$,
there is a unique $k\le m/2$ such that precisely one of the following holds:
\begin{align*}
q&\equiv z'_1\cdot z'_2+z'_3\cdot z'_4+\cdots + z'_{2k-1} z'_{2k}+(z'_{2k+1})^2\\
\noalign{or}\\
q&\equiv z'_1\cdot z'_2+z'_3\cdot z'_4+\cdots + 
z'_{2k-1} z'_{2k}+\lambda (z'_{2k-1})^2+(z'_{2k})^2
\end{align*}
where $\lambda=0$ or $z'_{2k-1} z'_{2k}+\lambda (z'_{2k-1})^2+(z'_{2k})^2$ is 
irreducible in $\mathbb{F}_{2^t}$.
\end{lemma}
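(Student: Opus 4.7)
The plan is to prove this by peeling off hyperbolic planes via the associated alternating bilinear form, handling the radical separately, and then combining. First, set $B(x,y) := q(x+y) + q(x) + q(y)$; a direct computation shows $B$ is bilinear and symmetric, and in characteristic $2$ it is also alternating (i.e., $B(x,x) = 0$). Let $W := \mathrm{Rad}(B) = \{w \in V : B(w,v) = 0 \text{ for all } v \in V\}$, where $V = \mathbb{F}_{2^t}^m$. Since $B$ descends to a non-degenerate alternating form on $V/W$, the codimension of $W$ is even, say $2k$; this $k$ will be the integer in the statement, and its uniqueness is immediate from the invariance of $\dim W$.

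The next step is to normalize $q$ on a complement of $W$. Choose a symplectic basis $e_1, f_1, \ldots, e_k, f_k$ for such a complement with $B(e_i, f_j) = \delta_{ij}$ and all other $B$-pairings zero. Then $q$ decomposes additively as $\sum_i q_i + q|_W$, where $q_i$ is supported on $\mathrm{span}(e_i, f_i)$ and has the form $\alpha_i e_i^2 + e_i f_i + \beta_i f_i^2$. I would then show, using the standard classification of binary quadratic forms over $\mathbb{F}_{2^t}$, that a block of this shape can be brought by a determinant-preserving change of basis within the plane to pure $e'_i f'_i$ if and only if the equation $y^2 + y = \alpha_i \beta_i$ has a solution in $\mathbb{F}_{2^t}$, i.e., if and only if $\mathrm{Tr}_{\mathbb{F}_{2^t}/\mathbb{F}_2}(\alpha_i \beta_i) = 0$; otherwise the block is equivalent to $e'_i f'_i + \lambda(e'_i)^2 + (f'_i)^2$ with $\mathrm{Tr}(\lambda) = 1$ and is irreducible. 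If two blocks are both irreducible, a suitable cross-plane substitution mixing the $e_i, f_i, e_j, f_j$ merges them into one hyperbolic plus one irreducible block, so iterating leaves at most one irreducible block overall.

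For the radical, $q|_W$ satisfies $q|_W(x+y) = q|_W(x) + q|_W(y)$ (since $B$ vanishes on $W$), which combined with $q|_W(\lambda x) = \lambda^2 q|_W(x)$ makes $q|_W$ a Frobenius-semilinear map $W \to \mathbb{F}_{2^t}$. Its kernel therefore has codimension at most $1$, so after choosing an appropriate basis of $W$, $q|_W$ is supported on a single coordinate $z$ with value some $c$; because Frobenius is a bijection on $\mathbb{F}_{2^t}$, the rescaling $z \mapsto z/\sqrt{c}$ reduces $c$ to $1$, giving a single $z^2$ term (or else $q|_W \equiv 0$).

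Finally, I would combine. If a surviving irreducible block and a radical $z^2$ coexist, the substitution $z \mapsto z + \nu e$ for a $\nu \in \mathbb{F}_{2^t}$ with $\mathrm{Tr}(\nu) = 1$ shifts the trace invariant of the block, converting the irreducible block into a hyperbolic one and collapsing everything into form $1$ of the statement. The case analysis then yields: no irreducible block and no radical square gives pure hyperbolic output (form $2$ with $\lambda = 0$); one surviving irreducible block with no radical square gives form $2$ with $\mathrm{Tr}(\lambda) = 1$; any radical square, with or without an irreducible block, gives form $1$. The main obstacle is this last bookkeeping step — the precise normalization that collapses "irreducible block $+$ square" down to "hyperbolic $+$ square" — together with verifying uniqueness, which amounts to showing that the dichotomy between the two displayed forms is governed by a basis-independent invariant (the Arf invariant of $q$ together with the parity of $\dim V - \dim W$).
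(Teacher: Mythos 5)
The paper never proves this lemma: it is quoted from Dickson's book and used as a black box (only the $t=1$ specialization, where the squared terms become linear, is actually needed for Proposition~\ref{main_counting}). So your proposal must stand on its own. Its skeleton is the standard modern argument --- polarize $q$ to the alternating form $B$, split off the radical $W$ (whose codimension is even because $B$ is nondegenerate and alternating on $V/W$, giving the unique $k$), reduce the nondegenerate part to $2\times 2$ blocks $\alpha_i e_i^2+e_if_i+\beta_if_i^2$ via a symplectic basis, classify each block by the Artin--Schreier/trace criterion, merge anisotropic blocks, and normalize $q|_W$ using that it is additive and Frobenius-semilinear into a one-dimensional target, hence supported on a single coordinate after a change of basis. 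That skeleton is sound, and the final substitution $z\mapsto z+\nu e$ with $\mathrm{Tr}(\nu)=1$ correctly shifts the block's Arf class by $\nu^2$ (which has trace $1$), collapsing ``irreducible block plus square'' to ``hyperbolic plus square.''

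Two points need repair. First, two irreducible blocks do \emph{not} merge into ``one hyperbolic plus one irreducible block'': the orthogonal sum of two anisotropic binary forms over $\mathbb{F}_{2^t}$ is isotropic (any quadratic form in at least three variables over a finite field is), and since Arf classes add in $\mathbb{F}_{2^t}/\wp(\mathbb{F}_{2^t})\cong\mathbb{Z}/2$ and each anisotropic block carries the nontrivial class, the resulting $4$-dimensional form is two hyperbolic planes. Your conclusion ``at most one irreducible block survives'' still holds, but the merging rule as you state it would wrongly force exactly one survivor whenever you start with at least one. Second, the invariant you propose for the dichotomy between the two displayed forms --- ``the parity of $\dim V-\dim W$'' --- cannot work, since $\dim V-\dim W=2k$ is even in both cases. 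The correct basis-independent criterion is whether $q$ vanishes identically on $W=\mathrm{Rad}(B)$: it does in the second normal form and does not in the first. That observation, together with $k$ being half the codimension of $W$ and (in the second case) the Arf class of the induced nondegenerate form on $V/W$, closes the uniqueness step you flagged as open.
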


In the case that $t=1$, the squared terms in Dickson's Lemma become linear 
terms so the degree 2 parts can be assumed to be
$z'_1\cdot z'_2+z'_3\cdot z'_4+\cdots + z'_{2k-1} z'_{2k}$ for some $k\le m/2$.

\begin{defn}
Write $\mathcal{Q}_m:=\Q{m}$ to denote the set of all pure quadratic forms
$q$ over $\mathbb{F}_{2}$ with $m$ variables given by $\sum_{i<j} q_{ij} z_i z_j$.
Write $\mathcal{L}_m:=\LL{m}$ to denote the set of all linear polynomials $\ell$
over $\mathbb{F}_{2^t}$ with $m$ variables given by $\sum_i \ell_i z_i$.
\end{defn}

We can write every homogeneous quadratic polynomial $p$ over $\mathbb{F}$
uniquely as $q+\ell$ for $q\in \mathcal{Q}_m$ and $\ell\in \mathcal{L}_m$.
For any $p=q+\ell$ write 
$$\val(p)=(-1)^{p(a)}=(-1)^{\sum_{a\in \set{0,1}^m} p_{ij} a_i a_j}.$$
We will study the distribution of $\val(p)$ over all quadratic polynomials $p$
by partitioning the set of polynomials based on their purely quadratic part $q$.

For $q\in \mathcal{Q}_m$ we say that the \emph{type} of $q$, $\orbvec(q)$,
is the multiset of $2^m$ values given by $\val(q+\ell)$ over all
$\ell\in \mathcal{L}_m$.  We represent this as type as a set of pairs
$(v:j)$ where $j=\#\set{\ell\in \mathcal{L}_m\mid \val(q+\ell)=v}$.

\begin{lemma}
\label{lem:basic-orbit}
If $q=z_1 z_2+\ldots +z_{2k-1}z_{2k}$ then 
$\orbvec(q)=\set{(2^{m-k}\ :\ 2^{2k-1}+2^{k-1}),(0\ :\ 2^m-2^{2k}), (-2^{m-k}\ :\ 2^{2k-1}-2^{k-1})}$.
\end{lemma}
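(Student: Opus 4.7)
The plan is to exploit the fact that $q$ separates into $k$ independent pairs of variables and that the remaining $m-2k$ variables only appear linearly, so the sum over $a\in\{0,1\}^m$ factors cleanly. Concretely, for each $\ell\in\mathcal{L}_m$ write $\ell=\ell'+\ell''$ where $\ell'=\sum_{i=1}^{2k}\ell_i z_i$ and $\ell''=\sum_{i=2k+1}^{m}\ell_i z_i$, and split $a=(a',a'')$ accordingly. Since $q(a)$ depends only on $a'$, we get
\[
\val(q+\ell)=\Bigl(\sum_{a'\in\{0,1\}^{2k}}(-1)^{q(a')+\ell'(a')}\Bigr)\cdot\Bigl(\sum_{a''\in\{0,1\}^{m-2k}}(-1)^{\ell''(a'')}\Bigr).
\]

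The second factor equals $2^{m-2k}$ when $\ell''=0$ and $0$ otherwise. This immediately accounts for the zero values: there are $(2^{m-2k}-1)\cdot 2^{2k}=2^m-2^{2k}$ linear forms with $\ell''\ne 0$, contributing $(0\,:\,2^m-2^{2k})$ to $\orbvec(q)$.

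Next I would compute the first factor when $\ell''=0$. Because $q(a')=\sum_{i=1}^k a_{2i-1}a_{2i}$ and $\ell'(a')=\sum_{i=1}^k(\ell_{2i-1}a_{2i-1}+\ell_{2i}a_{2i})$, the sum over $a'$ factors as $\prod_{i=1}^k S_i$, where
\[
S_i=\sum_{a_{2i-1},a_{2i}\in\{0,1\}}(-1)^{a_{2i-1}a_{2i}+\ell_{2i-1}a_{2i-1}+\ell_{2i}a_{2i}}.
\]
A direct four-term evaluation shows $S_i=2$ if $(\ell_{2i-1},\ell_{2i})\ne(1,1)$ and $S_i=-2$ if $(\ell_{2i-1},\ell_{2i})=(1,1)$. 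Therefore $\val(q+\ell)=2^{m-2k}\cdot 2^k\cdot(-1)^s=2^{m-k}(-1)^s$, where $s=\#\{i\in[k]:\ell_{2i-1}=\ell_{2i}=1\}$.

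Finally I would count linear forms by the parity of $s$. Each pair $(\ell_{2i-1},\ell_{2i})$ independently has weight $3$ for an ``even'' contribution and weight $1$ for an ``odd'' contribution, so the generating identity $(3\pm 1)^k$ gives
\[
\#\{\ell':s\text{ even}\}=\tfrac{4^k+2^k}{2}=2^{2k-1}+2^{k-1},\qquad\#\{\ell':s\text{ odd}\}=2^{2k-1}-2^{k-1},
\]
yielding the remaining two multiplicities in $\orbvec(q)$. The only genuinely computational step is the four-term evaluation of $S_i$; everything else is a bookkeeping factorization, so I do not anticipate any real obstacle.
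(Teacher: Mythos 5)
Your proposal is correct and is essentially the same argument as the paper's: both exploit the decomposition of $q+\ell$ into the $k$ independent variable pairs (plus the unpaired variables, which force $\val=0$ unless $\ell''=0$), determine that each pair contributes a factor of $-2$ exactly when $\ell$ restricted to it is $z_{2i-1}+z_{2i}$ and $+2$ otherwise, and then count by the same $3$-versus-$1$ parity bookkeeping giving an excess of $(3-1)^k=2^k$. Your direct four-term evaluation of $S_i$ is just a cleaner packaging of the paper's base case $k=1$ plus its appeal to multiplicativity of the fractional discrepancy over independent functions.
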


\begin{proof}
First observe that if the linear term $\ell$ has a non-zero coefficient of 
any $z_j$ for $j>2k$ then $\val(q+\ell)=0$.   Therefore there are only $2^{2k}$
of the $2^m$ linear functions $\ell$ such that $\val(q+\ell)$ can be non-zero.

Consider first the case that $k=1$.   $z_1 z_2$ is 1 on precisely 1/4 of the
inputs and so $\val(z_1 z_2)=2^m(3/4-1/4)=2^{m-1}$. 
Observe that $z_1 z_2 + z_1=z_1(z_2+1)$ and
$z_1z_2+z_2=(z_1+1)z_2$ will be equivalent under a 1-1 mapping of the space
of inputs  and also have $\val=2^{m-1}$.
Finally observe that $z_1 z_2 + z_1+z_2=(z_1+1)(z_2+1)+1$ and hence
$\val(z_1 z_2 + z_1+z_2)=-2^{m-1}$ since the final $+1$ flips the signs after
a 1-1 mapping of the inputs. 
This yields 3 values of $2^{m-1}$ and 1 value of $-2^{m-1}$ as required.

For larger $k$, observe that $\val/2^m$ is fractional discrepancy on $\set{0,1}^m$ which is therefore multiplicative over the sums of independent functions.
Therefore $\val(q)=2^{m-k}$.  
Furthermore, for $\ell$ supported on $\set{z_1,\ldots,z_{2k}}$,
we have $\val(q+\ell)=2^{m-k}$ if there are an even number of $i$ such that
$\ell$ contains $z_{2i-1}+z_{2i}$ and $\val(q+\ell)=-2^{m-k}$ if there are an
odd number of such $i$.  Since there is 3 choices per value of $i$ where
$\ell$ does not contain $z_{2i-1}+z_{2i}$ for every choice that does, we
can write the number of $\ell$ such that $\val(q+\ell)=2^{m-k}$ minus the number
of $\ell$ such that $\val(q+\ell)=-2^{m-k}$ as
$\sum_{i=0}^k (-1)^i 3^{k-1}$ which equals $2^k$.
This yields the claim.
\end{proof}

The following lemma follows immediately from Lemma~\ref{lem:basic-orbit}
and Dickson's Lemma.

\begin{lemma}
\label{orbit-type}
For every $q\in\mathcal{Q}_m$ there is some integer $k$ with $0\le k\le m/2$ such
that
$\orbvec(q)=\set{(2^{m-k}\ :\ 2^{2k-1}+2^{k-1}),(0\ :\ 2^m-2^{2k}), (-2^{m-k}\ :\ 2^{2k-1}-2^{k-1})}$.
\end{lemma}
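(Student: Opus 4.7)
The plan is to reduce the general case to the computed case in Lemma~\ref{lem:basic-orbit} by using Dickson's Lemma to put $q$ into a canonical form, then showing that $\orbvec$ is invariant under invertible linear substitutions in the variables (modulo absorption of the squared terms into the linear part of each $q+\ell$).

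First I would apply Dickson's Lemma to $q\in\mathcal{Q}_m$. Although the lemma produces canonical forms that may include squared terms $(z'_i)^2$, the remark immediately following it notes that over $\mathbb{F}_2$ these squared terms agree as functions with the corresponding linear terms $z'_i$. Hence there is an invertible linear map $T$ on $\mathbb{F}_2^m$ and an integer $0\le k\le m/2$ such that, as functions on $\{0,1\}^m$,
\[
q(Tz') \;=\; q_0(z') \,+\, \ell_q(z')
\qquad\text{where } q_0(z') = z'_1 z'_2 + z'_3 z'_4 + \cdots + z'_{2k-1} z'_{2k},
\]
and $\ell_q\in\mathcal{L}_m$ absorbs whatever linear contribution arises from the squared terms in the canonical form.

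Next I would argue that $\orbvec(q)=\orbvec(q_0)$. The definition of $\val$ is $\val(p)=\sum_{a\in\{0,1\}^m}(-1)^{p(a)}$, and since $T$ is a bijection on $\mathbb{F}_2^m$ the substitution $a=Tz'$ gives $\val(p)=\val(p\circ T)$ for every polynomial $p$. Applying this to $p=q+\ell$ for $\ell\in\mathcal{L}_m$ yields
\[
\val(q+\ell) \;=\; \val\bigl(q_0 + \ell_q + \ell\circ T\bigr).
\]
The map $\ell\mapsto \ell_q + \ell\circ T$ is a bijection on $\mathcal{L}_m$, because $\ell\mapsto \ell\circ T$ is a bijection (the pullback by an invertible linear map sends $\mathcal{L}_m$ to itself) and translation by the fixed element $\ell_q$ is also a bijection. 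Therefore the multiset $\{\val(q+\ell):\ell\in\mathcal{L}_m\}$ equals the multiset $\{\val(q_0+\ell'):\ell'\in\mathcal{L}_m\}$, i.e.\ $\orbvec(q)=\orbvec(q_0)$.

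Finally, Lemma~\ref{lem:basic-orbit} computes $\orbvec(q_0)$ exactly as the multiset in the statement, completing the proof. The only subtlety, which is really the main thing to track, is to make sure that when one moves the squared terms from Dickson's canonical form into the linear part, one truly gets a bijection of $\mathcal{L}_m$ onto itself; both $\ell\mapsto \ell\circ T$ and the shift by $\ell_q$ are clearly bijections, so this is a routine verification rather than a genuine obstacle.
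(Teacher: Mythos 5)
Your proof is correct and follows essentially the same route as the paper: apply Dickson's Lemma to put $q$ in canonical form, observe that the invertible change of variables (together with the shift by the linear part absorbed from the squared terms) permutes $\mathcal{L}_m$ and hence preserves $\orbvec$, and then invoke Lemma~\ref{lem:basic-orbit}. In fact you spell out the bijection $\ell\mapsto \ell_q+\ell\circ T$ more carefully than the paper, which compresses this step into the single remark that invertibility of $T$ preserves $\val$ and hence $\orbvec$.
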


\begin{proof}
By Dickson's Lemma, there is an invertible linear transformation $T$ that
maps any element $q\in \mathcal{Q}_m$ to a polynomial whose quadratic part $q'$
is of the form $z'_1 z'_2+\cdots+ z'_{2k-1}z'_{2k}$. Since the transformation
$T$ is invertible it preserves the $\val$ function and hence it preserves
$\orbvec$.  Using Lemma~\ref{lem:basic-orbit} we obtained the claimed result.
\end{proof}

This proves the Part 2 of Proposition~\ref{main_counting}.
Then the following lemma will complete the proof of Proposition~\ref{main_counting}.

\begin{lemma}
For $0\le i\le m/2$, 
let $c_i(m)$ denote the number of $q\in \mathcal{Q}_m$ such that
$\orbvec(q)=\set{(2^{m-i}\ :\ 2^{2i-1}+2^{i-1}),(0\ :\ 2^m-2^{2i}), (-2^{m-i}\ :\ 2^{2i-1}-2^{i-1})}$.
Then
\[
c_i(m)=\frac{\prod_{j=0}^{2i-1}(2^m-2^j)}{\prod_{j=1}^i2^{2j-1}(2^{2j}-1)}
\]
\end{lemma}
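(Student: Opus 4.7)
The plan is to recast the count as an orbit–stabilizer computation for the natural $GL_m(\mathbb{F}_2)$-action on alternating bilinear forms. First I would observe that the symmetrization map $\phi \colon \mathcal{Q}_m \to \mathrm{Alt}_m(\mathbb{F}_2)$, defined by $\phi(q)(x,y) = q(x+y)+q(x)+q(y)$, is a bijection: for $q=\sum_{i<j} q_{ij} z_i z_j$ the bilinear form $\phi(q)$ is exactly the alternating $m\times m$ matrix with off-diagonal entries $q_{ij}$ and zero diagonal, which is clearly invertible from $q$'s coefficients. Under the action $T\cdot q := [\text{pure quadratic part of }q(Tz)]$ used implicitly in Lemma~\ref{orbit-type}, the map $\phi$ is equivariant for the pullback action $T\cdot B := B(T\,\cdot,\,T\,\cdot)$ on $\mathrm{Alt}_m$, because the linear residue $\ell(z)$ picked up from $z_i^2=z_i$ satisfies $\phi(\ell)\equiv 0$.

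Next I would use Lemma~\ref{orbit-type} together with the fact that $\phi(q_i)$ for $q_i := z_1 z_2 + \cdots + z_{2i-1}z_{2i}$ is a block-diagonal symplectic form of rank exactly $2i$ to conclude that the $\orbvec$-type of $q$ depends only on the rank of $\phi(q)$, and that all $q$ with type $i$ lie in a single $GL_m(\mathbb{F}_2)$-orbit. (Transitivity of $GL_m(\mathbb{F}_2)$ on rank-$2i$ alternating forms is the classical symplectic normal form, and it also follows from the Dickson argument already in the paper.) Hence $c_i(m) = |GL_m(\mathbb{F}_2)|/|\mathrm{Stab}(\phi(q_i))|$.

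To finish, I would compute the stabilizer: writing $T$ in the block form $\begin{pmatrix} A & B\\ C & D\end{pmatrix}$ with $A$ of size $2i\times 2i$ and $D$ of size $(m-2i)\times(m-2i)$, the equation $T^{\top} \phi(q_i) T = \phi(q_i)$ becomes $A^{\top} J A = J$, $B = 0$, with $D$ and $C$ otherwise unconstrained subject to $T$ invertible. Thus $|\mathrm{Stab}(\phi(q_i))| = |Sp_{2i}(\mathbb{F}_2)|\cdot 2^{2i(m-2i)}\cdot |GL_{m-2i}(\mathbb{F}_2)|$. Using the standard identity $|GL_m(\mathbb{F}_2)|/(|GL_{m-2i}(\mathbb{F}_2)|\cdot 2^{2i(m-2i)}) = \prod_{j=0}^{2i-1}(2^m - 2^j)$ (the number of ordered $2i$-tuples of independent vectors in $\mathbb{F}_2^m$) together with $|Sp_{2i}(\mathbb{F}_2)| = 2^{i^2}\prod_{j=1}^{i}(2^{2j}-1) = \prod_{j=1}^{i} 2^{2j-1}(2^{2j}-1)$ yields exactly the claimed formula.

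The main obstacle is the equivariance check in Step~1: over $\mathbb{F}_2$ a change of variables $z\mapsto Tz$ genuinely shuffles linear terms into the pure quadratic part and back (because $z_i^2 = z_i$), so one needs to verify carefully that the linear residue $\ell(z)$ contributes nothing to $\phi$ and that the $GL_m$-action on $\mathcal{Q}_m$ is indeed well-defined and matches the pullback action on $\mathrm{Alt}_m$. Once this is in hand, the rest is a bookkeeping calculation, and the group-theoretic count cleanly produces the same formula obtained by Sloane–Berlekamp via direct enumeration of $RM(2,m)$ codewords.
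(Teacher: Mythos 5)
Your proof is correct, but it takes a genuinely different route from the paper's. The paper proceeds by induction on the number of variables: it writes $q=q'+\ell' z_{m+1}$, splits the cube according to the value of $z_{m+1}$ so that $\val(q+\ell)=\val(q'+\ell'')\pm\val(q'+\ell'+\ell'')$, performs a three-case analysis of how the two halves can combine to give $|\val|=2^{m+1-i}$, and then solves the resulting recurrence $2^{2i}c_i(m+1)=2^{4i}c_i(m)+4\cdot 2^{2(i-1)}(2^m-2^{2(i-1)})c_{i-1}(m)$ by direct algebra. You instead identify $c_i(m)$ with the number of rank-$2i$ alternating forms via the polarization bijection $q\mapsto\phi(q)$ and apply orbit--stabilizer to the $GL_m(\F_2)$-action. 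Your equivariance concern is easily dispatched: $\phi$ depends only on $q$ as a function, $\phi$ annihilates linear (and constant) terms, and $\phi(q\circ T)(x,y)=\phi(q)(Tx,Ty)$, so the linear residue created by $z_i^2=z_i$ is invisible to $\phi$; your stabilizer computation ($A\in Sp_{2i}$, $B=0$, $C$ free, $D\in GL_{m-2i}$) and the two product identities are also correct. What your approach buys is conceptual transparency --- it explains the shape of the formula as $|GL_m(\F_2)|/|\mathrm{Stab}|$ and connects the count to the classical symplectic normal form, of which Dickson's Lemma (already used in the paper for part~2) is the $\F_2$ instance. What it costs is the reliance on the order formula $|Sp_{2i}(\F_2)|=2^{i^2}\prod_{j=1}^i(2^{2j}-1)$, which, if one insists on proving it from scratch, requires an induction of roughly the same weight as the paper's; the paper's argument is more computational but entirely self-contained and simultaneously re-derives the type multiplicities it needs.
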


\begin{proof}
By induction.  When $m=0$, there is only one type, and $c_0(0)=1$.

Assume we that we have proved the claim for $m$.
Let us consider the case for $m+1$.
Lemma~\ref{orbit-type} says all $p=q+\ell$ for $q\in \mathcal{Q}_{m+1}$ and
$\ell\in \mathcal{L}_{m+1}$
such that $|\val(p)|=2^{m+1-i}$ have
$\orbvec(q)=\set{(2^{m+1-i}\ :\ 2^{2i-1}+2^{i-1}),(0\ :\ 2^{m+1}-2^{2i}), (-2^{m+1-i}\ :\ 2^{2i-1}-2^{i-1})}$.
So, the total count of quadratic polynomials $p$ on $m+1$ variables
with $|\val(p)|=2^{m+1-i}$ is precisely $2^{2i}\cdot c_{i}(m+1)$.

For $q\in \mathcal{Q}_{m+1}$ we can uniquely write $q=q'+\ell' x_{m+1}$
where $q'\in \mathcal{Q}_m$ and $\ell'\in \mathcal{L}_m$ and write $\ell=\ell''+b x_{m+1}$ where $\ell''\in \mathcal{L}_m$ and $b\in \set{0,1}$.
We can determine $\val(p)$ for $p=q+\ell$ by splitting the space of assignments into two equal parts depending on the value assigned to $x_{m+1}$.
Therefore 
$$\val(p)=\val(q+\ell)=\val(q'+\ell'')+(-1)^b\cdot \val(q'+\ell'+\ell'').$$
Thus, by the inductive hypothesis, the only way that this can yield
$|\val(p)|=2^{m+1-i}$ is if one of the following 3 cases holds:

\medskip\noindent{\sc Case 1},
$|\val(q'+\ell'')|=2^{m-i}$ and $\val(q'+\ell'+\ell'')=(-1)^b \val(q'+\ell'')$:\\
There are precisely $c_i(m)$ choices of such a $q'$ and for each $q$ there
$2^{2i}$ choices of an $\ell''$ such that $|\val(q'+\ell'')|=2^{m-i}$. 
For each such choice there
will be $2^{2i}$ choices of $\ell'$ such that $|\val(q'+\ell'+\ell')|=2^{m-i}$
and then only one choice of $b$ that will yield equal signs so that
$|\val(p)|=2^{m+1-i}$.

\medskip\noindent{\sc Case 2},
$|\val(q'+\ell'')|=2^{m+1-i}$ and $\val(q'+\ell'+\ell'')=0$:\\
There are precisely $c_{i-1}(m)$ choices of $q'$ and $2^{2(i-1)}$ choices
of $\ell''$ so that $|\val(q'+\ell'')|=2^{m+1-i}$.   For each such choice
there will be $2^m-2^{2(i-1)}$ choices of $\ell'$ such that
$\val(q'+\ell'+\ell'')=0$.

\medskip\noindent{\sc Case 3},
$\val(q'+\ell'')=0$ and $|\val(q'+\ell'+\ell'')|=2^{m+1-i}$:\\
This is symmetrical to the previous case and has the same number of choices.

\noindent
So, by induction hypothesis,
we have
\[
2^{2i}\cdot c_{i}(m+1)=2^{2i}\cdot 2^{2i}\cdot c_i(m)+2\cdot 2\cdot 2^{2(i-1)}\cdot (2^m-2^{2(i-1)})\cdot c_{i-1}(m)
\]
Now we can plug in the induction hypothesis to get
\begin{align*}
c_{i}(m+1)&=2^{2i}\cdot c_i(m)+ (2^m-2^{2(i-1)})\cdot c_{i-1}(m)\\
&=2^{2i}\cdot\frac{\prod_{j=0}^{2i-1}(2^m-2^j)}{\prod_{j=1}^i2^{2j-1}(2^{2j}-1)}+ (2^m-2^{2(i-1)})\cdot\frac{\prod_{j=0}^{2i-3}(2^m-2^j)}{\prod_{j=1}^{i-1}2^{2j-1}(2^{2j}-1)}\\
&=\frac{\prod_{j=0}^{2i-3}(2^m-2^j)}{\prod_{j=1}^i2^{2j-1}(2^{2j}-1)}(2^{2i}(2^m-2^{2i-2})(2^m-2^{2i-1})+(2^{m}-2^{2i-2})2^{2i-1}(2^{2i}-1))\\
&=\frac{\prod_{j=0}^{2i-3}(2^m-2^j)}{\prod_{j=1}^i2^{2j-1}(2^{2j}-1)}(2^{2i-1}(2^m-2^{2i-2})(2^{m+1}-1))\\
&=\frac{\prod_{j=0}^{2i-1}(2^{m+1}-2^j)}{\prod_{j=1}^i2^{2j-1}(2^{2j}-1)}
\end{align*}
as required.
\end{proof}

\end{document}